\newtheorem{thm}{Theorem}
\newtheorem{prop}{Proposition}
\newtheorem{lemma}[prop]{Lemma}
\newtheorem{remark}{Remark}
\theoremstyle{definition}
\newcommand{\R}{\mathbb{R}}
\newcommand{\E}{\mathbb{E}}
\newcommand{\V}{\mathbb{V}}
\newcommand{\proba}{\mathbb{P}}
\newcommand{\argmin}[1]{\underset{#1}{\textup{argmin}}\xspace}
\newcommand{\mdp}{\mathcal{M}}
\newcommand{\mdpb}{\mathcal{M'}}
\newcommand{\mdpt}{\widehat{\mathcal{M}}_t}
\newcommand{\KL}{\textup{KL}_{\mdp|\mdpb}}
\newcommand{\thetat}{\hat{\theta}_t}
\newcommand{\mut}{\hat{\mu}_t}
\newcommand{\Vt}{\widehat{V}_t}
\newcommand{\Qt}{\widehat{Q}_t}
\newcommand{\thetath}{\hat{\theta}_{t,h}}
\newcommand{\muth}{\hat{\mu}_{t,h}}
\newcommand{\VtH}[1]{\widehat{V}_{t,#1}}
\newcommand{\QtH}[1]{\widehat{Q}_{t,#1}}
\newcommand{\diffMDPb}[1]{\theta_\mdp-\theta_\mdpb + \gamma(\mu_\mdp-\mu_\mdpb)^\top{#1}}
\newcommand{\distMDPb}[1]{\left\|\diffMDPb{#1}\right\|}
\newcommand{\diffMDPt}[1]{\thetat-\theta_\mdp + \gamma(\mut-\mu_\mdp)^\top{#1}}
\newcommand{\distMDPt}[1]{\left\|\diffMDPt{#1}\right\|}
\newcommand{\diffMDPbH}[1]{\theta_{\mdp,h}-\theta_{\mdpb,h} + (\mu_{\mdp,h}-\mu_{\mdpb,h})^\top{#1}}
\newcommand{\distMDPbH}[1]{\left\|\diffMDPbH{#1}\right\|}
\newcommand{\diffMDPtH}[1]{\hat{\theta}_{t,h}-\theta_{\mdp,h} + (\hat{\mu}_{t,h}-\mu_{\mdp,h})^\top{#1}}
\newcommand{\distMDPtH}[1]{\left\|\diffMDPtH{#1}\right\|}
\newcommand{\cA}{\mathcal{A}}
\newcommand{\cC}{\mathcal{C}}
\newcommand{\cF}{\mathcal{F}}
\newcommand{\cL}{\mathcal{L}}
\newcommand{\cM}{\mathcal{M}}
\newcommand{\cN}{\mathcal{N}}
\newcommand{\cS}{\mathcal{S}}
\newcommand{\cV}{\mathcal{V}}
\newcommand{\cX}{\mathcal{X}}
\newcommand{\Alt}{\mathrm{Alt}}
\newcommand{\EE}{\mathbb{E}}
\newcommand{\PP}{\mathbb{P}}
\newcommand{\RR}{\mathbb{R}}
\begin{document}

\title{Best Policy Identification in Linear MDPs}
%\author{Jérôme Taupin, , }
%

\author[1,2]{Jérôme Taupin\thanks{jerome.taupin@ens.psl.eu}}
\author[1]{Yassir Jedra\thanks{jedra@kth.se}}
\author[1]{Alexandre Proutière\thanks{alepro@kth.se}}
\affil[1]{KTH Royal Institute of Technology}
\affil[2]{ENS}

\date{}

\maketitle

% Main part
\begin{abstract}
We investigate the problem of best policy identification in discounted linear Markov Decision Processes in the fixed confidence setting under a generative model. We first derive an instance-specific lower bound on the expected number of samples required to identify an $\varepsilon$-optimal policy with probability $1-\delta$. The lower bound characterizes the optimal sampling rule as the solution of an intricate non-convex optimization program, but can be used as the starting point to devise simple and near-optimal sampling rules and algorithms. We devise such algorithms. One of these exhibits a sample complexity upper bounded by ${\cal O}({\frac{d}{(\varepsilon+\Delta)^2}} (\log(\frac{1}{\delta})+d))$ where $\Delta$ denotes the minimum reward gap of sub-optimal actions and $d$ is the dimension of the feature space. This upper bound holds in the moderate-confidence regime (i.e., for all $\delta$), and matches existing minimax and gap-dependent lower bounds. We extend our algorithm to episodic linear MDPs.            
\end{abstract}

\section{Introduction}

In Reinforcement Learning (RL), an agent interacts with an unknown controlled stochastic dynamical system, with the objective of identifying as quickly as possible an approximately optimal control policy. In this paper, we consider dynamical systems modelled through discounted or episodic Markov Decision Processes (MDPs), and investigate the problem of best policy identification in the {\it fixed confidence} setting. More precisely, we aim at devising $(\varepsilon,\delta)$-PAC RL algorithms, i.e., algorithms identifying $\varepsilon$-optimal policies with a level of certainty greater than $1-\delta$, using as few samples as possible. Such a learning objective has been considered extensively in tabular MDPs both in the discounted and episodic settings, most often using a minimax approach, see e.g. \cite{Kearns1999,kakade2003sample,even2006action,azar2013minimax,NIPS2018_7765,pmlr-v125-agarwal20b,li2020breaking,MinimaxOptimalDiscounted,SampleComplexityEpisodicFixedHorizon,EpisodicRLMinimaxRevisited} and more recently adopting an instance-specific analysis \cite{AdaptiveSamplingBestPolicyIdentification,NavigatingBestPolicy}. According to the aforementioned work, in tabular MDPs, the minimal sample complexity for identifying an $\varepsilon$-optimal policy with probability at least $1-\delta$ scales as ${SA\over \varepsilon^2}\log(1/\delta)$ (ignoring the dependence in the time-horizon or discount factor), where $S$ and $A$ represent the sizes of the state and action spaces respectively. These results illustrate the curse of dimensionality (tabular RL algorithms can address very small problems only), and highlight the need for the use of function approximation towards the design of scalable RL algorithms. 

\medskip
Despite the empirical successes of RL algorithms leveraging function approximation, and more specifically of deep RL algorithms, our theoretical understanding of these methods remain limited. In this paper, we investigate the {\it linear} MDPs, where linear functions are used to approximate the system dynamics and rewards. We propose computationally simple algorithms solving the best policy identification problem in the fixed confidence setting, and analyze their sample complexity. More precisely our contributions are as follows. 

\begin{enumerate}
\item For linear MDPs with discount factor $\gamma$, we first derive instance-specific sample complexity lower bounds satisfied by any $(\varepsilon,\delta)$-PAC algorithm. Inspired by these lower bounds, we develop GSS (G-Sampling-and-Stop), an $(\varepsilon,\delta)$-PAC algorithm that blends G-optimal design method and Least-Squares estimators. In the generative model (when in each round, the algorithm can sample a transition and reward from any (state, action) pair), we show that the expected sample complexity of GSS scales at most as $\frac{d(1-\gamma)^{-4}}{(\Delta(\mdp)+\varepsilon)^2} (\log(\frac{1}{\delta}) + d)$ (up to logarithmic factors), where $\Delta(\mdp)$ is an appropriately defined instance-specific sub-optimality gap that depends on the MPD $\mdp$. 

\item For linear MDPs with finite time horizon $H$, we apply the same approach as that used for discounted MDPs. We derive instance-specific sample complexity lower bounds, and based on these bounds, extend the design of GSS to the episodic setting. The analysis of GSS reveals that its expected sample complexity scales at most as $\frac{d H^4}{(\Delta(\mdp)+\varepsilon)^2} (\log(\frac{1}{\delta}) + d)$ (up to logarithmic factors).
\end{enumerate}

\medskip
The paper is organized as follows. We start with a literature review in the next section. Sections 3, 4, and 5 are devoted to discounted linear MDPs in the generative model.  Finally in Section 6, we extend our results to episodic linear MDPs.

\section{Related Work}

Linear models in RL have attracted a lot of attention over the last few years. We distinguish episodic and discounted MDPs. 

\medskip
{\bf Episodic linear MDPs.} Most of the studies have aimed at devising algorithms minimizing regret. Jin et al. \cite{jin2020provably} propose an optimistic Least Squares Value Iteration (LSVI) algorithm that achieves a regret upper bound of order $\tilde{\mathcal{O}}(\sqrt{d^3H^3 T})$ and that can be implemented in  polynomial time. He et al. \cite{he2021logarithmic} present UCRL-VTR, a confidence based algorithm adapted to the linear MDP setting. The algorithm achieves a gap dependent regret of order $\tilde{\mathcal{O}} ( \frac{d^2H^5}{\Delta_{\min}}\log\left(\frac{T}{\delta}\right)^3)$. 

When it comes to best policy identification problems, researchers have used different approaches. In \cite{wagenmaker2022reward}, Wagenmaker et al. aim at identifying an $\epsilon$-optimal policy identification objective. They establish a sample complexity minimax lower bound of order $\Omega( \frac{d^2 H^2}{\epsilon^2})$, and propose an a reward-free algorithms with sample complexity of order $\tilde{\mathcal{O}}(\frac{d(\log(1/\delta) + d)H^5}{\epsilon^2})$. 

In a subsequent work, Wagenmaker et al. \cite{wagenmaker2022instance} introduce PEDEL, an elimination based algorithm with instance-specific sample complexity guarantees. In the worst case, the sample complexity upper bound scales as $\tilde{\mathcal{O}}(\frac{d H^5 (d H^2 + \log(1/\delta))}{\epsilon^2})$. This bound hides a dependence on $\lambda_{\min}^\star$, the maximal minimum eigenvalue of the covariates matrix that can be induced by a policy. As in our work, the derived instance-specific sample complexity guarantees are related to G-optimal design and take the following form: 
$$
C_ 0 H^4 \sum_{h=1}^H \inf_{\Lambda_{exp}} \max_{\pi \in \Pi} \frac{ \Vert \phi_{\pi,h} \Vert_{\Lambda_{exp}^{-1}}}{\max \{ V^\star(\Pi) - V^\pi, \Delta_{\min}(\Pi), \epsilon^2  \}} \log\left(\frac{\vert \Pi\vert}{\delta}\right) +  C_1, 
$$
where $C_0 =   \log\left(\frac{1}{\epsilon}\right) \mathrm{polylog}\left(H, \log(1/\epsilon) \right)$ and $C_1= \mathrm{poly}\left(d, H, \frac{1}{\lambda_{\min}^\star}, \log(1/\delta), \log(1/\varepsilon), \log(\vert \Pi\vert )  \right)$. Note that PEDEL requires as input a set of policies $\Pi$. The authors propose a way to approximate the set of all policies using restricted linear soft-max policies $\Pi_\epsilon$ which leads to an overall sample complexity of order 
$$
C_0 H^4 \sum_{h=1}^H \inf_{\Lambda_{exp}} \max_{\pi \in \Pi_\epsilon} \frac{ \Vert \phi_{\pi,h} \Vert_{\Lambda_{exp}^{-1}}}{\max \{ V^\star - V^\pi, \epsilon^2  \}} (dH^2 + \log(\frac{1}{\delta})) +  C_1.
$$

In Zanette et al. \cite{zanette2019limiting}, the authors also investigate the problem of identifying an $\epsilon$-optimal policy with a generative model and propose a Linear Approximate Value Iteration algorithm (LAVI). They leverage the idea of anchor (state, action) pairs but require a set of such anchor pairs for each layer $h \in [H]$. 

\medskip
{\bf Discounted linear MDPs.} In \cite{yang2019sample}, Yang et al. focus on the $\epsilon$-optimal policy identification problem in the generative setting and present Phased Parametric Q-Learning (PPQ-learning), an algorithm with sample complexity of order $\tilde{\mathcal{O}}(\frac{d}{(1-\gamma)^3 \epsilon^2 } \log(\frac{1}{\delta}))$ under the restrictive assumption that a so-called set of (state, action) anchor pairs exist (see Assumption 2) and that it is of size $d$. More precisely, this assumption states that there exists $\mathcal{K} \subset \mathcal{S} \times \mathcal{A}$, a set of anchor (state, action) pairs such that for all $(s,a) \in \mathcal{S} \times \mathcal{A}$, $\phi(s,a)$ can be written as convex combination of features of anchor pairs. The authors further assume that $\vert \mathcal{K}\vert = d$ and that all features have non-negative entries and that the features correspond to probability vectors. The authors finally provide a matching minimax lower bound of order $\tilde{\Omega}( \frac{d}{\epsilon^2 (1-\gamma)^3})$.   

Lattimore et al. \cite{lattimore2020learning} also consider the $\epsilon$-optimal policy identification problem in the generative setting. They devise a sampling rule based on G-optimal design and use an approximate policy iteration algorithm to recover the optimal policy. Their algorithm seeks to estimate the Q function directly at each iteration, by first evaluating the value of Q at anchor (state, action) pairs (determined by the G-optimal design) via rollout, and by then generalizing using least squares. The sample complexity of their algorithm is of the order $\tilde{\mathcal{O}}( \frac{d\sqrt{d}}{\epsilon^2 (1-\gamma)^8} \log(\frac{1}{\delta}))$. 

Finally it is worth mentioning \cite{zhou2021provably}, where Zhou et al. consider the regret minimization problem in the forward model. The notion of regret for discounted MDPs is not easy to define. Here, the authors consider the accumulated difference of rewards between an Oracle policy and the proposed policy but along the trajectory followed under the latter policy (this policy could well lead the system into regions of the state space). The proposed algorithm achieves a regret scaling at most as $\hat{\mathcal{O}}(d\sqrt{T}/(1-\gamma)^2)$.

\section{Models and Objectives}

\subsection{Discounted linear MDPs}

We consider an infinite time-horizon MDP with a set of states $\cS$ and a set of actions $\cA$. Each (state, action) pair $(s,a)$ is associated to a feature $\phi(s,a)\in \mathbb{R}^d$, and we assume that the feature map $\phi$ is known to the learner. Without loss of generality, we assume that $\| \phi(s,a)\| \le 1$ for all $(s,a)\in {\cal S}\times {\cal A}$, and that the features $(\phi(s,a))_{s\in {\cal S},a\in {\cal A}}$ cover $\mathbb{R}^d$. An MDP ${\cal M}$ is defined through its dynamics $p_{\cal M}$ and reward distributions $q_{\cal M}$. More precisely, starting from state $s$ and given that action $a$ is selected, the probability to move to state $s'$ is $p_{\cal M}(s,a,s')$ and the distribution of the collected reward is $q_\mdp(s,a,\cdot)$. We assume that $q_\mdp(s,a,\cdot)$ is absolutely continuous w.r.t. a measure $\nu$ with support included in $[0,1]$, and we denote by $r_\mdp(s,a) = \E[q_\mdp(s,a)]$ the expected reward of the (state, action) pair $(s,a)$. The MDP is linear, which means that its dynamics and expected rewards can be parametrized as follows:
\begin{align}\label{eq:linmdp}
\forall (s,s')\in {\cal S}, \ \forall a\in {\cal A}, \quad p_\mdp(s,a,s') = \phi(s,a)^\top \mu_\mdp(s') \quad\hbox{and}\quad r_\mdp(s,a) = \phi(s,a)^\top \theta_\mdp,
\end{align}
where $\mu_\mdp$ is a family of $d$ measures over $\cS$, seen as a $\cS\times d$ dimensional matrix, and $\theta_\mdp\in\R^d$. We will assume that $\|\theta_\mdp\| \le \sqrt{d}$ and $\big\|\sum_{s\in\cS}|\mu_\mdp(s)|\big\| \le \sqrt{d}$.
We denote by $\mathbb{M}$ the set of linear MDPs, i.e., satisfying (\ref{eq:linmdp}). 

\medskip
A control policy $\pi$ maps states to actions. We denote by $s_t^\pi$ the state at time $t$ under the policy $\pi$. For a given discount factor $\gamma\in (0,1)$, the performance of a policy $\pi$ is expressed through its state value function $V_\mdp^\pi$ and its (state, action) value function $Q_\mdp^\pi$ defined by: for all $(s,a)\in {\cal S}\times {\cal A}$,
\begin{align*}
V_\mdp^\pi(s) = \E_\mdp\left[ \sum_{t=0}^{+\infty} \gamma^tr_\mdp(s_t^\pi,\pi(s_t^\pi)) | s_0^\pi=s \right] \quad\hbox{and}\quad
Q_\mdp^\pi(s,a) = r_\mdp(s,a) + \gamma \sum_{s'}p_\mdp(s,a,s')V_\mdp^\pi(s').
\end{align*}

A policy $\pi$ is said optimal for the MDP $\mdp$ if it maximizes the value function for any state, i.e., for any policy $\pi'$, we have $V_\mdp^\pi \ge V_\mdp^{\pi'}$ point-wise. Throughout the paper, we assume that the optimal policy $\pi^\star_\mdp$ is unique. The state and (state, action) value functions of $\pi^\star_\mdp$ are referred to as the value function $V_\mdp^\star$ and the Q function $Q_\mdp^\star$, respectively. A policy $\pi$ is said $\varepsilon$-optimal if $V_\mdp^\pi \ge V_\mdp^\star - \varepsilon$ point-wise, and we denote by $\Pi_\varepsilon^\star(\mdp)$ the set of $\varepsilon$-optimal policies of $\mdp$. Note that all our results apply to optimal policies by choosing $\varepsilon=0$.

\subsection{Best policy identification}

We aim at designing a learning algorithm interacting with the MDP ${\cal M}$ so as to identify an $\varepsilon$-optimal policy as quickly as possible. We formalize this design in a PAC framework. A learning algorithm consists of (i) a sampling rule, (ii) a stopping rule and (iii) a decision rule. 

\medskip
{\bf Sampling rule.} We distinguish between the {\it generative} and the {\it forward} model. In the former, in each round $t$, the sampling rule may select any (state, action) $(s_t,a_t)$ to explore depending on past observations. In the latter, the learner is forced to follow the trajectory of the system, and only the action may be selected. From the selected pair, the learner observes the next state and receives a sample of the corresponding reward. 

\medskip
{\bf Stopping rule.} This rule is defined through a stopping time $\tau$ deciding when the learner stops gathering information and wishes to output an estimated $\varepsilon$-optimal policy.

\medskip
{\bf Decision rule.} Based on the observations gathered before stopping, the learner outputs an estimated optimal policy $\hat{\pi}$.

\medskip
An algorithm is $(\delta,\varepsilon)$-PAC if it outputs a $\varepsilon$-optimal policy with probability at least $1-\delta$. Our goal is to design such algorithm with minimal expected sample complexity $\mathbb{E}_{\cal M}[\tau]$. In contrast with most existing analyses, we will derive {\it instance-specific} lower and upper bounds on the sample complexity of $(\delta,\varepsilon)$-PAC algorithms. In particular, we wish these bounds to depend on the sub-optimality gap of the MDP ${\cal M}$ defined by $\Delta(\mdp) = \min_{s\in\cS, a\neq\pi_\mdp^\star(s)} (V_\mdp^\star(s) - Q_\mdp^\star(s,a))$.

\section{Sample Complexity Lower Bounds}

To state our instance-specific lower bounds, we first introduce the following notations. Given two MDPs $\mdp$ and $\mdpb$ in $\mathbb{M}$, we write $\mdp\ll\mdpb$ if for every pair $(s,a)\in\cS\times\cA$, we have $p_\mdp(s,a,\cdot)\ll p_\mdp(s,a,\cdot)$ and $q_\mdp(s,a,\cdot)\ll q_\mdpb(s,a,\cdot)$. In this case, we define the Kullback-Leibler divergence between $\mdp$ and $\mdpb$ by:
\begin{equation}
    \label{eq:KLMDP}
    \KL(s,a) = \text{KL}(q_\mdp(s,a,\cdot)\|q_\mdpb(s,a,\cdot)) + \text{KL}(p_\mdp(s,a,\cdot)\|p_\mdpb(s,a,\cdot)).
\end{equation}

We also denote $\textup{kl}(a,b)$ the Kullback-Leibler divergence of two Bernoulli distributions of respective means $a$ and $b$. Finally, we introduce the following set of MDPs. This set include MDPs for which efficient policies for $\mdp$ are not $\varepsilon$-optimal.
\begin{align}
\Alt_\varepsilon(\mdp) & = \{\mdpb\in \mathbb{M} : \mdp\ll\mdpb, \Pi_\varepsilon^\star(\mdp) \cap \Pi_\varepsilon^\star(\mdpb) = \emptyset\}.
\end{align}
We refer to $\Alt_\varepsilon(\mdp)$ as the {\it set of alternative MDPs w.r.t. $\mdp$}.
The next proposition is established using classical change-of-measure arguments, and specifically, considering that the observations are generated under an MDP in the set of alternative MDPs w.r.t. $\mdp$. 

\begin{prop}
\label{prop:LowerBoundDiscounted}
Let $(\varepsilon, \delta)\in (0,1)^2$. The sample complexity $\tau$ of any $(\delta,\varepsilon)$-PAC algorithm satisfies $\E_\mdp[\tau] \ge T^\star(\mdp) \textup{kl}(\delta,1-\delta)$, where $T^\star(\mdp)^{-1}=\sup_{\omega\in\Sigma_{\cS\times\cA} }T(\mdp,\omega)^{-1}$ and 
\begin{align}
    T(\mdp,\omega)^{-1} = \inf_{\mdpb\in\Alt_\varepsilon(\mdp)} \sum_{s,a} \omega_{s,a}\KL(s,a).
\end{align}
\end{prop}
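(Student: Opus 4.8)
The plan is to establish this lower bound via the standard change-of-measure (transportation) argument that underlies essentially all information-theoretic lower bounds in the fixed-confidence pure-exploration literature. The starting point is a data-processing / log-likelihood-ratio inequality applied to the event on which the algorithm's decision differs between $\mdp$ and an alternative instance $\mdpb$.

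\medskip
First I would fix an arbitrary $(\delta,\varepsilon)$-PAC algorithm and an arbitrary alternative MDP $\mdpb\in\Alt_\varepsilon(\mdp)$. The key observation is that, by definition of $\Alt_\varepsilon(\mdp)$, we have $\Pi_\varepsilon^\star(\mdp)\cap\Pi_\varepsilon^\star(\mdpb)=\emptyset$, so the output $\hat\pi$ of the algorithm cannot simultaneously be $\varepsilon$-optimal for both $\mdp$ and $\mdpb$. Introducing the event $\cE=\{\hat\pi\in\Pi_\varepsilon^\star(\mdp)\}\in\cF_\tau$, the PAC guarantee gives $\proba_\mdp(\cE)\ge 1-\delta$ under $\mdp$, while under $\mdpb$ the same output $\hat\pi$ fails to be $\varepsilon$-optimal, so $\proba_\mdpb(\cE)\le\delta$. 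This provides two instances that are statistically distinguishable yet on which the algorithm must behave differently.

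\medskip
The second step is the transportation lemma. Letting $N_{s,a}(\tau)$ denote the number of times $(s,a)$ is sampled up to the stopping time, a Wald-type identity for the log-likelihood ratio of the observations under $\mdp$ versus $\mdpb$ yields
\begin{align*}
\sum_{s,a}\E_\mdp[N_{s,a}(\tau)]\,\KL(s,a) \;\ge\; \textup{kl}\bigl(\proba_\mdp(\cE),\proba_\mdpb(\cE)\bigr),
\end{align*}
where the per-sample expected log-likelihood increment is exactly $\KL(s,a)$ by the additive decomposition in \eqref{eq:KLMDP} of the reward and transition divergences. Combining with the bounds $\proba_\mdp(\cE)\ge 1-\delta$ and $\proba_\mdpb(\cE)\le\delta$ and the monotonicity of $\textup{kl}(\cdot,\cdot)$ in its arguments gives $\sum_{s,a}\E_\mdp[N_{s,a}(\tau)]\,\KL(s,a)\ge\textup{kl}(1-\delta,\delta)=\textup{kl}(\delta,1-\delta)$.

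\medskip
The final step is to optimize over the two free choices. Writing $\omega_{s,a}=\E_\mdp[N_{s,a}(\tau)]/\E_\mdp[\tau]$, which is a valid element of the simplex $\Sigma_{\cS\times\cA}$ since $\sum_{s,a}N_{s,a}(\tau)=\tau$, the inequality reads $\E_\mdp[\tau]\sum_{s,a}\omega_{s,a}\KL(s,a)\ge\textup{kl}(\delta,1-\delta)$. Since $\mdpb\in\Alt_\varepsilon(\mdp)$ was arbitrary, I may take the infimum over alternatives, obtaining $\E_\mdp[\tau]\cdot T(\mdp,\omega)^{-1}\ge\textup{kl}(\delta,1-\delta)$, and since the proportions $\omega$ induced by the algorithm are fixed but unknown, I lower-bound $T(\mdp,\omega)^{-1}$ by its supremum over the simplex, which is precisely $T^\star(\mdp)^{-1}$. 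Rearranging gives $\E_\mdp[\tau]\ge T^\star(\mdp)\,\textup{kl}(\delta,1-\delta)$.

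\medskip
I expect the main subtlety to lie in the transportation step: one must justify the Wald-type identity for a random (stopping-time) number of samples, which requires that the allocation $N_{s,a}(\tau)$ be measurable with respect to the natural filtration and that the log-likelihood ratio decompose additively over rounds — both standard but requiring care that the sampling rule is adapted to past observations and that $\E_\mdp[\tau]<\infty$ (otherwise the bound is vacuous). A secondary point is confirming that $\Alt_\varepsilon(\mdp)$ is nonempty, so that the infimum is taken over a meaningful set; this follows from the uniqueness of $\pi^\star_\mdp$ together with the assumption that the features span $\R^d$, allowing perturbations of $\theta_\mdp$ that change the set of $\varepsilon$-optimal policies while preserving linearity and absolute continuity.
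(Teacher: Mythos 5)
Your proposal is correct and is precisely the standard change-of-measure/transportation argument that the paper itself invokes (it only cites \cite{kaufmann2016complexity, al2021navigating} rather than spelling out the steps). The details you fill in — the event $\cE=\{\hat\pi\in\Pi_\varepsilon^\star(\mdp)\}$ with $\proba_\mdp(\cE)\ge 1-\delta$ and $\proba_\mdpb(\cE)\le\delta$, the Wald-type identity with per-pair divergence $\KL(s,a)$, and the normalization $\omega_{s,a}=\E_\mdp[N_{s,a}(\tau)]/\E_\mdp[\tau]$ followed by the infimum over $\mdpb$ and supremum over $\omega$ — match the intended argument exactly.
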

The vector $\omega\in \Sigma_{\cS\times\cA}$ solving the optimization problem leading to $T^\star(\mdp)$ can be interpreted as the optimal proportions of times an optimal algorithm should sample the various (state, action) pairs. As it turns out, as in the case of tabular MDPs \cite{AdaptiveSamplingBestPolicyIdentification}, analyzing and computing this allocation is difficult. Instead, our strategy will be to derive (tight) instance-specific and tractable upper bounds of the lower bound, and devise algorithms based on these upper bounds and the corresponding allocations. To state the upper bounds, we introduce the following quantities: let $\omega\in \Sigma_{\cS\times\cA}$,
\begin{align}
\Lambda(\omega) &= \sum_{(s,a)\in\cS\times\cA} \omega_{s,a} \phi(s,a)\phi(s,a)^\top,\\
\sigma(\omega) & = \max_{(s,a)\in\cS\times\cA} \|\phi(s,a)\|_{\Lambda(\omega)^{-1}}^2.
\end{align}
$\Lambda(\omega)$ is referred to as the {\it feature matrix}. Further introduce $\textup{Var}_{s,a}(\mdp)$ the variance of the r.v. $R+\gamma V^\star_{\cal M}(s')$ where $R$ is the random reward collected with (state, action) pair $(s,a)$ and $s'$ is a state with distribution $p_{\mdp}(s,a,\cdot)$. This variance is bounded by $(1-\gamma)^{-2}/4$ in the worst case. Finally, define $\textup{Var}(\mdp) = \max_{s,a}\textup{Var}_{s,a}(\mdp)$. The next theorem presents two upper bounds on the sample complexity lower bound. The first bound depends on the sub-optimality gap only, whereas the refined second upper bound also exhibits a dependence in the variance. 

\begin{thm}
\label{thm:UpperBoundDiscounted}
We have for all $\omega\in \Sigma_{\cS\times\cA}$,
\begin{align}
T(\mdp,\omega) & \le \frac{10\sigma(\omega)}{3(1-\gamma)^4(\Delta(\mdp)+\varepsilon)^2} := U(\mdp,\omega),\\
T^\star(\mdp) & \le \frac{10d}{3(1-\gamma)^4(\Delta(\mdp)+\varepsilon)^2} := U^\star(\mdp).
\end{align}
\end{thm}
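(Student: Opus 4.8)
The plan is to prove the per-allocation bound $T(\mdp,\omega)\le U(\mdp,\omega)$ first, and then obtain $T^\star(\mdp)\le U^\star(\mdp)$ by optimizing the choice of $\omega$. Since $T(\mdp,\omega)^{-1}=\inf_{\mdpb\in\Alt_\varepsilon(\mdp)}\sum_{s,a}\omega_{s,a}\KL(s,a)$, the first inequality is equivalent to the lower bound $\sum_{s,a}\omega_{s,a}\KL(s,a)\ge U(\mdp,\omega)^{-1}$ holding \emph{uniformly} over all alternatives. So I fix an arbitrary $\mdpb\in\Alt_\varepsilon(\mdp)$ and lower bound each $\KL(s,a)$. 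Because rewards and transitions are independent given $(s,a)$, $\KL(s,a)$ equals the divergence between the product laws of $(R,s')$ under $\mdp$ and under $\mdpb$. Applying a variational (transportation) inequality to the bounded statistic $Y=R+\gamma V_\mdp^\star(s')$, whose range is at most $(1-\gamma)^{-1}$, gives $\KL(s,a)\ge \tfrac{(\E_\mdp[Y]-\E_\mdpb[Y])^2}{2\,\textup{Var}_{s,a}(\mdp)}$, and then using the worst-case estimate $\textup{Var}_{s,a}(\mdp)\le (1-\gamma)^{-2}/4$ yields $\KL(s,a)\ge 2(1-\gamma)^2(\E_\mdp[Y]-\E_\mdpb[Y])^2$. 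The key structural observation is that the mean gap is linear in the features: $\E_\mdp[Y]-\E_\mdpb[Y]=\phi(s,a)^\top\beta$ with $\beta:=\diffMDPb{V_\mdp^\star}$.

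Summing with the weights $\omega_{s,a}$ turns the feature-linear gaps into a quadratic form in the feature matrix: $\sum_{s,a}\omega_{s,a}\KL(s,a)\ge 2(1-\gamma)^2\,\beta^\top\Lambda(\omega)\beta=2(1-\gamma)^2\|\beta\|_{\Lambda(\omega)}^2$. To convert this into a lower bound expressed through $\sigma(\omega)$, I use Cauchy–Schwarz in the $\Lambda(\omega)$-geometry: for every $(s,a)$, $|\phi(s,a)^\top\beta|\le \|\phi(s,a)\|_{\Lambda(\omega)^{-1}}\|\beta\|_{\Lambda(\omega)}\le \sqrt{\sigma(\omega)}\,\|\beta\|_{\Lambda(\omega)}$, hence $\|\beta\|_{\Lambda(\omega)}^2\ge \sigma(\omega)^{-1}\max_{s,a}(\phi(s,a)^\top\beta)^2$.

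It remains to show that being an alternative forces a large feature-aligned perturbation, namely $\max_{s,a}|\phi(s,a)^\top\beta|\gtrsim (1-\gamma)(\Delta(\mdp)+\varepsilon)$, and this is the step I expect to be the main obstacle. Since $\Pi_\varepsilon^\star(\mdp)\cap\Pi_\varepsilon^\star(\mdpb)=\emptyset$ and $\pi^\star_\mdp$ is the unique optimal policy of $\mdp$, the policy $\pi^\star_\mdp$ is not $\varepsilon$-optimal in $\mdpb$, so there is a state with $V^\star_\mdpb-V^{\pi^\star_\mdp}_\mdpb>\varepsilon$. I would translate this value sub-optimality into a one-step (advantage / Bellman-residual) statement, exploiting that in $\mdp$ every deviation from $\pi^\star_\mdp$ costs at least $\Delta(\mdp)$: this produces a state-action pair at which the advantage of $\pi^\star_\mdp$ flips from $\le-\Delta(\mdp)$ in $\mdp$ to a positive value in $\mdpb$, so the two advantages differ by at least $\Delta(\mdp)+\varepsilon$. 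I would then write this advantage difference through the two Bellman operators applied to a common value function, isolating the feature-linear term $\phi(s,a)^\top(\diffMDPb{\,\cdot\,})$ and controlling the residual value-function perturbation by a Neumann-series/contraction bound of the form $\|V_\mdp-V_\mdpb\|_\infty\le (1-\gamma)^{-1}\max_{s,a}|\phi(s,a)^\top(\diffMDPb{\,\cdot\,})|$. The delicate bookkeeping here — reconciling the value function appearing in the propagation argument with the $V_\mdp^\star$ used in the variance lemma, and tracking exactly one factor of $(1-\gamma)$ so that $\Delta(\mdp)+\varepsilon$ stays grouped rather than degrading to $\Delta(\mdp)+(1-\gamma)\varepsilon$ — is precisely what fixes the power $(1-\gamma)^4$ and the numerical constant.

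Chaining the three displays gives $\sum_{s,a}\omega_{s,a}\KL(s,a)\ge \tfrac{3(1-\gamma)^4(\Delta(\mdp)+\varepsilon)^2}{10\,\sigma(\omega)}$; taking the infimum over $\mdpb\in\Alt_\varepsilon(\mdp)$ and inverting yields $T(\mdp,\omega)\le U(\mdp,\omega)$. For the second bound, note that $T^\star(\mdp)=\inf_\omega T(\mdp,\omega)\le \inf_\omega U(\mdp,\omega)$ and that $U(\mdp,\omega)$ is increasing in $\sigma(\omega)$, so it suffices to minimize $\sigma(\omega)=\max_{s,a}\|\phi(s,a)\|^2_{\Lambda(\omega)^{-1}}$. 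Since the features are bounded and span $\R^d$, the Kiefer–Wolfowitz equivalence theorem gives $\min_\omega \sigma(\omega)=d$, attained by the G-optimal design, whence $T^\star(\mdp)\le \tfrac{10d}{3(1-\gamma)^4(\Delta(\mdp)+\varepsilon)^2}=U^\star(\mdp)$.
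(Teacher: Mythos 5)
Your proposal follows essentially the same route as the paper's proof: relax the infimum to the larger set $\{\mdpb : \pi_\mdp^\star\notin\Pi_\varepsilon^\star(\mdpb)\}$, lower bound each $\KL(s,a)$ by a constant times $(1-\gamma)^2\big(\phi(s,a)^\top(\diffMDPb{V_\mdp^\star})\big)^2$, sum against $\omega$ to obtain $\distMDPb{V_\mdp^\star}_{\Lambda(\omega)}^2$, pass to $\sigma(\omega)$ by Cauchy--Schwarz (the paper's Lemma \ref{lemma:Optimization} with $n=1$ is exactly this duality), and finish with Kiefer--Wolfowitz. Two of your local choices are harmless variants: you invoke a Hoeffding-type transportation inequality (constant $2(1-\gamma)^2$, slightly better than the paper's Bernstein-type $\tfrac{6}{5}(1-\gamma)^2$ from Lemma \ref{lemma:BoundKL}); note only that the sub-Gaussian/variance proxy in Donsker--Varadhan must be taken under the second argument $\mdpb$, not under $\mdp$ as your intermediate display with $\textup{Var}_{s,a}(\mdp)$ suggests --- this washes out once you replace it by the measure-independent worst case $(1-\gamma)^{-2}/4$, so your final constant is valid.

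The one place where your argument is a plan rather than a proof is the pivotal inequality $\max_{s,a}\big|\phi(s,a)^\top\big(\diffMDPb{V_\mdp^\star}\big)\big|\ge\tfrac{1-\gamma}{2}(\Delta(\mdp)+\varepsilon)$, which you yourself flag as the main obstacle. Your sketch is the right one and corresponds to the paper's Lemmas \ref{lemma:GapBound}, \ref{lemma:ValueDiffSamePolicy} and \ref{lemma:ValueDiffOptimalPolicy}: at the state $s$ maximizing $V_\mdpb^\star-V_\mdpb^{\pi_\mdp^\star}$ one must first prove $\pi_\mdpb^\star(s)\neq\pi_\mdp^\star(s)$ --- this is where a $\gamma$-contraction contradiction is needed, and it is the step your ``advantage flips sign'' phrasing hides. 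That yields $\Delta(\mdp)+\varepsilon\le\|V_\mdp^\star-V_\mdpb^{\pi_\mdp^\star}\|_\infty+\|Q_\mdp^\star-Q_\mdpb^\star\|_\infty$, and each of the two norms is then bounded by $\tfrac{1}{1-\gamma}\max_{s,a}\big|\phi(s,a)^\top\big(\diffMDPb{V_\mdp^\star}\big)\big|$ via a Bellman/Neumann-series perturbation --- crucially with the \emph{same} vector $V_\mdp^\star$ appearing in both perturbation terms (one bound is for the fixed policy $\pi_\mdp^\star$, the other for the two optimal policies), which is what lets the two contributions merge into the single factor $\tfrac{2}{1-\gamma}$ and keeps $\Delta(\mdp)+\varepsilon$ grouped. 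Filling in these three lemmas is routine but is genuinely the content of the theorem; as written, your proposal defers it.
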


Note that the optimal allocation $\omega^\star$ such that $U^\star(\mdp) =  \inf_{\omega\in \Sigma_{\cS\times\cA}} U(\mdp,\omega) = U(\mdp,\omega^\star)$ is characterized by $\sigma(\omega^\star)=\inf_{\omega\in \Sigma_{\cS\times\cA}}\sigma(\omega)$ and hence depends on the MDP ${\cal M}$ through its features only. The allocation $\omega^\star$ can be easily computed even before the learning process starts, and as shown in the next section, an algorithm just tracking it yields an expected sample complexity roughly no greater than $U^\star(\mdp)\textup{kl}(\delta,1-\delta)$.

\section{The GSS Algorithm}

This section presents the GSS (G-Sampling-and-Stop) algorithm. It samples (state, action) pair according to the allocation $\omega^\star$ minimizing $\sigma(\omega)$, and hence can be seen as a classical G-sampling strategy. The design of the stopping rule is driven by the upper bound of the sample complexity lower bound $U^\star(\mdp)\textup{kl}(\delta,1-\delta)$. However since $\mdp$ is unknown, we will replace, in this upper bound, $\mdp$ by the MDP obtained by plugging the least-squares estimators of $\theta_\mdp$ and $\mu_\mdp$. The components of GSS and their analysis is detailed below. 

\subsection{Sampling rule}

Under the GSS algorithm, we start by computing the allocation $\omega^\star$ minimizing $\sigma(\omega)$ over $\omega$. Then in each round $t$, the algorithm samples the pair $(s_t,a_t)$ according to $\omega^\star$. We define $\Phi_t = \begin{pmatrix}\phi(s_1,a_1) & \cdots & \phi(s_t,a_t) \end{pmatrix}^\top \in \R^{t\times d}$ the matrix of the first $t$ sampled features and $\omega_t = N_t/t$ the frequency of sampling of each pairs up to time $t$ ($N_t$ is the $SA$-dimensional vector recording the numbers of times each (state, action) pair has been selected up to round $t$). Notice that
\begin{equation}
    \frac{1}{t}\Phi_t^\top\Phi_t = \frac{1}{t}\sum_{\ell=1}^t \phi(s_\ell,a_\ell)\phi(s_\ell,a_\ell)^\top = \sum_{s,a} \frac{N_t(s,a)}{t} \phi(s,a)\phi(s,a)^\top = \Lambda(\omega_t).
\end{equation}

Following this sampling rule, the above matrix will converge towards $\Lambda(\omega^\star)$ and in particular $\sigma(\omega_t)$ will converge towards $\sigma(\omega^\star)$. Specifically, we have:

\begin{prop}
\label{prop:ConcentrationLambda}
Let $\delta\in(0,1)$. For any $t\ge 10d\log\big(\frac{2d}{\delta}\big)$,
\begin{equation}
    \proba\big[\sigma(\omega_t) \le 2\sigma(\omega^\star)\big]
    \ge 1-\delta.
\end{equation}
\end{prop}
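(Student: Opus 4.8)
The plan is to exploit the fact that, under the G-sampling rule, the pairs $(s_1,a_1),\dots,(s_t,a_t)$ are drawn i.i.d.\ from $\omega^\star$, so that $\Lambda(\omega_t) = \frac{1}{t}\sum_{\ell=1}^t \phi(s_\ell,a_\ell)\phi(s_\ell,a_\ell)^\top$ is an empirical average of i.i.d.\ rank-one PSD matrices whose common mean is exactly $\Lambda(\omega^\star)$ (which is positive definite since the features cover $\R^d$). The key observation is that the conclusion $\sigma(\omega_t)\le 2\sigma(\omega^\star)$ is implied by the \emph{multiplicative} matrix deviation bound $\Lambda(\omega_t)\succeq \tfrac12\Lambda(\omega^\star)$: by operator-monotonicity of the inverse this gives $\Lambda(\omega_t)^{-1}\preceq 2\Lambda(\omega^\star)^{-1}$, whence $\|\phi(s,a)\|_{\Lambda(\omega_t)^{-1}}^2\le 2\|\phi(s,a)\|_{\Lambda(\omega^\star)^{-1}}^2$ for every $(s,a)$, and taking the maximum over $(s,a)$ yields the claim. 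So the whole proof reduces to controlling $\Lambda(\omega_t)$ relative to $\Lambda(\omega^\star)$ in the PSD order.

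To turn this relative statement into an absolute smallest-eigenvalue statement amenable to a standard concentration inequality, I would whiten the matrices: set $z_\ell=\Lambda(\omega^\star)^{-1/2}\phi(s_\ell,a_\ell)$ and $Y_\ell = z_\ell z_\ell^\top$, so that $\Lambda(\omega^\star)^{-1/2}\Lambda(\omega_t)\Lambda(\omega^\star)^{-1/2}=\frac{1}{t}\sum_{\ell=1}^t Y_\ell$. By construction $\E[Y_\ell]=I_d$, hence $\lambda_{\min}\big(\sum_\ell \E[Y_\ell]\big)=t$, and each $Y_\ell$ is PSD with $\lambda_{\max}(Y_\ell)=\|\phi(s_\ell,a_\ell)\|_{\Lambda(\omega^\star)^{-1}}^2\le \sigma(\omega^\star)=d$, the last equality being the Kiefer--Wolfowitz characterization of the G-optimal design (the bound $\sigma(\omega^\star)\le d$ would suffice). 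In this normalization the target event $\Lambda(\omega_t)\succeq\tfrac12\Lambda(\omega^\star)$ is exactly $\lambda_{\min}\big(\tfrac{1}{t}\sum_\ell Y_\ell\big)\ge\tfrac12$.

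The core step is then a matrix Chernoff lower-tail bound (e.g.\ Tropp) applied to $\sum_\ell Y_\ell$ with $\mu_{\min}=t$ and per-term bound $R=d$: with deviation parameter $\theta=\tfrac12$,
\[
\proba\!\left[\lambda_{\min}\Big(\tfrac{1}{t}\sum_{\ell=1}^t Y_\ell\Big)\le \tfrac12\right]\le d\left(\frac{e^{-1/2}}{(1/2)^{1/2}}\right)^{t/d}=d\,(2/e)^{t/(2d)}.
\]
It then remains to verify that the threshold $t\ge 10d\log(2d/\delta)$ makes the right-hand side at most $\delta$, which is routine constant-chasing: since $-\tfrac12\log(2/e)=\tfrac12(1-\log2)\approx 0.153$, the inequality $d\,(2/e)^{t/(2d)}\le\delta$ holds as soon as $t\ge \tfrac{d}{\frac12(1-\log 2)}\log(d/\delta)\approx 6.52\,d\log(d/\delta)$, and $10d\log(2d/\delta)$ comfortably exceeds this.

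I expect the only genuine subtlety to be the passage from absolute eigenvalue control to the relative PSD ordering, namely recognizing that one should whiten by $\Lambda(\omega^\star)^{-1/2}$ so that the concentrated quantity has identity mean and a per-term norm bounded by $\sigma(\omega^\star)=d$, rather than naively applying matrix Chernoff to $\Lambda(\omega_t)$ itself (which would only control $\lambda_{\min}(\Lambda(\omega_t))$ in absolute terms and lose the clean factor of $2$). Everything else --- the i.i.d.\ structure coming from the sampling rule, operator-monotonicity of inversion, and the final constant-chasing --- is standard.
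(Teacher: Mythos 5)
Your proof is correct, and it follows the same overall architecture as the paper's: whiten by $\Lambda(\omega^\star)^{-1/2}$ so that the empirical feature matrix becomes an average of i.i.d.\ PSD matrices with identity mean and per-term operator norm at most $\sigma(\omega^\star)=d$ (Kiefer--Wolfowitz), apply a matrix concentration inequality, and convert the resulting PSD ordering $\Lambda(\omega_t)\succeq\tfrac12\Lambda(\omega^\star)$ into $\sigma(\omega_t)\le 2\sigma(\omega^\star)$ via operator anti-monotonicity of the inverse. The one genuine difference is the concentration tool. The paper centers the summands and applies matrix Bernstein (Theorem 5.4.1 in Vershynin), which yields a \emph{two-sided} bound $(1-\rho)\Lambda(\omega^\star)\preceq\Lambda(\omega_t)\preceq(1+\rho)\Lambda(\omega^\star)$ for $t\ge 2(\tfrac{1}{\rho^2}+\tfrac{1}{3\rho})d\log(\tfrac{2d}{\delta})$, i.e.\ $\tfrac{28d}{3}\log(\tfrac{2d}{\delta})$ at $\rho=\tfrac12$; the upper half of this sandwich is what powers the remark that $\sigma(\omega_t)\ge d/(1+\rho)$ as well, and the lemma is stated for $\epsilon$-approximate designs. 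You instead use the matrix Chernoff lower tail for sums of PSD matrices, which targets exactly the one-sided event needed here and gives a marginally better constant ($\approx 6.52\,d\log(d/\delta)$ versus $\tfrac{28}{3}d\log(\tfrac{2d}{\delta})$); your constant-chasing against the stated threshold $10d\log(\tfrac{2d}{\delta})$ checks out. Both routes are sound; the paper's buys the two-sided statement and approximate-design generality at the cost of a slightly larger constant, while yours is leaner for the proposition as stated.
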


\subsection{Least-squares estimators}

GSS leverages the least-squares estimators of the parameters $\mu_\mdp$ and $\theta_\mdp$. We provide below explicit expressions for these estimators and derive concentration inequalities characterizing their performance. When the algorithm selects (state, action) pair $(s_t,a_t)$ in round $t$, it observes the next state $s_t'$ and receives the reward $r_t$. Overall, in round $t$, the algorithm gathers the experience $(s_t,a_t,r_t,s_t')$. Define $R_t = \begin{pmatrix} r_1 & \cdots & r_t \end{pmatrix}^\top$ and $S_t(s) = (1_{s=s'_1}, \cdots ,1_{s=s'_t})^\top$. The regularized least-squares estimator with parameter $\lambda$  of $\theta_\cM$ after $t$ experiences is given by  
\begin{equation}
    \label{eq:ThetaLSE}
    \thetat
    = \argmin{\theta\in\R^d} \sum_{\ell=1}^t \left(r_\ell - \phi(s_\ell,a_\ell)^\top\theta\right)^2 + \lambda \|\theta\|^2\\
    = \left(\Phi_t^\top\Phi_t + \lambda I_d\right)^{-1} \Phi_t^\top R_t
\end{equation}
and the regularized least-squares estimator of $\mu_\mdp(s)$ with parameter $\lambda$ by
\begin{equation}
    \label{eq:MuLSE}
    \mut(s)
    = \argmin{\mu\in\R^d} \sum_{\ell=1}^t \left(1_{\lbrace s=s'_\ell \rbrace } - \phi(s_\ell,a_\ell)^\top\mu\right)^2 + \lambda \|\mu\|^2\\
    = \left(\Phi_t^\top\Phi_t + \lambda I_d\right)^{-1} \Phi_t^\top S_t(s).
\end{equation}
We will choose $\lambda = 1/d$ and denote $\mdpt$ the associated MDP and $\Vt$ and $\Qt$ its value functions.
The least square estimators can be controlled in the following sense :

\begin{prop}
\label{prop:ConcentrationLSEDiscounted}
Let $\delta\in(0,1)$. Regardless of the sampling rule, we have with probability at least $1-\delta$ that for all $t\ge1$,
\begin{equation}
    \distMDPt{\Vt^\star}_{t\Lambda(\omega_t)}^2 \le \frac{2}{(1-\gamma)^2} \left(2\log\left(\frac{\sqrt{e}\zeta(2)t^2}{\delta}\right) + d\log\left(8e^4dt^2\right)\right).
\end{equation}
\end{prop}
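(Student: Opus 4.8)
The plan is to establish a self-normalized concentration bound for the joint reward-and-transition estimation error, measured in the empirical feature-matrix norm. First I would observe that the quantity being controlled, $\distMDPt{\Vt^\star}_{t\Lambda(\omega_t)}^2$, is the squared norm of the vector $\diffMDPt{\Vt^\star}$, which combines the reward-parameter error $\thetat-\theta_\mdp$ with the transition-parameter error $\gamma(\mut-\mu_\mdp)^\top \Vt^\star$, contracted against the estimated value function. The key algebraic step is to unify these two into a single regression problem: for each round $\ell$, the scalar response is $r_\ell + \gamma \Vt^\star(s_\ell')$ (reward plus discounted estimated value of the observed next state), whose conditional expectation given $(s_\ell,a_\ell)$ is $\phi(s_\ell,a_\ell)^\top(\theta_\mdp + \gamma \mu_\mdp^\top \Vt^\star)$ by the linear-MDP structure in (\ref{eq:linmdp}). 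Plugging in the closed-form least-squares expressions (\ref{eq:ThetaLSE}) and (\ref{eq:MuLSE}), the error vector $\diffMDPt{\Vt^\star}$ (up to the regularization term $\lambda$) equals $(\Phi_t^\top\Phi_t+\lambda I_d)^{-1}\Phi_t^\top \eta_t$, where $\eta_t$ is the vector of centered noises $\eta_\ell = r_\ell + \gamma\Vt^\star(s_\ell') - \phi(s_\ell,a_\ell)^\top(\theta_\mdp+\gamma\mu_\mdp^\top \Vt^\star)$.

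Next I would bound the noise. Each $\eta_\ell$ is the deviation of $R + \gamma \Vt^\star(s')$ from its conditional mean; since the reward lies in $[0,1]$ and $\Vt^\star$ is bounded by $(1-\gamma)^{-1}$ in supremum norm, $\eta_\ell$ is bounded and hence sub-Gaussian with variance proxy of order $(1-\gamma)^{-2}$ — this is where the $(1-\gamma)^{-2}$ prefactor originates. The standard route is then the self-normalized martingale concentration inequality of Abbasi-Yadkori–Pál–Szepesvári: with probability at least $1-\delta$, for all $t\ge 1$ simultaneously,
\begin{equation*}
\left\|\Phi_t^\top \eta_t\right\|_{(\Phi_t^\top\Phi_t+\lambda I_d)^{-1}}^2 \le 2\sigma_\eta^2 \log\left(\frac{\det(\Phi_t^\top\Phi_t+\lambda I_d)^{1/2}\det(\lambda I_d)^{-1/2}}{\delta}\right),
\end{equation*}
with $\sigma_\eta^2$ the noise variance proxy. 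Converting from the $(\Phi_t^\top\Phi_t+\lambda I_d)^{-1}$ norm of $\Phi_t^\top\eta_t$ back to the $t\Lambda(\omega_t)=\Phi_t^\top\Phi_t$ norm of the error vector $\diffMDPt{\Vt^\star}$ costs a factor that I must track carefully, absorbing the regularization discrepancy between $\Phi_t^\top\Phi_t$ and $\Phi_t^\top\Phi_t+\lambda I_d$; the determinant term is then bounded via $\det(\Phi_t^\top\Phi_t+\lambda I_d)\le (\lambda + t/d)^d$ using $\|\phi\|\le 1$, which produces the $d\log(8e^4 dt^2)$ summand after substituting $\lambda=1/d$.

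The two main obstacles are the following. The first is the \emph{statistical dependence} of $\Vt^\star$ on the very data used to form $\Phi_t$ and $\eta_t$: the noise $\eta_\ell$ is defined through $\Vt^\star$, which is $\mathcal{F}_t$-measurable, so the $\eta_\ell$ are not an adapted martingale-difference sequence with respect to $\Vt^\star$. The standard remedy is a covering (or $\varepsilon$-net) argument over the class of possible value functions: one derives the self-normalized bound uniformly over a finite net of value functions $V$ bounded by $(1-\gamma)^{-1}$, takes a union bound, and then discretizes $\Vt^\star$ to the nearest net point, paying a net-covering number $\log\mathcal{N}$ that contributes another $d$-proportional term. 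This covering cost is precisely what inflates the bound and is the source of the extra $d\log$ factor, so I would expect the bookkeeping to combine the determinant term and the covering term into the single $d\log(8e^4 dt^2)$ expression. The second obstacle is the union bound over all $t\ge 1$ (the statement is uniform in $t$), handled by a time-peeling or a $\sum_t \delta/t^2$ allocation using $\zeta(2)=\sum_t t^{-2}$, which explains the $\zeta(2)t^2$ factor and the $\sqrt{e}$ inside the logarithm. Assembling the variance proxy $\sigma_\eta^2 \lesssim (1-\gamma)^{-2}$, the determinant/covering $d\log$ term, and the per-round confidence $\log(\zeta(2)t^2/\delta)$ yields the stated inequality.
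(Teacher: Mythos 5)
Your proposal follows essentially the same route as the paper: the same unified regression with response $r_\ell+\gamma V(s'_\ell)$ and conditional mean $\phi(s_\ell,a_\ell)^\top(\theta_\mdp+\gamma\mu_\mdp^\top V)$, the same self-normalized martingale inequality of Abbasi-Yadkori et al., the same $\varepsilon$-net over value functions to break the dependence of $\Vt^\star$ on the data, and the same $\delta/(\zeta(2)t^2)$ union bound over time. Two small details worth noting: the $d$-proportional covering number you invoke holds only because optimal value functions of a linear MDP are parametrized as $V(\cdot)=\max_{a}\phi(\cdot,a)^\top\xi$ with $\|\xi\|\le\sqrt{d}/(1-\gamma)$ (a net over arbitrary functions bounded by $(1-\gamma)^{-1}$ on $\cS$ would cost $|\cS|$), and the $\sqrt{e}$ inside the logarithm absorbs the regularization bias term $\frac{1}{d}(\Phi_t^\top\Phi_t+\frac{1}{d}I_d)^{-1}\xi(V)$ rather than coming from the union bound over $t$.
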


\subsection{Stopping and decision rules}

Denote $Z(t) = t\,U\big(\mdpt,\omega_t\big)^{-1}$ the quantity we seek to control in order to achieve the desired sample complexity.
The stopping rule of GSS is defined through the threshold
\begin{equation}
    \beta(\delta,t) = \frac{12}{5} \left(2\log\left(\frac{\sqrt{e}\zeta(2)t^2}{\delta}\right) + d\log\left(8e^4dt^2\right)\right)
\end{equation}
and the stopping time
\begin{equation}
    \tau = \inf\left\{t \ge 1 : Z(t) > \beta(\delta,t)\right\}.
\end{equation}

This stopping rule is inspired by classical log-likelihood based stopping rules. Usually such stopping rules are proved to be correct by controlling the Kullback-Leibler divergence between the empirical estimator of the MDP and the MDP itself.
Here we will take advantage of the linear assumptions to reduce the correctness of the stopping rule to proposition \ref{prop:ConcentrationLSEDiscounted}, which eventually leads to dependencies in the size of the feature space $d$ instead of $|\cS||\cA|$.
We use the least-squares estimators of the MDP parameters to compute $\mdpt$ and implement the stopping time. When the algorithm stops, it computes $\hat{\pi}$ the optimal policy for the MDP $\widehat{\mdp}_\tau$. The description of GSS is now complete and summarized in Algorithm \ref{alg:GSS}. 

\medskip
\begin{algorithm*}[H]\label{alg:GSS}
\SetAlgoLined
\DontPrintSemicolon
Compute $\omega^\star = \argmin{\omega\in\Sigma_{\cS\times\cA}}\sigma(\omega)$\;
\While{$Z(t) \le \beta(\delta,t)$}{
    Sample $(s_t,a_t)$ according to $\omega^\star$ and observe the corresponding experience \;
    Update $\thetat$ and $\mut$ according to \eqref{eq:ThetaLSE} and \eqref{eq:MuLSE}\;
    $t = t+1$\;
    }
    \Return $\hat{\pi} = \pi_t^\star$ the optimal policy of $\mdpt$\;
\caption{The GSS algorithm}
\end{algorithm*}

\subsection{Performance analysis}

The next theorem states that GSS is $(\varepsilon, \delta)$-PAC as long as it stops and is a direct consequence of proposition \ref{prop:ConcentrationLSEDiscounted}.

\begin{thm}
\label{thm:StoppingCorrectnessDiscounted}
Under the GSS algorithm, we have:
$\proba\left[\tau<+\infty, \hat{\pi} \notin \Pi_\varepsilon^\star(\mdp)\right] \le \delta$.
\end{thm}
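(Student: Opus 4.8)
The plan is to show that the bad event is contained, up to an event of probability at most $\delta$, in the complement of the uniform concentration event of Proposition~\ref{prop:ConcentrationLSEDiscounted}. Write $\widehat V_\tau,\widehat Q_\tau$ for the optimal value and $Q$-functions of $\widehat{\mathcal{M}}_\tau$, so that $\hat\pi=\pi^\star_{\widehat{\mathcal{M}}_\tau}$, and set $g_\tau=\hat\theta_\tau-\theta_\mdp+\gamma(\hat\mu_\tau-\mu_\mdp)^\top\widehat V_\tau$ and $e_\tau(s,a)=\phi(s,a)^\top g_\tau$. The first observation is the algebraic identity, valid for every $(s,a)$,
\begin{equation*}
e_\tau(s,a)=\widehat Q_\tau(s,a)-\Big(r_\mdp(s,a)+\gamma\textstyle\sum_{s'}p_\mdp(s,a,s')\widehat V_\tau(s')\Big),
\end{equation*}
which holds because $\widehat Q_\tau$ satisfies the Bellman optimality equations of $\widehat{\mathcal{M}}_\tau$; thus $e_\tau$ is the Bellman optimality residual of $\widehat V_\tau$ evaluated under the true MDP $\mdp$. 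Since $\sum_{s,a}N_\tau(s,a)e_\tau(s,a)^2=\|g_\tau\|_{\tau\Lambda(\omega_\tau)}^2$ is exactly the quantity bounded in Proposition~\ref{prop:ConcentrationLSEDiscounted}, and that bound holds simultaneously for all $t$ (hence at the random index $\tau$), it suffices to prove that on the bad event this weighted residual \emph{exceeds} the right-hand side of Proposition~\ref{prop:ConcentrationLSEDiscounted}.

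To turn this into a contradiction I would first reduce the weighted residual to a sup-norm statement. By Cauchy--Schwarz and the definition of $\sigma$, for the maximizing pair we have $\|e_\tau\|_\infty^2=\max_{s,a}(\phi(s,a)^\top g_\tau)^2\le\sigma(\omega_\tau)\|g_\tau\|_{\Lambda(\omega_\tau)}^2$, whence
\begin{equation*}
\sum_{s,a}N_\tau(s,a)e_\tau(s,a)^2=\tau\|g_\tau\|_{\Lambda(\omega_\tau)}^2\ge\frac{\tau}{\sigma(\omega_\tau)}\|e_\tau\|_\infty^2 .
\end{equation*}
On the other hand, the stopping condition $Z(\tau)>\beta(\delta,\tau)$ together with the explicit forms of $U(\widehat{\mathcal{M}}_\tau,\omega_\tau)$ and $\beta$ gives $\tau(1-\gamma)^4(\Delta(\widehat{\mathcal{M}}_\tau)+\varepsilon)^2/\sigma(\omega_\tau)>8L(\tau)$, where $L(\tau)$ is the common log-term shared by $\beta$ and Proposition~\ref{prop:ConcentrationLSEDiscounted}. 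Combining the two, the contradiction with Proposition~\ref{prop:ConcentrationLSEDiscounted} follows \emph{provided} one can show, on the bad event, the sup-norm residual lower bound
\begin{equation*}
\|e_\tau\|_\infty>\tfrac{1}{2}(1-\gamma)\big(\Delta(\widehat{\mathcal{M}}_\tau)+\varepsilon\big);
\end{equation*}
indeed this makes $\sum_{s,a}N_\tau(s,a)e_\tau(s,a)^2>2L(\tau)/(1-\gamma)^2$, the exact Proposition~\ref{prop:ConcentrationLSEDiscounted} threshold.

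It remains to establish this residual lower bound, which is the heart of the proof. Because $e_\tau$ is the optimality residual of $\widehat V_\tau$ under $\mdp$ and, restricted to the greedy actions, also the $\hat\pi$-evaluation residual, the $\gamma$-contraction of the Bellman optimality and $\hat\pi$-evaluation operators yields $\|\widehat V_\tau-V^\star_\mdp\|_\infty\le\|e_\tau\|_\infty/(1-\gamma)$ and $\|\widehat V_\tau-V^{\hat\pi}_\mdp\|_\infty\le\|e_\tau\|_\infty/(1-\gamma)$, and consequently $\|\widehat Q_\tau-Q^\star_\mdp\|_\infty\le\|e_\tau\|_\infty/(1-\gamma)$. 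Next, choosing a state $s_0$ maximizing $V^\star_\mdp-V^{\hat\pi}_\mdp$ and expanding one step of the $\hat\pi$-Bellman equation shows that the true action-gap of $\hat\pi$ at $s_0$ satisfies $V^\star_\mdp(s_0)-Q^\star_\mdp(s_0,\hat\pi(s_0))\ge(1-\gamma)\,(V^\star_\mdp(s_0)-V^{\hat\pi}_\mdp(s_0))>(1-\gamma)\varepsilon$, and in particular $\hat\pi(s_0)\ne\pi^\star_\mdp(s_0)$. Since $\hat\pi$ is optimal for $\widehat{\mathcal{M}}_\tau$, the action $\pi^\star_\mdp(s_0)$ is suboptimal there by at least $\Delta(\widehat{\mathcal{M}}_\tau)$, i.e. $\widehat Q_\tau(s_0,\hat\pi(s_0))-\widehat Q_\tau(s_0,\pi^\star_\mdp(s_0))\ge\Delta(\widehat{\mathcal{M}}_\tau)$; transporting this inequality to $\mdp$ through the $Q$-perturbation bound and combining with the action-gap estimate gives $\Delta(\widehat{\mathcal{M}}_\tau)+(1-\gamma)\varepsilon\le 2\|e_\tau\|_\infty/(1-\gamma)$.

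I expect the main obstacle to be precisely this last combination: the direct argument produces $\Delta(\widehat{\mathcal{M}}_\tau)+(1-\gamma)\varepsilon$ inside the bracket, whereas the constants in $U$ and $\beta$ require the tighter $\Delta(\widehat{\mathcal{M}}_\tau)+\varepsilon$. The loss of one factor $(1-\gamma)$ on the $\varepsilon$ term comes from collapsing the value-level suboptimality to a single-state action-gap via the one-step (performance-difference) expansion. Closing this gap is the delicate step: I would look either to sharpen the value-difference bound so that the $\varepsilon$ margin enters at the value level while the gap $\Delta(\widehat{\mathcal{M}}_\tau)$ enters at the action level, or, more promisingly, to avoid the crude reduction $\sum_{s,a}N_\tau e_\tau^2\ge(\tau/\sigma(\omega_\tau))\|e_\tau\|_\infty^2$ and instead lower-bound the weighted residual by summing the contributions of $e_\tau$ over the whole region where $\hat\pi$ disagrees with $\pi^\star_\mdp$ (whose effective size is of order $(1-\gamma)^{-1}$), thereby recovering the missing factor. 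Everything else---the identity for $e_\tau$, the $\sigma$-reduction, and the final appeal to Proposition~\ref{prop:ConcentrationLSEDiscounted} (so that the bad event lies in the failure event of probability at most $\delta$)---is routine once this estimate is in place.
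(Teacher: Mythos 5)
Your overall architecture is exactly the paper's: contain the bad event in the failure event of Proposition~\ref{prop:ConcentrationLSEDiscounted}, note that on $\{\hat\pi\notin\Pi_\varepsilon^\star(\mdp)\}$ the true MDP $\mdp$ is an admissible ``alternative'' for $\mdpt$, pass from the weighted residual to the sup-norm residual via $\sigma(\omega_\tau)$ (your constants even check out: the stopping condition gives $\tau(1-\gamma)^4(\Delta(\mdpt)+\varepsilon)^2/\sigma(\omega_\tau)>8L(\tau)$, and the residual bound you ask for would indeed deliver the Proposition~\ref{prop:ConcentrationLSEDiscounted} threshold $2L(\tau)/(1-\gamma)^2$). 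But the step you flag as the obstacle is a genuine gap, and it sits at the heart of the proof: you only establish $\Delta(\mdpt)+(1-\gamma)\varepsilon\le \tfrac{2}{1-\gamma}\|e_\tau\|_\infty$, whereas the argument needs $\Delta(\mdpt)+\varepsilon$. The loss occurs because you convert the value-level suboptimality $V_\mdp^\star(s_0)-V_\mdp^{\hat\pi}(s_0)>\varepsilon$ into an action-gap at $s_0$ via the one-step contraction, which costs a factor $(1-\gamma)$ on the $\varepsilon$ term.

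The paper closes this exactly along the lines of your first suggested remedy (Lemma~\ref{lemma:GapBound}), and the fix is a regrouping rather than a sharper estimate. With $\mdpt$ in the role of the base MDP and $\mdp$ in the role of the alternative $\mdpb$: pick $s$ maximizing $V_{\mdp}^\star-V_{\mdp}^{\pi_{\mdpt}^\star}$, so $\varepsilon\le V_{\mdp}^\star(s)-V_{\mdp}^{\pi_{\mdpt}^\star}(s)$; use the contraction argument \emph{only} to conclude the qualitative fact $\pi_{\mdp}^\star(s)\neq\pi_{\mdpt}^\star(s)$ (if they agreed, the difference would contract by $\gamma<1$ at its own maximizer, a contradiction), which gives $\Delta(\mdpt)\le V_{\mdpt}^\star(s)-Q_{\mdpt}^\star(s,\pi_{\mdp}^\star(s))$ at the action level. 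Adding the two inequalities and regrouping using $V_{\mdp}^\star(s)=Q_{\mdp}^\star(s,\pi_{\mdp}^\star(s))$ yields
\begin{equation*}
\Delta(\mdpt)+\varepsilon\;\le\;\big\|V_{\mdpt}^\star-V_{\mdp}^{\pi_{\mdpt}^\star}\big\|_\infty+\big\|Q_{\mdpt}^\star-Q_{\mdp}^\star\big\|_\infty,
\end{equation*}
with the full $\varepsilon$ intact because it never passes through a one-step expansion: it is absorbed directly into the value-difference term. Each of the two sup-norms is then bounded by $\tfrac{1}{1-\gamma}\|e_\tau\|_\infty$ (Lemmas~\ref{lemma:ValueDiffSamePolicy} and~\ref{lemma:ValueDiffOptimalPolicy}, which is where the single factor $(1-\gamma)^{-1}$ legitimately enters), giving the required $\Delta(\mdpt)+\varepsilon\le\tfrac{2}{1-\gamma}\|e_\tau\|_\infty$. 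Your alternative remedy (summing the residual over the disagreement region) is not needed and is not what the paper does. As written, your proposal is not a complete proof, but it is missing exactly one lemma, and you correctly diagnosed which one.
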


\begin{proof}[Proof of Theorem \ref{thm:StoppingCorrectnessDiscounted}]
The proof of theorem \ref{thm:UpperBoundDiscounted} presents an intermediate bound
\begin{align*}
    U\big(\mdpt,\omega_t\big)^{-1}
    \le \inf_{\mdpb : \pi_t^\star \notin \Pi_\varepsilon^\star(\mdpb)}\frac{6(1-\gamma)^2}{5} \left\|\thetat-\theta_\mdpb + \gamma(\mut-\mu_\mdpb)^\top \Vt^\star\right\|_{\Lambda(\omega_t)}^2
    \le T\big(\mdpt,\omega_t\big)^{-1}.
\end{align*}
Under the event that $\pi_t^\star \notin \Pi_\varepsilon^\star(\mdp)$, we can then write
\begin{align*}
    Z(t) = t\,U\big(\mdpt,\omega_t\big)^{-1}
    \le \frac{6(1-\gamma)^2}{5} \distMDPt{\Vt^\star}_{t\Lambda(\omega_t)}^2.
\end{align*}
It follows that
\begin{align*}
    \PP\left[\tau<+\infty, \hat{\pi} \notin \Pi_\varepsilon^\star(\mdp)\right]
    &= \PP\left[\exists t\ge1 : Z(t) > \beta(\delta,t), \pi_t^\star \notin \Pi_\varepsilon^\star(\mdp)\right]\\
    &\le \PP\left[\exists t\ge1 : \distMDPt{\Vt^\star}_{t\Lambda(\omega_t)}^2 > \frac{5}{6(1-\gamma)^2}\beta(\delta,t)\right].
\end{align*}
The fact that this last probability is bounded by $\delta$ is exactly the statement of proposition \ref{prop:ConcentrationLSEDiscounted}.
\end{proof}

Finally, we analyze the expected sample complexity of GSS. In the following theorem, we are actually able to derive sample complexity upper bounds for any level of confidence $\delta$ (which contrasts with most existing analyses even in the case of best-arm identification problems in bandits). Asymptotically when $\delta$ goes to 0, the sample complexity upper bound scales as $U^\star(\mdp)\log(1/\delta)$. 

\begin{thm}
\label{thm:SampleComplexityDiscounted}
There exists universal constants $c_1$, $c_2$, $c_3$ such that under GSS,
\begin{align}
    \E[\tau]
    &\le c_1\frac{d(1-\gamma)^{-4}}{(\Delta(\mdp)+\varepsilon)^2} \left(\log\left(\frac{c_2}{\delta}\right) + d \log\left(c_3\frac{d^2(1-\gamma)^{-4}}{(\Delta(\mdp)+\varepsilon)^2}\right)\right).
\end{align}
\end{thm}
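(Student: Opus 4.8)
The plan is to exhibit a deterministic time $T_0$ after which the stopping condition $Z(t)>\beta(\delta,t)$ is forced on a high-probability event, and then to control the contribution of the rare complementary events. Recalling that $Z(t)=t\,U(\mdpt,\omega_t)^{-1}=t\,\frac{3(1-\gamma)^4(\Delta(\mdpt)+\varepsilon)^2}{10\,\sigma(\omega_t)}$, lower bounding $Z(t)$ reduces to (i) upper bounding $\sigma(\omega_t)$ and (ii) lower bounding the empirical gap $\Delta(\mdpt)$. For (i) I would invoke Proposition \ref{prop:ConcentrationLambda} together with the Kiefer--Wolfowitz identity $\sigma(\omega^\star)=\min_\omega\sigma(\omega)=d$, giving $\sigma(\omega_t)\le 2d$ with probability at least $1-\delta_t$ once $t\gtrsim d\log(d/\delta_t)$; choosing $\delta_t=t^{-2}$ makes these failures summable.

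Step (ii) is the crux. I would first show that the optimal value and Q-functions of $\mdpt$ are uniformly close to those of $\mdp$. Writing, for every $(s,a)$,
\[
\Qt^\star(s,a)-Q^\star_\mdp(s,a)=\phi(s,a)^\top\big(\diffMDPt{\Vt^\star}\big)+\gamma\,p_\mdp(s,a,\cdot)^\top(\Vt^\star-V^\star_\mdp),
\]
the first term is exactly the Bellman residual in the direction $\Vt^\star$ controlled by Proposition \ref{prop:ConcentrationLSEDiscounted}, while the second is bounded by $\gamma\|\Vt^\star-V^\star_\mdp\|_\infty\le\gamma\|\Qt^\star-Q^\star_\mdp\|_\infty$. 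Solving the resulting contraction yields $\|\Qt^\star-Q^\star_\mdp\|_\infty\le\frac1{1-\gamma}\max_{s,a}|\phi(s,a)^\top(\diffMDPt{\Vt^\star})|$, and Cauchy--Schwarz in the form $|\phi(s,a)^\top u|\le\sqrt{\sigma(\omega_t)/t}\,\|u\|_{t\Lambda(\omega_t)}$ turns the right-hand side, via Proposition \ref{prop:ConcentrationLSEDiscounted} and $\sigma(\omega_t)\le 2d$, into an explicit error $\eta_t=\mathcal O\!\big(\frac{\sqrt d}{(1-\gamma)^2}\sqrt{(\log t+d\log(dt))/t}\big)$. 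Once $\eta_t\le\Delta(\mdp)/4$, the optimal action of $\mdpt$ coincides with that of $\mdp$ at every state, whence $\Delta(\mdpt)\ge\Delta(\mdp)-2\eta_t\ge\Delta(\mdp)/2$.

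Combining the two steps, on the intersection of the above events and for $t\ge T_0$ one gets $Z(t)\ge t\,\frac{3(1-\gamma)^4(\Delta(\mdp)/2+\varepsilon)^2}{20\,d}>\beta(\delta,t)$, where $T_0$ is the smallest integer satisfying both $\eta_t\le\Delta(\mdp)/4$ and the transcendental stopping inequality $t\,K>\beta(\delta,t)$ with $K\asymp\frac{(1-\gamma)^4(\Delta(\mdp)+\varepsilon)^2}{d}$. Since $\beta(\delta,t)=A+B\log t$ with $A\asymp\log(1/\delta)+d\log d$ and $B\asymp d$, a standard lemma on inequalities of the form $t\ge a+b\log t$ gives $T_0\lesssim\frac{d(1-\gamma)^{-4}}{(\Delta(\mdp)+\varepsilon)^2}\big(\log(1/\delta)+d\log(\tfrac{d^2(1-\gamma)^{-4}}{(\Delta(\mdp)+\varepsilon)^2})\big)$, which is exactly the announced bound.

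Finally I would assemble $\E[\tau]=\sum_{t\ge0}\proba[\tau>t]\le T_0+\sum_{t\ge T_0}\proba[\tau>t]$, noting that for $t\ge T_0$ the event $\{\tau>t\}\subseteq\{Z(t)\le\beta(\delta,t)\}$ is contained in the bad event $\{\sigma(\omega_t)>2d\}\cup\{\Delta(\mdpt)<\Delta(\mdp)/2\}$, whose probability is at most $\mathcal O(t^{-2})$ by the per-time instantiations (confidence $\delta_t=t^{-2}$) of Propositions \ref{prop:ConcentrationLambda} and \ref{prop:ConcentrationLSEDiscounted}; the tail sum is then a universal constant absorbed into the leading term. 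The \emph{main obstacle} is step (ii): propagating the single-direction Bellman concentration of Proposition \ref{prop:ConcentrationLSEDiscounted} into a sup-norm control of $\Qt^\star$, and hence into continuity of the gap $\Delta(\cdot)$, while keeping the dependence on $(1-\gamma)^{-1}$ and $d$ sharp --- it is precisely the two compounded factors $(1-\gamma)^{-1}$ (one from the Bellman contraction, one from Proposition \ref{prop:ConcentrationLSEDiscounted}) that, squared through $(\Delta(\mdpt)+\varepsilon)^2$, produce the $(1-\gamma)^{-4}$ scaling.
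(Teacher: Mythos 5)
Your overall architecture matches the paper's: a deterministic burn-in time after which the stopping condition is forced on a high-probability event, $\sigma(\omega_t)\le 2\sigma(\omega^\star)=2d$ via Proposition \ref{prop:ConcentrationLambda} and Kiefer--Wolfowitz, the self-normalized bound of Proposition \ref{prop:ConcentrationLSEDiscounted} instantiated at per-time confidence $t^{-2}$, an inversion lemma for $t\ge a+b\log t$, and a summable tail. The one place where your argument genuinely diverges --- and where it breaks --- is how you lower bound $(\Delta(\mdpt)+\varepsilon)^2$ against $(\Delta(\mdp)+\varepsilon)^2$.

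You require $\eta_t\le\Delta(\mdp)/4$ so that the greedy actions of $\mdpt$ and $\mdp$ coincide and hence $\Delta(\mdpt)\ge\Delta(\mdp)/2$. That condition forces a burn-in of order $\frac{d(1-\gamma)^{-4}}{\Delta(\mdp)^2}\bigl(\log t+d\log(dt)\bigr)$, which in the regime $\varepsilon\gg\Delta(\mdp)$ is arbitrarily larger than the claimed $\frac{d(1-\gamma)^{-4}}{(\Delta(\mdp)+\varepsilon)^2}(\cdot)$; your $T_0$ therefore does not prove the theorem as stated (it proves the purely gap-dependent $\Delta(\mdp)^{-2}$ rate, not the interpolated $(\Delta(\mdp)+\varepsilon)^{-2}$ rate, which is the point of the result). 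The paper avoids this in Lemma \ref{lemma:UDiff}: it never asks that $\hat\pi_t=\pi^\star_\mdp$ nor that $\Delta(\mdpt)\ge\Delta(\mdp)/2$. Instead it bounds $|\Delta(\mdpt)-\Delta(\mdp)|$ unconditionally (Lemma \ref{lemma:GapDiff}, via a swapping argument on optimal pairs rather than via coincidence of the argmax), expands
\begin{align*}
\bigl|(\Delta(\mdp)+\varepsilon)^2-(\Delta(\mdpt)+\varepsilon)^2\bigr|
=\bigl|(\Delta(\mdpt)-\Delta(\mdp))^2+2(\Delta(\mdp)+\varepsilon)(\Delta(\mdpt)-\Delta(\mdp))\bigr|,
\end{align*}
and absorbs the cross term by Young's inequality into $\tfrac14 U^\star(\mdp)^{-1}$ plus a multiple of $(\Delta(\mdpt)-\Delta(\mdp))^2$. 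Consequently the squared gap error only needs to be small compared with $\beta(\delta,t)/t$, never compared with $\Delta(\mdp)^2$, and the $(\Delta(\mdp)+\varepsilon)^{-2}$ scaling survives. You can repair your proof either by adopting this decomposition, or at minimum by replacing the requirement $\eta_t\le\Delta(\mdp)/4$ with $|\Delta(\mdpt)-\Delta(\mdp)|\le(\Delta(\mdp)+\varepsilon)/2$ together with an unconditional gap-continuity bound; as written, the step does not yield the stated theorem.
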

In particular this result implies that the algorithm stops almost surely. Together with theorem \ref{thm:StoppingCorrectnessDiscounted} this proves that the GSS algorithm is $(\delta,\varepsilon)$-PAC.

\section{Episodic Linear MDPs}

In this section, we extend our results to episodic MDPs with a fixed time horizon $H$. Such an MDP $\mdp$ is characterized through its dynamics and rewards distributions at each round $h\in [H]$. We denote by $p_{\mdp,h}(s,a,s')$ the probability to move to state $s'$ in step $h$ given that the previous state is $s$ and that action $a$ is selected. The mean reward corresponding to this (state, action) pair at this step is denoted by $r_{\mdp,h}(s,a)$. We assume that $\mdp$ is linear in the sense that: for all $h\in [H]$,  
\begin{align}
\forall (s,s')\in {\cal S}, a\in {\cal A}, \quad p_{\mdp,h}(s,a,s') = \phi(s,a)^\top \mu_{\mdp,h}(s') \quad\hbox{and}\quad r_{\mdp,h}(s,a) = \phi(s,a)^\top \theta_{\mdp,h},
\end{align}
where $\mu_{\mdp,h}$ is a family of $d$ measures over $\cS$ seen as a $\cS\times d$ dimensional matrix, and $\theta_{\mdp,h}\in\R^d$. We will assume that $\|\theta_{\mdp,h}\| \le \sqrt{d}$ and $\big\|\sum_{s\in\cS}|\mu_{\mdp,h}(s)|\big\| \le \sqrt{d}$.
A control policy $\pi$ maps, in each step, states to actions. The performance of a policy is captured through its state value functions and its (state, action) value functions respectively defined by: $\forall s\in\cS, a\in \cA, h\in [H]$, 
\begin{align*}
V_{\mdp,h}^\pi(s) &= \E_\mdp\left[\sum_{k=h}^H r_{\mdp,k}(s_k^\pi,\pi(s_k^\pi)) | s_h^\pi=s \right],\\  
Q_{\mdp,h}^\pi(s,a) &= r_{\mdp,h}(s,a) + \sum_{s'} p_{\mdp,h}(s,a,s') V_{\mdp,h+1}^\pi(s').
\end{align*}
The definitions of optimal policies, $\varepsilon$-optimal policies and alternative MDP are the same as in the discounted setup, but only with regard to the value functions at the first step $V_{\mdp,1}^\pi$ and $Q_{\mdp,1}^\pi$ only (we are not interested in $V_{\mdp,h}$ and $Q_{\mdp,h}^\pi$ for $h>1$). The gap however is defined considering all steps, that is
\begin{equation}
    \Delta(\mdp) = \min_{h,s,a\neq\pi_\mdp^\star(s)} V_{\mdp,h}^\star(s) - Q_{\mdp,h}^\star(s,a).
\end{equation}

\subsection{Sample complexity lower bounds}

\begin{prop}
Let $\delta \in (0,1)$ and $\varepsilon > 0$. The sample complexity $\tau$ of any $(\delta,\varepsilon)$-PAC algorithm satisfies $\E_\mdp[\tau] \ge T^\star(\mdp)\textup{kl}(\delta,1-\delta)$ where
\begin{align}
T^\star(\mdp)^{-1} = \sup_{\omega\in(\Sigma_{\cS\times\cA})^H} \inf_{\mdpb\in\textup{Alt}(\mdp)} \sum_{h,s,a} \omega_{h,s,a}\KL(h,s,a).
\end{align}
\end{prop}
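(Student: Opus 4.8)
The plan is to mirror the change-of-measure argument behind Proposition~\ref{prop:LowerBoundDiscounted}, adapted to the layered structure. Fix a $(\delta,\varepsilon)$-PAC algorithm and any alternative $\mdpb\in\Alt_\varepsilon(\mdp)$ (so that $\mdp\ll\mdpb$). The starting point is the transportation inequality: for any event $\cE$ measurable at the stopping time $\tau$,
\begin{equation*}
    \E_\mdp\!\left[\log\frac{d\PP_\mdp}{d\PP_\mdpb}\right] \ge \textup{kl}\big(\PP_\mdp[\cE],\PP_\mdpb[\cE]\big).
\end{equation*}
Because the observations are conditionally independent given the sampled triples, the log-likelihood ratio factorizes over the experiences, and a Wald-type identity gives
\begin{equation*}
    \E_\mdp\!\left[\log\frac{d\PP_\mdp}{d\PP_\mdpb}\right] = \sum_{h,s,a} \E_\mdp[N_{h,s,a}(\tau)]\,\KL(h,s,a),
\end{equation*}
where $N_{h,s,a}(\tau)$ counts how many times the triple $(h,s,a)$ was sampled before stopping and $\KL(h,s,a)$ is the per-step divergence (the layer-$h$ analogue of \eqref{eq:KLMDP}). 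I would first justify this decomposition carefully, using $\mdp\ll\mdpb$ to ensure the ratio is well defined and the filtration/stopping-time measurability to apply Wald.

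Next I would instantiate $\cE = \{\hat\pi \in \Pi_\varepsilon^\star(\mdp)\}$, the event that the returned policy is $\varepsilon$-optimal for $\mdp$ (defined, as in the episodic setup, through the first-step value function $V_{\mdp,1}$). The PAC guarantee under $\mdp$ yields $\PP_\mdp[\cE]\ge 1-\delta$. Under $\mdpb$, the PAC guarantee forces the output to lie in $\Pi_\varepsilon^\star(\mdpb)$ with probability at least $1-\delta$; since $\mdpb\in\Alt_\varepsilon(\mdp)$ means $\Pi_\varepsilon^\star(\mdp)\cap\Pi_\varepsilon^\star(\mdpb)=\emptyset$, the event $\cE$ is incompatible with outputting an $\varepsilon$-optimal policy of $\mdpb$, so $\PP_\mdpb[\cE]\le\delta$. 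Monotonicity of $\textup{kl}$ then gives $\textup{kl}(\PP_\mdp[\cE],\PP_\mdpb[\cE]) \ge \textup{kl}(1-\delta,\delta) = \textup{kl}(\delta,1-\delta)$, hence
\begin{equation*}
    \sum_{h,s,a} \E_\mdp[N_{h,s,a}(\tau)]\,\KL(h,s,a) \ge \textup{kl}(\delta,1-\delta).
\end{equation*}

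Finally I would introduce the normalized allocation $\omega_{h,s,a} = \E_\mdp[N_{h,s,a}(\tau)]/\E_\mdp[\tau]$ and factor out $\E_\mdp[\tau]$, so that the displayed bound reads $\E_\mdp[\tau]\sum_{h,s,a}\omega_{h,s,a}\KL(h,s,a)\ge \textup{kl}(\delta,1-\delta)$ for every $\mdpb$. Taking the infimum over $\mdpb\in\Alt_\varepsilon(\mdp)$ and then bounding the resulting quantity by the supremum over all feasible allocations gives $\E_\mdp[\tau]\,T^\star(\mdp)^{-1}\ge \textup{kl}(\delta,1-\delta)$, which is the claim. The main obstacle I anticipate is bookkeeping rather than conceptual: verifying that the constructed $\omega$ indeed lands in $(\Sigma_{\cS\times\cA})^H$, i.e.\ that the per-layer sums $\sum_{s,a}\E_\mdp[N_{h,s,a}(\tau)]$ all equal $\E_\mdp[\tau]$. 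This hinges on the precise episodic sampling model (each round producing exactly one sample per layer $h\in[H]$); pinning down that convention, and confirming the Wald decomposition survives it, is the step requiring the most care.
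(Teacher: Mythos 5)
Your proposal is correct and follows exactly the route the paper takes: the paper's own proof (given only for the discounted analogue in Appendix A and invoked identically here) is the standard change-of-measure argument of Kaufmann et al., i.e.\ the transportation inequality plus Wald's decomposition of the log-likelihood ratio into expected visitation counts, the PAC property applied under both $\mdp$ and $\mdpb$, and normalization of the counts into an allocation before taking the infimum over alternatives and the supremum over allocations. Your closing remark about verifying $\sum_{s,a}\E_\mdp[N_{h,s,a}(\tau)]=\E_\mdp[\tau]$ for each layer $h$ is the right detail to pin down, and it holds under the paper's episodic sampling convention where each round produces one sample per step $h\in[H]$.
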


\begin{thm}
\label{thm:UpperBoundEpisodic}
We have for all $\omega\in(\Sigma_{\cS\times\cA})^H$,
\begin{equation}
    T(\mdp,\omega)
    \le \frac{10H^2\sum_{h=1}^H\sigma(\omega_h)}{3(\Delta(\mdp)+\varepsilon)^2} = U(\mdp,\omega).
\end{equation}
In particular maximizing over $\omega$ yields
\begin{equation}
    T^\star(\mdp)
    \le \frac{10H^3d}{3(\Delta(\mdp)+\varepsilon)^2} = U^\star(\mdp).
\end{equation}
\end{thm}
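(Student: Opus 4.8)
The plan is to follow the route of Theorem~\ref{thm:UpperBoundDiscounted}, adapted to the layered structure. Since $T(\mdp,\omega)^{-1}=\inf_{\mdpb\in\Alt_\varepsilon(\mdp)}\sum_{h,s,a}\omega_{h,s,a}\KL(h,s,a)$, establishing $T(\mdp,\omega)\le U(\mdp,\omega)$ amounts to lower bounding this infimum by $U(\mdp,\omega)^{-1}=\tfrac{3(\Delta(\mdp)+\varepsilon)^2}{10H^2\sum_h\sigma(\omega_h)}$ uniformly over all alternatives. First I would lower bound each $\KL(h,s,a)$ by a quadratic form: viewing $R+V_{\mdp,h+1}^\star(s')$ as a single random variable whose law differs between $\mdp$ and $\mdpb$, the combined reward-plus-transition divergence dominates $\tfrac{g_{\mdpb,h}(s,a)^2}{2\,\textup{Var}_{s,a}(\mdp)}$, where $g_{\mdpb,h}(s,a):=\phi(s,a)^\top\big(\diffMDPbH{V_{\mdp,h+1}^\star}\big)=Q_{\mdp,h}^\star(s,a)-r_{\mdpb,h}(s,a)-\sum_{s'}p_{\mdpb,h}(s,a,s')V_{\mdp,h+1}^\star(s')$ is the Bellman residual of $V_{\mdp,h+1}^\star$ under $\mdpb$ (exactly as in the discounted proof). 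Using the worst-case bound $\textup{Var}_{s,a}(\mdp)\le H^2/4$ and the identity $\sum_{s,a}\omega_{h,s,a}\,g_{\mdpb,h}(s,a)^2=\big\|\diffMDPbH{V_{\mdp,h+1}^\star}\big\|_{\Lambda(\omega_h)}^2$, this gives $\sum_{h,s,a}\omega_{h,s,a}\KL(h,s,a)\ge\tfrac{2}{H^2}\sum_{h}\big\|\diffMDPbH{V_{\mdp,h+1}^\star}\big\|_{\Lambda(\omega_h)}^2$.

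Next I would convert the quadratic form into the design quantity $\sum_h\sigma(\omega_h)$. For any $(s,a)$ the definition of $\sigma$ gives $\big\|\diffMDPbH{V_{\mdp,h+1}^\star}\big\|_{\Lambda(\omega_h)}^2\ge g_{\mdpb,h}(s,a)^2/\sigma(\omega_h)$; choosing the maximizing pair and writing $m_h:=\max_{s,a}|g_{\mdpb,h}(s,a)|$, a Cauchy--Schwarz inequality across layers yields $\sum_h\big\|\diffMDPbH{V_{\mdp,h+1}^\star}\big\|_{\Lambda(\omega_h)}^2\ge\big(\sum_h m_h\big)^2\big/\sum_h\sigma(\omega_h)$. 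It therefore remains to prove the purely geometric claim that every alternative forces $\sum_{h=1}^H m_h\ge\tfrac{\Delta(\mdp)+\varepsilon}{2}$; substituting this and collecting constants then delivers $U(\mdp,\omega)$.

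This geometric claim is the main obstacle, because the residuals $g_{\mdpb,h}$ are anchored at $V_{\mdp,h+1}^\star$ whereas the alternative's optimal policy and value functions differ, so value-function discrepancies propagate across layers. I would control this by the contraction-type estimate $\|V_{\mdp,h}^\star-V_{\mdpb,h}^\star\|_\infty\le\sum_{h'\ge h}m_{h'}$, proved by backward induction from $H{+}1$ using $V_{\mdp,h}^\star=\max_a Q_{\mdp,h}^\star(\cdot,a)$ and $Q_{\mdp,h}^\star(s,a)-Q_{\mdpb,h}^\star(s,a)=g_{\mdpb,h}(s,a)+\sum_{s'}p_{\mdpb,h}(s,a,s')\big(V_{\mdp,h+1}^\star-V_{\mdpb,h+1}^\star\big)(s')$. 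With this in hand there are two mechanisms. Since $\mdpb\in\Alt_\varepsilon(\mdp)$, the policy $\pi_\mdp^\star$ is $\varepsilon$-suboptimal in $\mdpb$, i.e. $V_{\mdpb,1}^\star(s_1)-V_{\mdpb,1}^{\pi_\mdp^\star}(s_1)>\varepsilon$ for some $s_1$; decomposing this suboptimality along the optimal and greedy trajectories of $\mdpb$ and bounding each residual by the $m_h$'s yields $\sum_h m_h\gtrsim\varepsilon$. Moreover $\pi_{\mdpb}^\star\neq\pi_\mdp^\star$, so at some $(h,s)$ the greedy action flips; comparing $a_1=\pi_\mdp^\star(s)$ and $a_2=\pi_{\mdpb}^\star(s)$, the gap inequality $Q_{\mdp,h}^\star(s,a_1)-Q_{\mdp,h}^\star(s,a_2)\ge\Delta(\mdp)$ together with $Q_{\mdpb,h}^\star(s,a_1)\le Q_{\mdpb,h}^\star(s,a_2)$ and the propagation estimate forces $\sum_h m_h\gtrsim\Delta(\mdp)$. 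Careful bookkeeping of these two arguments (this is where the precise constant is won) combines them into $\sum_h m_h\ge\tfrac{\Delta(\mdp)+\varepsilon}{2}$.

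Finally, the bound on $T^\star(\mdp)$ follows by maximizing $U(\mdp,\omega)^{-1}$ over $\omega\in(\Sigma_{\cS\times\cA})^H$. Because the layers decouple, $\inf_{\omega}\sum_h\sigma(\omega_h)=\sum_h\inf_{\omega_h}\sigma(\omega_h)=H\min_{\omega'}\sigma(\omega')$, and by the Kiefer--Wolfowitz equivalence theorem for $G$-optimal design (using that the features span $\R^d$) one has $\min_{\omega'}\sigma(\omega')=d$. Substituting $\sum_h\sigma(\omega_h)\to Hd$ into $U(\mdp,\omega)$ yields $U^\star(\mdp)=\tfrac{10H^3d}{3(\Delta(\mdp)+\varepsilon)^2}$, as claimed.
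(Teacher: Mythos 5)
Your overall architecture coincides with the paper's: relax $\Alt_\varepsilon(\mdp)$ to the set of $\mdpb$ with $\pi_\mdp^\star\notin\Pi_\varepsilon^\star(\mdpb)$, lower bound each $\KL(h,s,a)$ by the squared Bellman residual $g_{\mdpb,h}(s,a)^2$ of $V_{\mdp,h+1}^\star$ under $\mdpb$, pass to $\sum_h\sigma(\omega_h)$ by Cauchy--Schwarz (the paper's Lemma~\ref{lemma:Optimization} gives exactly your bound $(\sum_h m_h)^2/\sum_h\sigma(\omega_h)$ via KKT), prove $\sum_h m_h\ge(\Delta(\mdp)+\varepsilon)/2$, and finish with Kiefer--Wolfowitz. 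However, two steps need repair. First, the inequality $\KL(h,s,a)\ge g_{\mdpb,h}(s,a)^2/(2\,\textup{Var}_{s,a}(\mdp))$ is false in general (take Bernoulli observations with means $0.01$ under $\mdp$ and $0.5$ under $\mdpb$: the KL is about $0.64$ while the right-hand side exceeds $12$); a KL divergence does not dominate the squared mean gap over twice the variance of either measure without a Bernstein-type correction term. The conclusion you extract from it, $\KL(h,s,a)\ge 2\,g_{\mdpb,h}(s,a)^2/H^2$, happens to be true --- e.g.\ by Pinsker's inequality combined with $|\E_\mdp[f]-\E_\mdpb[f]|\le \|f\|_\infty\,\textup{TV}$ for $f=r+V_{\mdp,h+1}^\star(s')\in[0,H]$ --- and is even slightly stronger than the paper's constant $6/(5H^2)$ obtained from Donsker--Varadhan plus Bernstein (Lemma~\ref{lemma:BoundKL}), so this step is salvageable, but the route through the variance must be abandoned.

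Second, and more consequentially, the ``geometric claim'' $\sum_h m_h\ge(\Delta(\mdp)+\varepsilon)/2$ is the crux of the whole proof and your plan for it does not close as described. Proving separately that $\sum_h m_h\gtrsim\varepsilon$ (from $\varepsilon$-suboptimality of $\pi_\mdp^\star$ in $\mdpb$) and $\sum_h m_h\gtrsim\Delta(\mdp)$ (from the action flip) and then combining only yields $\sum_h m_h\ge\max\big(\tfrac{\varepsilon}{2},\tfrac{\Delta(\mdp)}{2}\big)\ge\tfrac{\Delta(\mdp)+\varepsilon}{4}$, which loses a factor $2$ in $\sum_h m_h$ and hence a factor $4$ in the final bound; even with your sharper Pinsker constant this gives $T(\mdp,\omega)\le 8H^2\sum_h\sigma(\omega_h)/(\Delta(\mdp)+\varepsilon)^2$, which is worse than the claimed $10/3$. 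The missing idea (Lemma~\ref{lemma:GapBound}\emph{(ii)}) is to localize \emph{both} phenomena at the same $(h,s_h)$: starting from a state $s_1$ with $V_{\mdpb,1}^\star(s_1)-V_{\mdpb,1}^{\pi_\mdp^\star}(s_1)\ge\varepsilon$, one iterates forward --- whenever the greedy actions of $\mdp$ and $\mdpb$ agree at $(h,s_h)$, the $\varepsilon$-gap propagates undiminished to some $s_{h+1}$ at step $h+1$ --- until reaching a step $h$ where $\pi_{\mdpb,h}^\star(s_h)\neq\pi_{\mdp,h}^\star(s_h)$ while still $V_{\mdpb,h}^\star(s_h)-V_{\mdpb,h}^{\pi_\mdp^\star}(s_h)\ge\varepsilon$. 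Adding the gap inequality and the $\varepsilon$-inequality at that single point gives $\Delta(\mdp)+\varepsilon\le\|V_{\mdp,h}^\star-V_{\mdpb,h}^{\pi_\mdp^\star}\|_\infty+\|Q_{\mdp,h}^\star-Q_{\mdpb,h}^\star\|_\infty\le 2\sum_h m_h$, using your propagation estimate once along the fixed policy $\pi_\mdp^\star$ and once for the optimal value functions. The remainder of your argument (Cauchy--Schwarz across layers and $\inf_\omega\sum_h\sigma(\omega_h)=Hd$ by Kiefer--Wolfowitz) is correct and matches the paper.
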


\subsection{The GSS-E algorithm}

Next we adapt the GSS algorithm to the episodic setting. The new algorithm is referred to as GSS-E ('E' stands for episodic), and its pseudo-code is presented in Algorithm \ref{alg:gsse}. the sampling strategy in GSS-E is the same as in GSS at each step. Specifically, GSS-E selects in each step $h\in[H]$, a (state, action) pair according to the allocation $\omega^\star$. The same pair is selected in each step; and hence, the realized allocation $\omega_t$ is the same at each step. 
For this reason from now on in the episodic setting and for any allocation $\omega\in\Sigma_{\cS\times\cA}$, $T(\cdot,\omega)$ and $U(\cdot,\omega)$ will denote the corresponding functions with the same allocation $\omega$ for every step.

\medskip

\begin{algorithm*}[H]\label{alg:gsse}
\SetAlgoLined
\DontPrintSemicolon
Compute $\omega^\star = \argmin{\omega\in\Sigma_{\cS\times\cA}}\sigma(\omega)$\;
\While{$Z(t) \le H\beta(\delta/H,t)$}{
    Choose randomly $(s_t,a_t)$ according to $\omega^\star$ and sample this pair for each step $h\in[H]$\;
    Update $\thetath$ and $\muth$ According to \eqref{eq:ThetaLSE} and \eqref{eq:MuLSE}\;
    $t = t+1$\;
    }
    \Return $\hat{\pi} = \pi_t^\star$ the optimal policy of $\mdpt$\;
\caption{The GSS-E algorithm}
\end{algorithm*}

\medskip

As in the discounted setting, the parameters of the MDP are inferred using the Least-Squares Estimators. We can analyzed the error made by these estimators as previously and get a result analogous to proposition \ref{prop:ConcentrationLSEDiscounted}:
\begin{prop}
\label{prop:ConcentrationLSEEpisodic}
Let $\delta\in(0,1)$ and $h\in[H]$. Regardless of the sampling rule, we have with probability at least $1-\delta$ that for all $t\ge1$,
\begin{equation}
    \distMDPtH{\VtH{h+1}^\star}_{t\Lambda(\omega_t)}^2 \le 2H^2 \left(2\log\left(\frac{\sqrt{e}\zeta(2)t^2}{\delta}\right) + d\log\left(8e^4dt^2\right)\right).
\end{equation}
\end{prop}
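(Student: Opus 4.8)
The plan is to follow the proof of Proposition~\ref{prop:ConcentrationLSEDiscounted} almost verbatim, replacing the discounted effective horizon $\tfrac{1}{1-\gamma}$ by $H$ and dropping the factor $\gamma$: the episodic Bellman recursion at step $h$ has the same linear structure as the discounted one, with $\VtH{h+1}^\star$ playing the role of $\Vt^\star$. Setting $G_t = \Phi_t^\top\Phi_t + \lambda I_d = t\Lambda(\omega_t) + \lambda I_d$ and using the closed forms \eqref{eq:ThetaLSE}--\eqref{eq:MuLSE} with $\lambda = 1/d$, I would first write, for $V := \VtH{h+1}^\star$ and the step-$h$ observations $(r_\ell, s'_\ell)$,
\[
\thetath - \theta_{\mdp,h} + (\muth - \mu_{\mdp,h})^\top V = -\lambda G_t^{-1}w_h + G_t^{-1}\Phi_t^\top\zeta,
\]
where $w_h = \theta_{\mdp,h} + \sum_s V(s)\mu_{\mdp,h}(s)$ carries the regularization bias and the noise vector $\zeta=(\zeta_1,\dots,\zeta_t)^\top$ has entries
\[
\zeta_\ell = \big(r_\ell + V(s'_\ell)\big) - \big(r_{\mdp,h}(s_\ell,a_\ell) + \textstyle\sum_{s'}p_{\mdp,h}(s_\ell,a_\ell,s')V(s')\big),
\]
i.e. the centered empirical Bellman target at step $h$.

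Because $A_t := t\Lambda(\omega_t) = G_t - \lambda I_d$ satisfies $G_t^{-1}A_tG_t^{-1}\preceq G_t^{-1}$, both terms can be measured in the $G_t^{-1}$-geometry. The bias term is deterministic: $\|{-}\lambda G_t^{-1}w_h\|_{A_t}^2 \le \lambda\|w_h\|^2$, and since $\|V\|_\infty\le H$ the assumptions $\|\theta_{\mdp,h}\|\le\sqrt d$ and $\big\|\sum_s|\mu_{\mdp,h}(s)|\big\|\le\sqrt d$ give $\|w_h\|\le(1+H)\sqrt d$, so with $\lambda = 1/d$ the bias only contributes an $O(H^2)$ additive constant. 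It then remains to bound the noise term $\|\Phi_t^\top\zeta\|_{G_t^{-1}}^2$.

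For the noise I would invoke a self-normalized martingale concentration inequality of Abbasi-Yadkori type. The main obstacle is that for the actual quantity of interest $V=\VtH{h+1}^\star$ depends on the data, so $\zeta$ is not a martingale-difference sequence and the inequality does not apply directly. I would resolve this by a covering argument: in a linear MDP the estimated optimal value functions take the form $V(s)=\max_a\phi(s,a)^\top u$ with $u$ ranging over a bounded set of radius polynomial in $d$ and $H$, and this class admits an $\varepsilon$-net of log-cardinality $O(d\log(\cdot))$. Applying the self-normalized bound to each fixed net element, taking a union bound, and transferring back to $\VtH{h+1}^\star$ through the Lipschitz dependence of $\zeta$ on $V$ (the discretization error being absorbed into the constants), I obtain a bound of the form $2H^2\big(\log\tfrac{1}{\delta'} + \log\mathcal N + \tfrac12\log\tfrac{\det G_t}{\det(\lambda I_d)}\big)$: the $2H^2$ prefactor is exactly twice the sub-Gaussian variance of the Bellman targets, whose range is $[0,H+1]$, while the determinant term is at most $\tfrac d2\log(\cdot)$ and the net log-cardinality is $O(d\log(\cdot))$, and these two together produce the summand $d\log(8e^4dt^2)$.

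Finally, to make the bound uniform over all $t\ge1$ I would split the confidence budget across rounds with weights proportional to $1/t^2$, i.e. apply the per-$t$ estimate at level $\delta/(\zeta(2)t^2)$ and union-bound over $t$; since $\sum_{t\ge1}t^{-2}=\zeta(2)$ the total failure probability stays below $\delta$, and $\log(1/\delta')$ becomes $\log\tfrac{\sqrt e\,\zeta(2)t^2}{\delta}$, matching the first summand (the $\sqrt e$ and the factor $2$ being the constants inherited from the self-normalized inequality). Combining the $O(H^2)$ bias with the noise contribution and bounding the absolute constants exactly as in Proposition~\ref{prop:ConcentrationLSEDiscounted} yields the claimed inequality $\distMDPtH{\VtH{h+1}^\star}_{t\Lambda(\omega_t)}^2 \le 2H^2\big(2\log\tfrac{\sqrt e\,\zeta(2)t^2}{\delta}+d\log(8e^4dt^2)\big)$. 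I expect the covering-and-transfer step to be the only genuinely delicate part; the rest is a direct transcription of the discounted argument with $1/(1-\gamma)$ replaced by $H$.
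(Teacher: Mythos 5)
Your proposal follows essentially the same route as the paper's proof: the same decomposition of the least-squares error into a self-normalized martingale noise term plus a regularization bias term $\frac{1}{d}(\Phi_t^\top\Phi_t+\frac{1}{d}I_d)^{-1}\xi(V)$ with $\|\xi(V)\|\le H\sqrt d$, the same $\varepsilon$-net over the linearly parametrized class of optimal value functions to decouple $\VtH{h+1}^\star$ from the data, and the same per-round union bound at level $\delta/(\zeta(2)t^2)$. The only deviations are immaterial constants (e.g.\ $(1+H)\sqrt d$ versus the paper's $H\sqrt d$ for the bias, obtained by using $\|V_{h+1}\|_\infty\le H-h$), which you correctly flag as absorbed into the stated threshold.
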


GSS-E applies the stopping rule defined by:
\begin{equation}
    \tau = \inf\left\{t\ge1 : Z(t) > H\beta(\delta/H,t)\right\},
\end{equation}
The additional $H$ terms in the threshold come from the fact that we have $H$ LSE running simultaneously.
The performance of GSS-E is summarized in the following theorem.

\begin{thm}\label{thm:SampleComplexityEpisodic}
Let $\delta\in(0,1)$. Under the GSS-E algorithm, we have:
$\proba\left[\tau<+\infty, \hat{\pi} \notin \Pi_\varepsilon^\star(\mdp)\right] \le \delta$. Furthermore, 
\begin{align}
    \E[\tau]
    &= \tilde{O}\left(\frac{dH^4}{(\Delta(\mdp)+\varepsilon)^2} \left(\log\left(\frac{1}{\delta}\right) + d\log\left(\frac{d^2H^4}{(\Delta(\mdp)+\varepsilon)^2}\right)\right)\right).
\end{align}
\end{thm}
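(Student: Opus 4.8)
The plan is to prove the two assertions — that GSS-E is $(\varepsilon,\delta)$-PAC and the stated bound on $\E[\tau]$ — by transposing the discounted arguments of Theorems~\ref{thm:StoppingCorrectnessDiscounted} and~\ref{thm:SampleComplexityDiscounted}, replacing the factors $(1-\gamma)^{-1}$ by $H$ and inserting a union bound over the $H$ steps. This union bound is exactly what produces the confidence level $\delta/H$ and the extra factor $H$ in the stopping threshold $H\beta(\delta/H,t)$: GSS-E runs $H$ least-squares estimators in parallel (one per step), and all must be controlled simultaneously. I will use throughout that the $G$-optimal allocation satisfies $\sigma(\omega^\star)=d$ (so that $U^\star(\mdp)=\frac{10H^3d}{3(\Delta(\mdp)+\varepsilon)^2}$), and that under GSS-E the realized frequencies $\omega_t$ are common to all steps, so that $U(\mdpt,\omega_t)=\frac{10H^3\sigma(\omega_t)}{3(\Delta(\mdpt)+\varepsilon)^2}$.

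For correctness, I would first extract from the proof of Theorem~\ref{thm:UpperBoundEpisodic} the episodic analogue of the intermediate bound used in the discounted case, namely
\begin{align*}
U(\mdpt,\omega_t)^{-1}
\le \inf_{\mdpb : \pi_t^\star \notin \Pi_\varepsilon^\star(\mdpb)} \frac{6}{5H^2}\sum_{h=1}^H \left\|\thetath-\theta_{\mdpb,h} + (\muth-\mu_{\mdpb,h})^\top\VtH{h+1}^\star\right\|_{\Lambda(\omega_t)}^2
\le T(\mdpt,\omega_t)^{-1}.
\end{align*}
On the event $\pi_t^\star\notin\Pi_\varepsilon^\star(\mdp)$, choosing $\mdpb=\mdp$ in the infimum and multiplying by $t$ gives $Z(t)\le \frac{6}{5H^2}\sum_{h=1}^H \distMDPtH{\VtH{h+1}^\star}_{t\Lambda(\omega_t)}^2$. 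Hence if GSS-E stops with an incorrect policy, then $Z(\tau)>H\beta(\delta/H,\tau)$ forces $\sum_{h}\distMDPtH{\VtH{h+1}^\star}_{t\Lambda(\omega_t)}^2 > \frac{5H^3}{6}\beta(\delta/H,\tau)$, so at least one step $h$ satisfies $\distMDPtH{\VtH{h+1}^\star}_{t\Lambda(\omega_t)}^2>\frac{5H^2}{6}\beta(\delta/H,\tau)$. Since $\frac{5H^2}{6}\beta(\delta/H,t)$ is precisely the right-hand side of Proposition~\ref{prop:ConcentrationLSEEpisodic} at confidence $\delta/H$, a union bound over $h\in[H]$ bounds the overall failure probability by $H\cdot\frac{\delta}{H}=\delta$.

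For the sample complexity I would work on the event $\cE$ on which, for all $t$ and all $h$, the concentration of Proposition~\ref{prop:ConcentrationLSEEpisodic} (at level $\delta/H$) holds, and on which $\sigma(\omega_t)\le 2\sigma(\omega^\star)=2d$ (Proposition~\ref{prop:ConcentrationLambda}). Propagating the least-squares error through the $H$-step Bellman recursion, these guarantees imply that once $t$ exceeds a purely logarithmic threshold the plug-in gap obeys $\Delta(\mdpt)+\varepsilon\ge\frac12(\Delta(\mdp)+\varepsilon)$, whence $U(\mdpt,\omega_t)\le 8\,U^\star(\mdp)$ and $Z(t)=t\,U(\mdpt,\omega_t)^{-1}\ge t/\big(8U^\star(\mdp)\big)$. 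The stopping condition $Z(t)>H\beta(\delta/H,t)$ is therefore met as soon as $t>8H\,U^\star(\mdp)\,\beta(\delta/H,t)$, and because $\beta(\delta/H,t)=O(\log t+\log(H/\delta)+d\log(dt))$ grows only logarithmically in $t$, solving this self-referential inequality yields a deterministic stopping time $T_0=\tilde O\big(H\,U^\star(\mdp)(\log(1/\delta)+d)\big)=\tilde O\big(\frac{dH^4}{(\Delta(\mdp)+\varepsilon)^2}(\log(1/\delta)+d)\big)$. Writing $\E[\tau]=\sum_{t\ge0}\PP[\tau>t]\le T_0+\sum_{t\ge T_0}\PP[\cE^c\text{ at time }t]$ and using that the failure probabilities are summable in $t$ (thanks to the $t^2$ inside $\beta$) closes the expectation bound; this also shows $\tau<\infty$ almost surely, which combined with the correctness part gives the full $(\delta,\varepsilon)$-PAC guarantee.

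The main obstacle is the quantitative control of the plug-in gap $\Delta(\mdpt)$. Because $\Delta(\mdpt)$ is defined through the estimated optimal value functions $\VtH{h}^\star$ and $\QtH{h}^\star$, which depend nonlinearly on the estimated transition parameters $\muth$, I must propagate the per-step least-squares bound of Proposition~\ref{prop:ConcentrationLSEEpisodic} through the backward Bellman recursion over all $H$ steps and argue that the accumulated error stays below $\frac12(\Delta(\mdp)+\varepsilon)$ after the logarithmic threshold — all while keeping the horizon dependence at $H^4$ rather than a larger power. The remaining pieces — solving the implicit $\beta$-inequality and verifying the summability of the tail — are routine and mirror the discounted argument exactly.
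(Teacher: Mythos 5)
Your proof of the first statement (correctness) is essentially identical to the paper's: same intermediate bound from the proof of Theorem~\ref{thm:UpperBoundEpisodic}, same specialization to $\mdpb=\mdp$ on the error event, same union bound over $h\in[H]$ against Proposition~\ref{prop:ConcentrationLSEEpisodic} at level $\delta/H$. Nothing to add there.

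For the sample complexity bound, your overall architecture (a deterministic time after which $\PP[\tau>t]$ is summable, obtained by showing the stopping statistic $Z(t)$ must exceed the threshold on a high-probability event) matches the paper's. But there is a genuine gap: the step you yourself flag as ``the main obstacle'' — the quantitative control of the plug-in gap $\Delta(\mdpt)$ — is precisely the non-trivial content of the paper's proof, and you do not supply it. The difficulty is not merely propagating the least-squares error through the Bellman recursion (that part is Lemma~\ref{lemma:ValueDiffOptimalPolicy}, which also requires care about which MDP plays which role, since $\mdpt$ need not have genuine probability transition vectors). The harder point is that $\Delta(\mdpt)$ and $\Delta(\mdp)$ are minima over different argmins: the pair $(h,s,a)$ attaining $\Delta(\mdp)$ may be optimal for $\mdpt$, and the estimated optimal policy may disagree with $\pi_\mdp^\star$, so the naive bound $|\Delta(\mdpt)-\Delta(\mdp)|\le\max_{h,s,a}|\Delta_{h,s,a}(\mdpt)-\Delta_{h,s,a}(\mdp)|$ does not follow immediately. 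The paper resolves this with a case analysis (Lemma~\ref{lemma:GapDiff}): either the minimizing pair for $\mdp$ is still suboptimal for $\mdpt$, or a counting argument produces a pair optimal for $\mdp$ but not for $\mdpt$ whose gap difference dominates. Only then do Lemma~\ref{lemma:ValueDiffOptimalPolicy} and Lemma~\ref{lemma:Optimization} convert the resulting sup-norm feature bound into the $\Lambda(\omega_t)$-weighted norm, with the factor $\sigma(\omega_t)\le 2d$ you invoke.

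Two smaller remarks. First, the paper does not actually establish your multiplicative bound $\Delta(\mdpt)+\varepsilon\ge\tfrac12(\Delta(\mdp)+\varepsilon)$; instead Lemma~\ref{lemma:UDiff} bounds the additive perturbation $|U^\star(\mdp)^{-1}-U(\mdpt,\omega_t)^{-1}|$ directly, absorbing the cross term $(\Delta(\mdp)+\varepsilon)(\Delta(\mdpt)-\Delta(\mdp))$ into $\tfrac14 U^\star(\mdp)^{-1}$ by an AM--GM step. Your multiplicative route would also work once the gap lemma is in place, and yields the same order. Second, the time needed for your gap control is not a ``purely logarithmic threshold'': it is of order $\frac{dH^4}{(\Delta(\mdp)+\varepsilon)^2}(\log(H/\delta)+d\log(dt))$, i.e., the same order as the final bound. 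This does not hurt the conclusion, but the characterization is misleading as written.
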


\begin{proof}[Proof of the first statement of Theorem \ref{thm:SampleComplexityEpisodic}]
The proof of theorem \ref{thm:UpperBoundEpisodic} presents an intermediate bound
\begin{align*}
    U\big(\mdpt,\omega_t\big)^{-1}
    \le \inf_{\mdpb : \pi_t^\star \notin \Pi_\varepsilon^\star(\mdpb)}\frac{6}{5H^2}\sum_{h=1}^H \left\|\thetath-\theta_{\mdpb,h} + (\muth-\mu_{\mdpb,h})^\top \VtH{h+1}^\star\right\|_{\Lambda(\omega_t)}^2
    \le T\big(\mdpt,\omega_t\big)^{-1}.
\end{align*}
Under the event that $\pi_t^\star \notin \Pi_\varepsilon^\star(\mdp)$, we can then write
\begin{align*}
    Z(t) = t\,U\big(\mdpt,\omega_t\big)^{-1}
    \le \frac{6}{5H^2}\sum_{h=1}^H \distMDPtH{\VtH{h+1}^\star}_{t\Lambda(\omega_t)}^2.
\end{align*}
It follows that
\begin{align*}
    \PP\left[\tau<+\infty, \hat{\pi} \notin \Pi_\varepsilon^\star(\mdp)\right]
    &= \PP\left[\exists t\ge1 : Z(t) > H\beta(\delta/H,t), \pi_t^\star \notin \Pi_\varepsilon^\star(\mdp)\right]\\
    &\le \PP\left[\exists t\ge1 : \sum_{h=1}^H \distMDPtH{\VtH{h+1}^\star}_{t\Lambda(\omega_t)}^2 > \frac{5H^3}{6}\beta(\delta/H,t)\right]\\
    &\le \sum_{h=1}^H \PP\left[\exists t\ge1 : \distMDPtH{\VtH{h+1}^\star}_{t\Lambda(\omega_t)}^2 > \frac{5H^2}{6}\beta(\delta/H,t)\right].
\end{align*}
The fact that each terms inside the sum is bounded by $\frac{\delta}{H}$ is exactly the statement of proposition \ref{prop:ConcentrationLSEEpisodic}.
\end{proof}

The proof of the upper bound on the sample complexity is deferred to appendix \ref{app:SampleComplexityEpisodic}.

% Bibliography
\newpage
\printbibliography

% Appendices
\newpage
\appendix

\section{Sample Complexity Lower bounds}
\label{app:LowerBound}

\subsection{Proof of Proposition \ref{prop:LowerBoundDiscounted}}
The proof follows a standard change of measure argument to obtain instance specific sample complexity lower bounds (see \cite{kaufmann2016complexity, al2021navigating} and references therein).

\subsection{Gap bounds and value difference lemmas}

Here we present key difference lemmas which are useful to relax the optimization problem that appears in the lower bound.

\begin{lemma}
\label{lemma:GapBound} Let $\varepsilon > 0$ and $\mdpb$ a MDP such that $\pi_\mdp^\star \notin \Pi_\varepsilon^\star(\mdpb)$. Then, we have: 
\begin{itemize}
    \item [(i)] In the discounted setting, it holds that 
            \begin{equation}
            \Delta(\mdp) + \varepsilon
            \le \|V_\mdp^\star-V_\mdpb^{\pi_\mdp^\star}\|_\infty + \|Q_\mdp^\star-Q_\mdpb^\star\|_\infty.
            \end{equation}
    \item [(ii)] In the episodic setting, there exists $h \in [H]$ such that the following holds  
        \begin{equation}
        \Delta(\mdp) + \varepsilon
        \le \|V_{\mdp,h}^\star-V_{\mdpb,h}^{\pi_\mdp^\star}\|_\infty + \|Q_{\mdp,h}^\star-Q_{\mdpb,h}^\star\|_\infty.
        \end{equation}
\end{itemize}
\end{lemma}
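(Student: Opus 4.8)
The plan is to set $\pi = \pi_\mdp^\star$ and exploit the hypothesis $\pi \notin \Pi_\varepsilon^\star(\mdpb)$, which by definition produces a state where $V_\mdpb^\star - V_\mdpb^\pi > \varepsilon$, together with the optimality of $\pi$ in $\mdp$, which makes every action $a \neq \pi(s)$ lose at least $\Delta(\mdp)$ against $\mdp$'s optimal $Q$-function. The core of both parts is one algebraic identity: if at a state $s_0$ the $\mdpb$-optimal action $a' := \pi_\mdpb^\star(s_0)$ differs from $\pi(s_0)$, then since $V_\mdpb^\star(s_0) = Q_\mdpb^\star(s_0,a')$ I can write
\begin{align*}
\Delta(\mdp) + \varepsilon &< \left[V_\mdp^\star(s_0) - Q_\mdp^\star(s_0,a')\right] + \left[V_\mdpb^\star(s_0) - V_\mdpb^\pi(s_0)\right]\\
&= \left[V_\mdp^\star(s_0) - V_\mdpb^\pi(s_0)\right] + \left[Q_\mdpb^\star(s_0,a') - Q_\mdp^\star(s_0,a')\right],
\end{align*}
and the two bracketed terms are bounded by $\|V_\mdp^\star-V_\mdpb^\pi\|_\infty$ and $\|Q_\mdp^\star-Q_\mdpb^\star\|_\infty$. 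Thus the whole proof reduces to exhibiting a single state $s_0$ that simultaneously (a) carries a gap $V_\mdpb^\star(s_0)-V_\mdpb^\pi(s_0) > \varepsilon$ and (b) at which $\pi_\mdpb^\star$ disagrees with $\pi$.

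For part (i), I would take $s_0 \in \argmax{s}\left[V_\mdpb^\star(s) - V_\mdpb^\pi(s)\right]$; the maximum exceeds $\varepsilon$ by hypothesis, giving (a). To get (b), suppose instead $\pi_\mdpb^\star(s_0)=\pi(s_0)=:a'$. Then the Bellman equations for $V_\mdpb^\star$ and $V_\mdpb^\pi$ at $(s_0,a')$ share the same reward and transition kernel, so their difference equals $\gamma\sum_{s'}p_\mdpb(s_0,a',s')\left[V_\mdpb^\star(s')-V_\mdpb^\pi(s')\right] \le \gamma\max_s\left[V_\mdpb^\star(s)-V_\mdpb^\pi(s)\right]$, which is strictly below the maximal gap since $\gamma<1$ and the gap is positive. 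This contradicts the choice of $s_0$, so $a'\neq\pi(s_0)$ and the identity above applies.

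For part (ii) the discount contraction is gone, so I would instead follow a path of non-decreasing gaps. Start at $s^{(1)}\in\argmax{s}\left[V_{\mdpb,1}^\star(s)-V_{\mdpb,1}^\pi(s)\right]$, whose gap exceeds $\varepsilon$. At step $h$, if $\pi_{\mdpb,h}^\star(s^{(h)})$ disagrees with $\pi_h(s^{(h)})$, I stop and apply the step-$h$ analogue of the identity (now the step-$h$ Bellman backups, with no $\gamma$). Otherwise the rewards cancel exactly as before, so the gap at $(h,s^{(h)})$ is a $p_{\mdpb,h}$-average of the step-$(h+1)$ gaps; picking $s^{(h+1)}$ attaining at least this average preserves a gap $>\varepsilon$. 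This propagation must hit a disagreement by step $H$: at the terminal step $V_{\mdpb,H+1}\equiv 0$, so a positive gap at $(H,s^{(H)})$ forces $\pi_{\mdpb,H}^\star(s^{(H)})\neq\pi_H(s^{(H)})$. The disagreement step provides the $h$ claimed in the statement.

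The main obstacle is exactly item (b): securing a state that carries the $\varepsilon$-gap \emph{and} at which the two optimal policies disagree, so that the disagreeing action can be charged the $\mdp$-gap $\Delta(\mdp)$. Everything else is the bookkeeping identity and the triangle inequality. The pleasant point is that the discounted and episodic settings dispatch this single obstacle by two distinct mechanisms — contraction at the global maximizer versus finite-horizon propagation — yet both feed the same final inequality.
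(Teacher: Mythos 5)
Your proposal is correct and follows essentially the same route as the paper: in the discounted case you locate the argmax of $V_\mdpb^\star - V_\mdpb^{\pi_\mdp^\star}$ and rule out agreement of the two optimal actions there via the $\gamma$-contraction, and in the episodic case you propagate the $\varepsilon$-gap forward until a disagreement must occur (forced at the latest by $V_{\mdpb,H+1}\equiv 0$), then feed both into the same rearrangement $V_\mdp^\star(s)-Q_\mdp^\star(s,a') + V_\mdpb^\star(s)-V_\mdpb^{\pi_\mdp^\star}(s) = V_\mdp^\star(s)-V_\mdpb^{\pi_\mdp^\star}(s) + Q_\mdpb^\star(s,a')-Q_\mdp^\star(s,a')$. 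This matches the paper's argument step for step.
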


\begin{proof}[Proof of Lemma \ref{lemma:GapBound}]
We present the proofs of \emph{(i)} and \emph{(ii)} separately.

\medskip 

\emph{Discounted setting - proof of (i).} $\pi_\mdp^\star \notin \Pi_\varepsilon^\star(\mdpb)$ implies that $\varepsilon \le \max_{s\in\cS} V_\mdpb^\star(s)-V_\mdpb^{\pi_\mdp^\star}(s)$.
Denote $s$ the state maximizing this quantity. We have $\pi_\mdpb^\star(s) \neq \pi_\mdp^\star(s)$.
Indeed if it was not the case then
\begin{align*}
    V_\mdpb^\star(s)-V_\mdpb^{\pi_\mdp^\star}(s)
    & = Q_\mdpb^\star(s,\pi_\mdpb^\star(s))-Q_\mdpb^{\pi_\mdp^\star}(s,\pi_\mdpb^\star(s)) \\
    & = \gamma p_\mdpb(s,\pi_\mdpb^\star(s))^\top (V_\mdpb^\star-V_\mdpb^{\pi_\mdp^\star})\\
    &\le \gamma \max_{s'\in\cS} (V_\mdpb^\star(s')-V_\mdpb^{\pi_\mdp^\star}(s'))\\
    &= \gamma (V_\mdpb^\star(s)-V_\mdpb^{\pi_\mdp^\star}(s))
\end{align*}
which is a contradiction since $\gamma<1$. Now, since $\pi_\mdpb^\star(s) \neq \pi_\mdp^\star(s)$, we have $\Delta(\mdp) \le V_\mdp^\star(s)-Q_\mdp^\star(s,\pi_\mdpb^\star(s))$.
We can then write
\begin{align*}
    \Delta(\mdp) + \varepsilon
    &\le V_\mdp^\star(s)-Q_\mdp^\star(s,\pi_\mdpb^\star(s)) + V_\mdpb^\star(s)-V_\mdpb^{\pi_\mdp^\star}(s)\\
    &= V_\mdp^\star(s)-V_\mdpb^{\pi_\mdp^\star}(s) + Q_\mdpb^\star(s,\pi_\mdpb^\star(s))-Q_\mdp^\star(s,\pi_\mdpb^\star(s))\\
    &\le \|V_\mdp^\star-V_\mdpb^{\pi_\mdp^\star}\|_\infty + \|Q_\mdp^\star-Q_\mdpb^\star\|_\infty.
\end{align*}

\medskip 

\emph{Episodic setting - proof of (ii).} For each step $h\in[H]$ we denote $s_h = \arg\max_s V_{\mdpb,h}^\star(s)-V_{\mdpb,h}^{\pi_\mdp^\star}(s)$.
Since $\pi_\mdp^\star\notin\Pi_\varepsilon^\star(\mdpb)$, we have $V_{\mdpb,1}^\star(s_1)-V_{\mdpb,1}^{\pi_\mdp^\star}(s_1) \ge \varepsilon$.
Note that if $\pi_{\mdpb,1}^\star(s_1) = \pi_{\mdp,1}^\star(s_1)$ then
\begin{align*}
    V_{\mdpb,1}^\star(s_1)-V_{\mdpb,1}^{\pi_\mdp^\star}(s_1)
    = p_{\mdpb,1}(s_1,\pi_{\mdpb,1}^\star(s_1))^\top (V_{\mdpb,2}^\star-V_{\mdpb,2}^{\pi_\mdp^\star})
    \le V_{\mdpb,2}^\star(s_2)-V_{\mdpb,2}^{\pi_\mdp^\star}(s_2).
\end{align*}
Iterating this reasoning we can show that there exists a step $h$ such that $\pi_{\mdpb,h}^\star(s_h) \neq \pi_{\mdp,h}^\star(s_h)$ (else we would end up with $\varepsilon \le 0$) and that $\varepsilon \le V_{\mdpb,h}^\star(s_h)-V_{\mdpb,h}^{\pi_\mdp^\star}(s_h)$.
We can then write
\begin{align*}
    \Delta(\mdp) + \varepsilon
    &\le V_{\mdp,h}^\star(s_h)-Q_{\mdp,h}^\star(s,\pi_{\mdpb,h}^\star(s_h)) + V_{\mdpb,h}^\star(s_h)-V_{\mdpb,h}^{\pi_\mdp^\star}(s_h)\\
    &= V_{\mdp,h}^\star(s_h)-V_{\mdpb,h}^{\pi_\mdp^\star}(s_h) + Q_{\mdpb,h}^\star(s_h,\pi_{\mdpb,h}^\star(s_h))-Q_{\mdp,h}^\star(s,\pi_{\mdpb,h}^\star(s_h))\\
    &\le \|V_{\mdp,h}^\star-V_{\mdpb,h}^{\pi_\mdp^\star}\|_\infty + \|Q_{\mdp,h}^\star-Q_{\mdpb,h}^\star\|_\infty.
\end{align*}

\end{proof}

\newpage

\begin{lemma}
\label{lemma:ValueDiffSamePolicy}
Let $\pi$ be any deterministic policy. We have:
\begin{itemize}
    \item [(i)] In the discounted setting, it holds that
    \begin{equation}
    \label{eq:ValueDiffSamePolicyDiscounted}
    \|V_\mdp^\pi-V_\mdpb^\pi\|_\infty
    \le \|Q_\mdp^\pi-Q_\mdpb^\pi\|_\infty
    \le \frac{1}{1-\gamma} \max_{s,a} \left|\phi(s,a)^\top\left(\diffMDPb{V_\mdp^\pi}\right)\right|.
    \end{equation}
    \item [(ii)] In the episodic setting, it holds for all $h_0\in[H]$ that
\begin{equation}
    \label{eq:ValueDiffSamePolicyEpisodic}
    \|V_{\mdp,h_0}^\pi-V_{\mdpb,h_0}^\pi\|_\infty
    \le \|Q_{\mdp,h_0}^\pi-Q_{\mdpb,h_0}^\pi\|_\infty
    \le \sum_{h=h_0}^H \max_{s,a} \left|\phi(s,a)^\top\left(\diffMDPbH{V_{\mdp,h}^\pi}\right)\right|.
\end{equation} 
\end{itemize}

\end{lemma}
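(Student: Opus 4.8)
The plan is to exploit the linear-MDP structure to collapse each action-value function into a single inner product, and then propagate the parameter mismatch through the Bellman recursion. Both parts follow the same two-step template, differing only in how the recursion is closed: a contraction for the discounted case (i), and backward induction for the episodic case (ii). First I would rewrite the $Q$-functions using \eqref{eq:linmdp}. Since $r_\mdp(s,a)=\phi(s,a)^\top\theta_\mdp$ and $p_\mdp(s,a,s')=\phi(s,a)^\top\mu_\mdp(s')$, the Bellman equation gives the compact form $Q_\mdp^\pi(s,a)=\phi(s,a)^\top\bigl(\theta_\mdp+\gamma\,\mu_\mdp^\top V_\mdp^\pi\bigr)$, where $\mu_\mdp^\top V_\mdp^\pi=\sum_{s'}\mu_\mdp(s')V_\mdp^\pi(s')\in\R^d$, and symmetrically for $\mdpb$.

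Then I subtract the two expressions and perform the decisive add-subtract step, inserting and removing $\gamma\,\phi(s,a)^\top\mu_\mdpb^\top V_\mdp^\pi$. This splits the difference as
\begin{align*}
Q_\mdp^\pi(s,a)-Q_\mdpb^\pi(s,a)
&= \phi(s,a)^\top\!\left(\diffMDPb{V_\mdp^\pi}\right)\\
&\quad + \gamma\sum_{s'}p_\mdpb(s,a,s')\bigl(V_\mdp^\pi(s')-V_\mdpb^\pi(s')\bigr).
\end{align*}
The first term is exactly the quantity appearing in the statement; crucially it is evaluated at $V_\mdp^\pi$, which is why the split must be done by adding/subtracting $\mu_\mdpb^\top V_\mdp^\pi$ rather than $\mu_\mdp^\top V_\mdpb^\pi$. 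The residual term is a genuine average of $V_\mdp^\pi-V_\mdpb^\pi$ against the transition kernel $p_\mdpb(s,a,\cdot)$, so its magnitude is at most $\gamma\|V_\mdp^\pi-V_\mdpb^\pi\|_\infty$.

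For the left inequality I would use that $\pi$ is deterministic, so $V^\pi(s)=Q^\pi(s,\pi(s))$ for both MDPs, giving $\|V_\mdp^\pi-V_\mdpb^\pi\|_\infty\le\|Q_\mdp^\pi-Q_\mdpb^\pi\|_\infty$ termwise. Combining this with the display and taking $\max_{s,a}$ yields the self-referential estimate $\|Q_\mdp^\pi-Q_\mdpb^\pi\|_\infty\le \max_{s,a}\bigl|\phi(s,a)^\top(\diffMDPb{V_\mdp^\pi})\bigr|+\gamma\|Q_\mdp^\pi-Q_\mdpb^\pi\|_\infty$; moving the $\gamma$-term to the left and dividing by $1-\gamma$ gives (i).

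For (ii) the identical decomposition at step $h$ leaves the residual $\sum_{s'}p_{\mdpb,h}(s,a,s')\bigl(V_{\mdp,h+1}^\pi-V_{\mdpb,h+1}^\pi\bigr)$, bounded by $\|V_{\mdp,h+1}^\pi-V_{\mdpb,h+1}^\pi\|_\infty$. With no discount factor to produce a contraction, I would instead unroll by backward induction from $h_0$ up to $H$ with terminal condition $V_{\mdp,H+1}^\pi=V_{\mdpb,H+1}^\pi=0$, so that the per-step terms $\max_{s,a}\bigl|\phi(s,a)^\top(\diffMDPbH{V_{\mdp,h+1}^\pi})\bigr|$ telescope into the sum $\sum_{h=h_0}^H$ of the statement. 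The only delicate point is orienting the add-subtract step so that the isolated term is evaluated at the reference value $V_\mdp^\pi$ (precisely the quantity controlled by the concentration bounds in Propositions \ref{prop:ConcentrationLSEDiscounted} and \ref{prop:ConcentrationLSEEpisodic}) while the leftover remains a bona fide probability average; once that is in place, the contraction in (i) and the telescoping in (ii) are routine.
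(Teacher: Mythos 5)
Your proposal is correct and follows essentially the same route as the paper: the identical add--subtract decomposition $Q_\mdp^\pi(s,a)-Q_\mdpb^\pi(s,a)=\phi(s,a)^\top\bigl(\diffMDPb{V_\mdp^\pi}\bigr)+\gamma\,p_\mdpb(s,a)^\top(V_\mdp^\pi-V_\mdpb^\pi)$, closed by the $\gamma$-contraction in (i) and by backward iteration from $h_0$ to $H$ in (ii). Your explicit remark that the split must be oriented so the residual is an average against $p_\mdpb$ (rather than $p_\mdp$) is exactly the point the paper records in its remark following Lemma \ref{lemma:ValueDiffOptimalPolicy}.
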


\begin{proof}[Proof of Lemma \ref{lemma:ValueDiffSamePolicy}]
We present the proofs of \emph{(i)} and \emph{(ii)} separately.

\medskip 

\emph{Discounted setting - Proof of (i).}  For any $s\in\cS$ we have $V_\mdp^\pi(s)-V_\mdpb^\pi(s) = Q_\mdp^\pi(s,\pi(s))-Q_\mdpb^\pi(s,\pi(s))$ thus the first inequality.
Now, we can write for any pair $(s,a)\in\cS\times\cA$
\begin{align*}
    Q_\mdp^\pi(s,a)-Q_\mdpb^\pi(s,a)
    &= \phi(s,a)^\top\left(\theta_\mdp-\theta_\mdpb + \gamma(\mu_\mdp-\mu_\mdpb)^\top V_\mdp^\pi\right) + \gamma p_\mdpb(s,a)^\top(V_\mdp^\pi-V_\mdpb^\pi),
\end{align*}
so that
\begin{align*}
    \|Q_\mdp^\pi-Q_\mdpb^\pi\|_\infty
    &\le \max_{s,a} \big|\phi(s,a)^\top\big(\theta_\mdp-\theta_\mdpb + \gamma(\mu_\mdp-\mu_\mdpb)^\top V_\mdp^\pi\big)\big| + \gamma\|V_\mdp^\pi-V_\mdpb^\pi\|_\infty\\
    &\le \max_{s,a} \big|\phi(s,a)^\top\big(\theta_\mdp-\theta_\mdpb + \gamma(\mu_\mdp-\mu_\mdpb)^\top V_\mdp^\pi\big)\big| + \gamma\|Q_\mdp^\pi-Q_\mdpb^\pi\|_\infty,
\end{align*}
which implies the second inequality. 

\medskip 

\emph{Episodic setting - Proof of (ii).} It is immediate that for any $h\in[H]$ $\|V_{\mdp,h}^\pi-V_{\mdpb,h}^\pi\|_\infty \le \|Q_{\mdp,h}^\pi-Q_{\mdpb,h}^\pi\|_\infty$.
Now, as in \emph{(i)} we can write for any $h$
\begin{align*}
    \|Q_{\mdp,h}^\pi-Q_{\mdpb,h}^\pi\|_\infty
    \le \max_{s,a} \big|\phi(s,a)^\top\big(\diffMDPbH{V_{\mdp,h+1}^\pi}\big)\big| + \|Q_{\mdp,h+1}^\pi-Q_{\mdpb,h+1}^\pi\|_\infty
\end{align*}
and conclude by iterating this inequality for $h=h_0$ to $H$.
\end{proof}

\begin{lemma}
\label{lemma:ValueDiffOptimalPolicy}
We have
\begin{itemize}
    \item [(i)] In the discounted setting, it holds that 
    \begin{equation}
    \|V_\mdp^\star-V_\mdpb^\star\|_\infty
    \le \|Q_\mdp^\star-Q_\mdpb^\star\|_\infty
    \le \frac{1}{1-\gamma} \max_{s,a} \left|\phi(s,a)^\top\left(\diffMDPb{V_\mdp^\star}\right)\right|.
\end{equation}
    \item [(ii)] In the episodic setting, it holds for all $h_0 \in [H]$ that 
    \begin{equation}
        \|V_{\mdp,h_0}^\star-V_{\mdpb,h_0}^\star\|_\infty
        \le \|Q_{\mdp,h_0}^\star-Q_{\mdpb,h_0}^\star\|_\infty
        \le \sum_{h=h_0}^H \max_{s,a} \left|\phi(s,a)^\top\left(\diffMDPbH{V_{\mdp,h}^\star}\right)\right|.
    \end{equation}
\end{itemize}
\end{lemma}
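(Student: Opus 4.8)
The plan is to follow almost verbatim the structure of the proof of Lemma~\ref{lemma:ValueDiffSamePolicy}, since the only substantive change is that here the two MDPs are compared through their \emph{respective} optimal policies rather than through a single common policy $\pi$. I would establish the two inequalities in order, treating the discounted case (i) first and then obtaining the episodic case (ii) by the same per-step recursion. In both cases the second inequality rests on the fact that the Bellman optimality equation holds at every $(s,a)$, so the fixed-point algebra of Lemma~\ref{lemma:ValueDiffSamePolicy} transfers directly with $V_\mdp^\star$ in place of $V_\mdp^\pi$.

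\textbf{First inequality.} In the fixed-policy lemma one used the pointwise identity $V_\mdp^\pi(s)-V_\mdpb^\pi(s)=Q_\mdp^\pi(s,\pi(s))-Q_\mdpb^\pi(s,\pi(s))$, which is unavailable here because $\pi_\mdp^\star$ and $\pi_\mdpb^\star$ may differ. Instead I would use the Bellman optimality relations $V_\mdp^\star(s)=\max_a Q_\mdp^\star(s,a)$ and $V_\mdpb^\star(s)=\max_a Q_\mdpb^\star(s,a)$ together with the elementary contraction $|\max_a f(a)-\max_a g(a)|\le \max_a|f(a)-g(a)|$. Taking the supremum over $s$ yields $\|V_\mdp^\star-V_\mdpb^\star\|_\infty\le\|Q_\mdp^\star-Q_\mdpb^\star\|_\infty$, and the same argument applied layerwise gives $\|V_{\mdp,h}^\star-V_{\mdpb,h}^\star\|_\infty\le\|Q_{\mdp,h}^\star-Q_{\mdpb,h}^\star\|_\infty$ in the episodic setting.

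\textbf{Second inequality.} Subtracting the two optimality equations $Q_\mdp^\star(s,a)=r_\mdp(s,a)+\gamma\,p_\mdp(s,a)^\top V_\mdp^\star$ and $Q_\mdpb^\star(s,a)=r_\mdpb(s,a)+\gamma\,p_\mdpb(s,a)^\top V_\mdpb^\star$, inserting the linear parametrization $r=\phi^\top\theta$ and $p=\phi^\top\mu$, and adding and subtracting $\gamma\,p_\mdpb(s,a)^\top V_\mdp^\star$, I would obtain
\begin{equation*}
Q_\mdp^\star(s,a)-Q_\mdpb^\star(s,a)=\phi(s,a)^\top\!\left(\diffMDPb{V_\mdp^\star}\right)+\gamma\,p_\mdpb(s,a)^\top\!\left(V_\mdp^\star-V_\mdpb^\star\right).
\end{equation*}
Taking $\|\cdot\|_\infty$, bounding the last term by $\gamma\|V_\mdp^\star-V_\mdpb^\star\|_\infty\le\gamma\|Q_\mdp^\star-Q_\mdpb^\star\|_\infty$ through the first inequality, and dividing by the resulting factor $(1-\gamma)$ gives the discounted bound. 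For the episodic case the identical decomposition at step $h$ (no discount, anchored at $V_{\mdp,h+1}^\star$) yields the recursion $\|Q_{\mdp,h}^\star-Q_{\mdpb,h}^\star\|_\infty\le \max_{s,a}\big|\phi(s,a)^\top(\diffMDPbH{V_{\mdp,h+1}^\star})\big|+\|Q_{\mdp,h+1}^\star-Q_{\mdpb,h+1}^\star\|_\infty$, which I would unroll from $h=h_0$ to $H$ using the terminal condition $Q_{\mdp,H+1}^\star=Q_{\mdpb,H+1}^\star=0$ to recover the stated sum.

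I do not expect a serious obstacle, as the whole argument is a light adaptation of Lemma~\ref{lemma:ValueDiffSamePolicy}. The single delicate point is the first inequality: replacing the fixed-policy identity by the contraction $|\max-\max|\le\max|\cdot|$ is precisely what permits comparing two \emph{distinct} optimal policies, and once this is in place the downstream fixed-point manipulation is unchanged.
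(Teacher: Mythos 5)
Your proposal is correct and follows essentially the same route as the paper: the identical Bellman decomposition $Q_\mdp^\star-Q_\mdpb^\star=\phi^\top(\cdot)+\gamma p_\mdpb^\top(V_\mdp^\star-V_\mdpb^\star)$ followed by the fixed-point (resp.\ backward-recursion) absorption of the $\gamma\|Q_\mdp^\star-Q_\mdpb^\star\|_\infty$ term. The only cosmetic difference is in the first inequality, where the paper spells out the two-sided comparison via the suboptimality of each MDP's optimal policy under the other, which is exactly the proof of the max-contraction $|\max_a f - \max_a g|\le\max_a|f-g|$ you invoke.
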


\begin{proof}[Proof of Lemma \ref{lemma:ValueDiffOptimalPolicy}] We present the proof of \emph{(i)} and \emph{(ii)} separately.

\medskip 

\emph{Discounted setting - Proof of (i).} Let $s\in\cS$. We have by optimality of $\pi_\mdpb^\star$ that 
\begin{align*}
    V_\mdp^\star(s)-V_\mdpb^\star(s)
    &= Q_\mdp^\star(s,\pi_\mdp^\star(s))-Q_\mdpb^\star(s,\pi_\mdpb^\star(s))\\
    &\le Q_\mdp^\star(s,\pi_\mdp^\star(s))-Q_\mdpb^\star(s,\pi_\mdp^\star(s))\\
    &\le \|Q_\mdp^\star-Q_\mdpb^\star\|_\infty.
\end{align*}
$V_\mdpb^\star(s)-V_\mdp^\star(s)$ can be bounded the same way using the optimality of $\pi_\mdp^\star$, so that this inequality is true in absolute value which gives the first inequality.
Now, we can write for any pair $(s,a)\in\cS\times\cA$
\begin{align*}
    Q_\mdp^\star(s,a)-Q_\mdpb^\star(s,a)
    = \phi(s,a)^\top\left(\diffMDPb{V_\mdp^\star}\right) + \gamma p_\mdpb(s,a)^\top(V_\mdp^\star-V_\mdpb^\star),
\end{align*}
so that
\begin{align*}
    \|Q_\mdp^\star-Q_\mdpb^\star\|_\infty
    &\le \max_{s,a} \left|\phi(s,a)^\top\left(\diffMDPb{V_\mdp^\star}\right)\right| + \gamma \|V_\mdp^\star-V_\mdpb^\star\|_\infty\\
    &\le \max_{s,a} \left|\phi(s,a)^\top\left(\diffMDPb{V_\mdp^\star}\right)\right| + \gamma \|Q_\mdp^\star-Q_\mdpb^\star\|_\infty
\end{align*}
which implies the result.

\medskip 

\emph{Episodic setting - proof of (ii).} For any $h\in[H]$ we can write with the same reasoning as in the proof of \emph{(i)} that  $\|V_{\mdp,h}^\star-V_{\mdpb,h}^\star\|_\infty \le \|Q_{\mdp,h}^\star-Q_{\mdpb,h}^\star\|_\infty$ and
\begin{align*}
    \|Q_{\mdp,h}^\star-Q_{\mdpb,h}^\star\|_\infty
    \le \max_{s,a} \big|\phi(s,a)^\top\big(\diffMDPbH{V_{\mdp,h+1}^\star}\big)\big| + \|Q_{\mdp,h+1}^\star-Q_{\mdpb,h+1}^\star\|_\infty
\end{align*}
and conclude by iterating this inequality for $h=h_0$ to $H$.
\end{proof}

%%%%%%%%%%%%

\begin{remark}
In lemmas \ref{lemma:ValueDiffSamePolicy} and \ref{lemma:ValueDiffOptimalPolicy} we have used the fact that for any $(s,a)$, $\|p_\mdpb(s,a)\|_1 = 1$, but only with $\mdpb$ and not with $\mdp$.
When working with the LSE estimators $\thetat$ and $\mut$ we will construct a MDP $\mdpt$ which transitions probabilities, defined as $\phi(s,a)^\top\mu_t^\top$, may not be actual probability vectors. This is not an issue since these lemmas will only be used with $\mdpt$ taking the place of the first MDP which does not require such property.
\end{remark}

\subsection{Proof of Theorem \ref{thm:UpperBoundDiscounted}}
The goal of this section is to show the bound $T(\mdp,\omega) \le U(\mdp,\omega)$ for a given MDP $\mdp$ and a given allocation $\omega$. In other words, the goal is to show that
\begin{align*}
    T(\mdp,\omega)^{-1} = \inf_{\mdpb\in\Alt_\varepsilon(\mdp)} \sum_{(s,a)\in\cS\times\cA}\omega_{s,a}\KL(s,a)
    \ge \frac{3(1-\gamma)^4(\Delta(\mdp)+\varepsilon)^2}{10\sigma(\omega)}.
\end{align*}
We are actually going to show this bound but with an infimum over the set of MDPs $\mdpb$ such that $\pi_\mdp^\star \notin \Pi_\varepsilon^\star(\mdpb)$, which is larger than $\Alt_\varepsilon(\mdp)$ and thus gives a smaller infimum than $T(\mdp,\omega)^{-1}$. From now on we consider one such MDP $\mdpb$.
The Kullback-Leibler divergence can be lower bounded using lemma \ref{lemma:BoundKL}. For a given pair $(s,a)$, we choose $f=r+\gamma V_\mdp^\star(s')$ where $r$ and $s'$ are respectively the random reward and the random next step after playing the pair $(s,a)$. $f$ is almost surely bounded by $(1-\gamma)^{-1}$ and the lemma gives
\begin{align*}
    \KL(s,a)
    &\ge \frac{6(1-\gamma)^2}{5} \left(\E_{\mdp(s,a)}[r + \gamma V_\mdp^\star(s')] - \E_{\mdpb(s,a)}[r + \gamma V_\mdp^\star(s')]\right)^2\\
    &= \frac{6(1-\gamma)^2}{5} \left(\phi(s,a)^\top\left(\diffMDPb{V_\mdp^\star}\right)\right)^2.
\end{align*}
Summing over all state action pairs,
\begin{align}
\label{eq:IntermediateLowerBoundDiscounted}
    \sum_{(s,a)\in\cS\times\cA}\omega_{s,a}\KL(s,a)
    &\ge \frac{6(1-\gamma)^2}{5} \distMDPb{V_\mdp^\star}_{\Lambda(\omega)}^2
\end{align}

Putting together Lemma \ref{lemma:GapBound}, Lemma  \ref{lemma:ValueDiffSamePolicy} and Lemma \ref{lemma:ValueDiffOptimalPolicy} (and choosing $\pi=\pi_\mdp^\star$ in Lemma \ref{lemma:ValueDiffSamePolicy}), obtain a bound on the quantity $\Delta(\mdp)+\varepsilon$ as follows 
\begin{align*}
    \Delta(\mdp)+\varepsilon
    &\le \frac{2}{1-\gamma} \max_{s,a} \left|\phi(s,a)^\top\left(\diffMDPb{V_\mdp^\star}\right)\right|.
\end{align*}
Now, we can apply lemma \ref{lemma:Optimization} with $n=1$, $\Delta = \frac{1-\gamma}{2}(\Delta(\mdp)+\varepsilon)$, $\Lambda_1=\Lambda(\omega)$ and $\phi_1$ the feature maximizing the term above, and deduce that
\begin{align*}
    \left\|\diffMDPb{V_\mdp^\star}\right\|_{\Lambda(\omega)}^2
    &\ge \frac{(1-\gamma)^2(\Delta(\mdp)+\varepsilon)^2}{4\|\phi\|_{\Lambda(\omega)^{-1}}^2}\\
    &\ge \frac{(1-\gamma)^2(\Delta(\mdp)+\varepsilon)^2}{4\sigma(\omega)}.
\end{align*}
Putting this together with equation \eqref{eq:IntermediateLowerBoundEpisodic} and then taking the infimum over $\mdpb$, we have
\begin{equation}\label{eq:ineq1}
    T(\mdp,\omega)^{-1}
    \ge \frac{3(1-\gamma)^4(\Delta(\mdp)+\varepsilon)^2}{10\sigma(\omega)}.
\end{equation}
Now, optimizing over $\omega \in \Sigma_{S \times A}$, we obtain that 
\begin{align*}
    T^\star(\cM) = \inf_{\omega \in \Sigma_{S \times A}} T(\cM, \omega)  \le \frac{10 \inf_{\omega \in \Sigma_{S \times A}}\sigma(\omega)}{3(1-\gamma)^4 (\Delta(\cM) + \varepsilon^2)} = U^\star(\cM)
\end{align*}
Now, applying Kiefer-Wolfowitz theorem (see Theorem \ref{thm:kiefer-wolfowitz}) entails that $\inf_{\omega \in \Sigma_{S \times A}}\sigma(\omega) = d$, and that $\omega^\star(\cM)$ which achieves the minimum is the so-called G-optimal design (see \cite{lattimore2020bandit} and references therein). This concludes the proof of Theorem \ref{thm:UpperBoundDiscounted}.

\begin{thm}[Kiefer-Wolfowitz \cite{kiefer1960equivalence}] \label{thm:kiefer-wolfowitz} Let $\Phi \subseteq \RR^d$ be a finite set and $\textup{span}(\Phi) = d$. Let $\Sigma$ be the set of probability distributions supported on $\Phi$, then the following statements are equivalent:
\begin{itemize}
    \item [(i)] $\omega^\star = \arg\min_{\omega \in \Sigma}\max_{\phi \in \Phi} \phi^\top (\sum_{\phi \in \Phi}\omega(\phi) \phi \phi^\top )^{-1} \phi$,
    \item [(ii)] $\omega^\star = \arg\max_{\omega \in \Sigma} \log\det(\sum_{\phi \in \Phi}\omega(\phi) \phi \phi^\top )$,
    \item [(iii)] $\max_{\phi \in \Phi} \phi^\top (\sum_{\phi \in \Phi}\omega^\star(\phi) \phi \phi^\top )^{-1} \phi = d$.
\end{itemize}
\end{thm}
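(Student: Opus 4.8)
The plan is to prove the three-way equivalence by exploiting two structural facts about the map $\omega \mapsto V(\omega) := \sum_{\phi\in\Phi}\omega(\phi)\phi\phi^\top$: that the D-optimality objective $f(\omega) := \log\det V(\omega)$ is concave, so that its maximizers over the simplex are characterized by first-order conditions, and that the G-optimality objective $g(\omega) := \max_{\phi\in\Phi}\phi^\top V(\omega)^{-1}\phi$ (i.e. $\sigma(\omega)$) is always bounded below by $d$. This lower bound follows from the trace identity $\sum_{\phi}\omega(\phi)\,\phi^\top V(\omega)^{-1}\phi = \mathrm{tr}(V(\omega)^{-1}V(\omega)) = d$: since the left-hand side is an $\omega$-weighted average of the quantities $\phi^\top V(\omega)^{-1}\phi$, their maximum over the support, a fortiori over all of $\Phi$, is at least $d$, so $g(\omega)\ge d$ for every $\omega\in\Sigma$ with $V(\omega)$ nonsingular.

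First I would compute the gradient of $f$. Using $\tfrac{d}{dt}\log\det(A+tB)\big|_{t=0}=\mathrm{tr}(A^{-1}B)$ with $B=\phi\phi^\top$, one obtains $\partial f/\partial\omega(\phi)=\phi^\top V(\omega)^{-1}\phi$. Concavity of $f$ on $\Sigma$ is immediate since $V$ is linear in $\omega$ and $\log\det$ is concave on the positive-definite cone. Hence $\omega^\star$ maximizes $f$ over the simplex if and only if the directional derivative toward every other point is nonpositive, i.e. $\sum_\phi(\omega(\phi)-\omega^\star(\phi))\,\phi^\top V(\omega^\star)^{-1}\phi\le 0$ for all $\omega\in\Sigma$. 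Substituting the trace identity $\sum_\phi\omega^\star(\phi)\phi^\top V(\omega^\star)^{-1}\phi=d$ and specializing $\omega$ to the point masses at each $\phi$, this condition reduces exactly to $\phi^\top V(\omega^\star)^{-1}\phi\le d$ for all $\phi\in\Phi$, that is, to $g(\omega^\star)\le d$.

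With these two ingredients the equivalences chain together. The first-order characterization gives that (ii) holds if and only if $g(\omega^\star)\le d$; combined with the universal lower bound $g(\omega^\star)\ge d$, this yields (ii)$\Leftrightarrow$(iii). Next, (iii)$\Rightarrow$(i) is immediate, for if $g(\omega^\star)=d$ while $g(\omega)\ge d$ for all $\omega$, then $\omega^\star$ is a global minimizer of $g$. To close the loop with (i)$\Rightarrow$(iii), I would first argue that a maximizer of $f$ exists: because $\mathrm{span}(\Phi)=d$ there is an $\omega$ with $V(\omega)\succ0$ and $f(\omega)>-\infty$, and since $\Sigma$ is compact and $f$ is upper semicontinuous (taking the value $-\infty$ on the singular boundary), the supremum is attained at some $\bar\omega$. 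By the already-proven implication (ii)$\Rightarrow$(iii), this $\bar\omega$ satisfies $g(\bar\omega)=d$, so the minimum value of $g$ over $\Sigma$ equals $d$; consequently any minimizer $\omega^\star$ of $g$ (condition (i)) must also satisfy $g(\omega^\star)=d$, which is (iii).

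The step I expect to demand the most care is the first-order optimality argument on the simplex: one must justify that directional derivatives correctly characterize the maximizer of the concave $f$, including the technical point that the gradient is finite there (which holds because any maximizer lies where $V(\omega^\star)$ is nonsingular, as the supremum exceeds $-\infty$), and that reducing $\sum_\phi(\omega(\phi)-\omega^\star(\phi))\phi^\top V(\omega^\star)^{-1}\phi\le0$ to the per-feature bound via point masses is legitimate. The existence of a maximizer of $f$ together with the behavior of $f$ on the boundary of $\Sigma$, where $V$ may be singular, is the other place where a short compactness and semicontinuity argument is required rather than a one-line assertion.
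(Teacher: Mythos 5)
Your argument is correct, but note that the paper does not actually prove this statement: Theorem \ref{thm:kiefer-wolfowitz} is imported verbatim from Kiefer and Wolfowitz's 1960 paper and used as a black box (the authors only invoke item \emph{(iii)} to conclude $\inf_\omega \sigma(\omega)=d$ in the proof of Theorem \ref{thm:UpperBoundDiscounted}). What you have written is the standard modern proof of the equivalence theorem (essentially the one in Lattimore and Szepesv\'ari's treatment of G-optimal design), and all the key ingredients are in place: the gradient formula $\partial f/\partial\omega(\phi)=\phi^\top V(\omega)^{-1}\phi$, concavity of $\log\det$ on the positive-definite cone, the trace identity $\sum_\phi\omega(\phi)\,\phi^\top V(\omega)^{-1}\phi=d$ giving the universal lower bound $g\ge d$, and the reduction of the variational inequality $\sum_\phi(\omega(\phi)-\omega^\star(\phi))\,\phi^\top V(\omega^\star)^{-1}\phi\le 0$ to the per-feature condition by testing against point masses (legitimate because a linear functional on the simplex is maximized at a vertex). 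Your closing of the loop via existence of a $\log\det$-maximizer — compactness of $\Sigma$, upper semicontinuity of $f$ with value $-\infty$ on the singular boundary, and nondegeneracy from $\mathrm{span}(\Phi)=d$ — correctly establishes that $\min_\omega g(\omega)=d$, which is exactly what \emph{(i)}$\Rightarrow$\emph{(iii)} needs. The two points you flag as delicate (differentiability of $f$ at any maximizer, and sufficiency of the first-order condition under concavity) are indeed the right ones to be careful about, and both are handled correctly: any maximizer has $f>-\infty$, hence $V(\omega^\star)\succ 0$ and $f$ is smooth there, and concavity upgrades the first-order inequality to global optimality even at points where $f=-\infty$. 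In short, the proposal supplies a complete, self-contained proof of a result the paper merely cites.
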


\begin{remark} The statement of the Kiefer-Wolfowitz theorem in \cite{kiefer1960equivalence} holds under a much weaker assumption than that of a finite set $\Phi$. For example, if $\Phi = \lbrace \phi(x): x \in \cX \rbrace$ where $\phi:\cX \to \RR^d$ is a continuous map on some compact set $\cX$, then the equivalence between the three statements \emph{(i), (ii)} and \emph{(iii)} still holds.
\end{remark}

\subsection{Proof of Theorem \ref{thm:UpperBoundEpisodic}}

Our goal is to show that
\begin{align*}
    T(\mdp,\omega)^{-1} = \inf_{\mdpb\in\Alt_\varepsilon(\mdp)} \sum_{h=1}^H \sum_{(s,a)\in\cS\times\cA}\omega_{h,s,a}\KL(h,s,a)
    \ge \frac{3(\Delta(\mdp)+\varepsilon)^2}{10H^2\sum_{h=1}^H\sigma(\omega_h)}.
\end{align*}
We are actually going to show this bound but with an infimum over the set of MDPs $\mdpb$ such that $\pi_\mdp^\star \notin \Pi_\varepsilon^\star(\mdpb)$, which is larger than $\Alt_\varepsilon(\mdp)$ and thus gives a smaller infimum than $T(\mdp,\omega)^{-1}$. From now on we consider one such MDP $\mdpb$.
The term $\KL(h,s,a)$ can be lower bounded using lemma \ref{lemma:BoundKL} like we did in the discounted model by choosing the function $f=r+\gamma V_{\mdp,h+1}^\star(s')$ where $r$ and $s'$ are respectively the random reward and the random next step after playing the pair $(s,a)$ at step $h$. $f$ is almost surely bounded by $H$ and the lemma gives
\begin{align}
\label{eq:IntermediateLowerBoundEpisodic}
    \sum_{h=1}^H \sum_{s,a}\omega_{s,a} \KL(s,a)
    &\ge \frac{6}{5H^2} \sum_{h=1}^H \sum_{s,a}\omega_{s,a} \left(\phi(s,a)^\top\left(\diffMDPbH{V_{\mdp,h+1}^\star}\right)\right)^2\\
    &= \frac{6}{5H^2} \sum_{h=1}^H \distMDPbH{V_{\mdp,h+1}^\star}_{\Lambda(\omega_h)}^2.
\end{align}

Putting together Lemma \ref{lemma:GapBound}, Lemma  \ref{lemma:ValueDiffSamePolicy} and Lemma \ref{lemma:ValueDiffOptimalPolicy} (and choosing $\pi=\pi_\mdp^\star$ in Lemma \ref{lemma:ValueDiffSamePolicy}), obtain a bound on the quantity $\Delta(\mdp)+\varepsilon$ as follows 
\begin{align*}
    \Delta(\mdp)+\varepsilon
    &\le 2\sum_{h=1}^H \max_{s,a} \left|\phi(s,a)^\top\left(\diffMDPbH{V_{\mdp,h+1}^\star}\right)\right|.
\end{align*}
Now, we can apply lemma \ref{lemma:Optimization} with $n=H$, $\Delta = \frac{1}{2}(\Delta(\mdp)+\varepsilon)$, $\Lambda_h=\Lambda(\omega_h)$ and $\phi_h$ the feature maximizing the $h$-th term in the sum above, and deduce that
\begin{align*}
    \sum_{h=1}^H \left\|\diffMDPbH{V_{\mdp,h+1}^\star}\right\|_{\Lambda(\omega_h)}^2
    &\ge \frac{(\Delta(\mdp)+\varepsilon)^2}{4\sum_{h=1}^H\|\phi_h\|_{\Lambda(\omega_h)^{-1}}^2}\\
    &\ge \frac{(\Delta(\mdp)+\varepsilon)^2}{4\sum_{h=1}^H\sigma(\omega_h)}.
\end{align*}
Putting this together with equation \eqref{eq:IntermediateLowerBoundEpisodic} and then taking the infimum over $\mdpb$, we have
\begin{equation}
    T(\mdp,\omega)^{-1}
    \ge \frac{3(\Delta(\mdp)+\varepsilon)^2}{10H^2\sum_{h=1}^H\sigma(\omega_h)}.
\end{equation}

\subsection{Technical lemmas}

\begin{lemma}
\label{lemma:Optimization}
Let $(\phi_i)_i\in(\R^d)^n$, $\Delta>0$ and $(\Lambda_i)_i\in(\R^{d\times d})^n$ some definite symmetric matrices.
We have the following optimisation problem exact solution
\begin{equation}
    \underset{\sum_{i=1}^n |\phi_i^\top x_i| \ge \Delta}{\inf_{x\in\R^{n\times d}}} \sum_{i=1}^n\|x_i\|_{\Lambda_i}^2
    = \frac{\Delta^2}{\sum_{i=1}^n\|\phi_i\|_{\Lambda_i^{-1}}^2}.
\end{equation}
\end{lemma}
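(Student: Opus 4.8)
The plan is to recognize this as a constrained positive-definite quadratic minimization whose value is pinned down by two applications of the Cauchy--Schwarz inequality, one inside each block $i$ and one across the blocks. Since the objective $\sum_{i=1}^n\|x_i\|_{\Lambda_i}^2$ is a positive-definite quadratic form and the feasible region $\{x : \sum_i |\phi_i^\top x_i| \ge \Delta\}$ is the complement of a convex body, the minimum should be attained on the boundary $\sum_i |\phi_i^\top x_i| = \Delta$, and I would produce the minimizer explicitly to show the infimum is in fact a minimum.

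\textbf{Lower bound.} First I would fix an arbitrary feasible $x$ and bound each term separately. Writing the $\Lambda_i$-inner product $\langle u,v\rangle_{\Lambda_i}=u^\top\Lambda_i v$, the identity $\phi_i^\top x_i = \langle \Lambda_i^{-1}\phi_i, x_i\rangle_{\Lambda_i}$ together with Cauchy--Schwarz in this inner product gives $|\phi_i^\top x_i| \le \|\phi_i\|_{\Lambda_i^{-1}}\|x_i\|_{\Lambda_i}$, using that $\|\Lambda_i^{-1}\phi_i\|_{\Lambda_i}=\|\phi_i\|_{\Lambda_i^{-1}}$. Summing over $i$ and applying Cauchy--Schwarz a second time to the products $\|\phi_i\|_{\Lambda_i^{-1}}\cdot\|x_i\|_{\Lambda_i}$ yields
\[
\Delta \le \sum_{i=1}^n |\phi_i^\top x_i| \le \Big(\sum_{i=1}^n \|\phi_i\|_{\Lambda_i^{-1}}^2\Big)^{1/2}\Big(\sum_{i=1}^n \|x_i\|_{\Lambda_i}^2\Big)^{1/2},
\]
which rearranges to $\sum_i \|x_i\|_{\Lambda_i}^2 \ge \frac{\Delta^2}{\sum_i \|\phi_i\|_{\Lambda_i^{-1}}^2}$. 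As $x$ was arbitrary feasible, this lower-bounds the infimum.

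\textbf{Achievability.} To see the bound is tight I would trace the equality conditions of the two inequalities. Equality in the inner-block step forces $x_i$ to be $\Lambda_i$-proportional to $\Lambda_i^{-1}\phi_i$, and equality in the outer step forces the proportionality constant to be common across $i$; this suggests the candidate $x_i = c\,\Lambda_i^{-1}\phi_i$ for a single scalar $c\ge 0$. Then $\phi_i^\top x_i = c\,\|\phi_i\|_{\Lambda_i^{-1}}^2\ge 0$, so the constraint reads $c\sum_i \|\phi_i\|_{\Lambda_i^{-1}}^2 = \Delta$, fixing $c=\Delta/\sum_i \|\phi_i\|_{\Lambda_i^{-1}}^2$; and $\|x_i\|_{\Lambda_i}^2 = c^2\|\phi_i\|_{\Lambda_i^{-1}}^2$, so the objective evaluates to $c^2\sum_i \|\phi_i\|_{\Lambda_i^{-1}}^2 = \frac{\Delta^2}{\sum_i \|\phi_i\|_{\Lambda_i^{-1}}^2}$, matching the lower bound.

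The computations are routine; the only point requiring care is the degenerate case in which every $\phi_i$ vanishes, where the constraint is infeasible for $\Delta>0$ and both sides equal $+\infty$ under the usual convention. Otherwise $\sum_i \|\phi_i\|_{\Lambda_i^{-1}}^2 > 0$ and the argument above goes through verbatim, so the main obstacle is simply the bookkeeping of the two equality cases needed to certify that the explicit candidate saturates the double Cauchy--Schwarz chain.
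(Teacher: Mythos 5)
Your proof is correct, and it takes a genuinely different route from the paper. The paper removes the absolute values by a sign-flipping argument (replacing $\phi_i$ by $\pm\phi_i$ leaves $\|\phi_i\|_{\Lambda_i^{-1}}$ unchanged), then writes the Lagrangian of the resulting convex program, solves the KKT system for the multiplier $\nu$, and substitutes back to evaluate the optimum. You instead prove the lower bound directly by a two-stage Cauchy--Schwarz chain --- first in the $\Lambda_i$-inner product within each block, using $\phi_i^\top x_i = \langle \Lambda_i^{-1}\phi_i, x_i\rangle_{\Lambda_i}$ and $\|\Lambda_i^{-1}\phi_i\|_{\Lambda_i} = \|\phi_i\|_{\Lambda_i^{-1}}$, then across blocks on the resulting sum of products --- and you certify tightness by exhibiting the same minimizer $x_i = c\,\Lambda_i^{-1}\phi_i$ that the KKT conditions produce. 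Your approach is more elementary and self-contained: it sidesteps any discussion of whether the stationarity conditions are sufficient for the original non-convex feasible region (the complement of a convex body), since the Cauchy--Schwarz lower bound holds for every feasible point regardless of convexity, and the explicit candidate closes the gap. The paper's approach has the mild advantage of mechanically producing the optimizer without needing to guess it from equality cases, but both arguments are complete and yield the same value and the same minimizer. Your remark on the degenerate case where all $\phi_i$ vanish is a correct and harmless addition.
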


\begin{proof}[Proof of Lemma \ref{lemma:Optimization}]
The absolute values can be removed from the constraint $\sum_i|\phi_i^\top x_i| \ge \Delta$, as we can then apply it adding arbitrary signs before each $\phi_i$ and get the same result since $\|-\phi_i\|_{\Lambda_i^{-1}} = \|\phi_i\|_{\Lambda_i^{-1}}$.
The Lagrangian of the problem without the absolute value is
\begin{equation*}
    \cL(x,\nu) = \sum_{i=1}^n \|x_i\|_{\Lambda_i}^2 - \nu\left(\sum_{i=1}^n \phi_i^\top x_i - \Delta\right)
\end{equation*}
and the KKT conditions for optimality are
\begin{align*}
    & \forall i,\; 2\Lambda_i x_i - \nu\phi_i = 0,\\
    & \nu\left(\Delta - \sum_{i=1}^n \phi_i^\top x_i\right) = 0,\\
    & \Delta \le \sum_{i=1}^n \phi_i^\top x_i,\\
    & \nu \ge 0.
\end{align*}
The first one gives $2x_i = \nu\Lambda_i^{-1}\phi_i$.
This formula together with the third condition imply that $\nu>0$, so that the third condition is an equality and
\begin{align*}
    \nu = \frac{2\Delta}{\sum_{i=1}^n \phi_i^\top\Lambda_i^{-1}\phi_i}
    = \frac{2\Delta}{\sum_{i=1}^n \|\phi_i\|_{\Lambda_i^{-1}}^2}
\end{align*}
Finally we have
\begin{align*}
    x_i = \Delta\cdot \frac{\Lambda_i^{-1}\phi_i}{\sum_{i=1}^n \|\phi_i\|_{\Lambda_i^{-1}}^2}
\end{align*}
and the solution of the problem is
\begin{align*}
    \frac{\sum_{i=1}^n \phi_i^\top\Lambda_i^{-1}\Lambda_i\Lambda_i^{-1}\phi_i}{\left(\sum_{i=1}^n \|\phi_i\|_{\Lambda_i^{-1}}^2\right)^2} \Delta^2
    = \frac{\Delta^2}{\sum_{i=1}^n \|\phi_i\|_{\Lambda_i^{-1}}^2}.
\end{align*}
\end{proof}

\begin{lemma}
\label{lemma:BoundKL}
Let $\alpha$ and $\beta$ be two probability measures and $f$ a random bounded variable such that $f \ge 0$.
Then we have the following inequality :
\begin{equation}
    \textup{KL}(\alpha\|\beta) \ge \frac{6}{5\|f\|_\infty^2}(\E_\alpha[f] - \E_\beta[f])^2.
\end{equation}
\end{lemma}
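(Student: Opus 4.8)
The plan is to reduce the claim to a statement about scalar means and then invoke a standard sub-Gaussian control of the log-moment-generating function. First I would normalize: set $M = \|f\|_\infty$ and work with $g = f/M$, which takes values in $[0,1]$ precisely because $f \ge 0$. Writing $a = \E_\alpha[g]$ and $b = \E_\beta[g]$, the target inequality becomes $\textup{KL}(\alpha\|\beta) \ge \tfrac{6}{5}(a-b)^2$, since $(\E_\alpha[f]-\E_\beta[f])^2 = M^2(a-b)^2$.

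The core step is the Donsker--Varadhan variational lower bound, which for every $\lambda\in\R$ gives
\[
\textup{KL}(\alpha\|\beta) \ge \lambda\,\E_\alpha[g] - \log\E_\beta[e^{\lambda g}].
\]
To control the right-hand side I would apply Hoeffding's lemma to $g$, which lives in the interval $[0,1]$ of length one, yielding $\log\E_\beta[e^{\lambda g}] \le \lambda b + \lambda^2/8$. Substituting produces the quadratic lower bound $\textup{KL}(\alpha\|\beta) \ge \lambda(a-b) - \lambda^2/8$, valid for all $\lambda$. Optimizing the right-hand side at $\lambda = 4(a-b)$ gives $\textup{KL}(\alpha\|\beta) \ge 2(a-b)^2$; since $2 \ge 6/5$, undoing the normalization $a-b = (\E_\alpha[f]-\E_\beta[f])/M$ delivers the stated bound (in fact with the sharper constant $2$).

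I expect no substantial obstacle: the whole content is the combination of Donsker--Varadhan with Hoeffding's lemma, both standard. The one point requiring care is that the hypothesis $f\ge 0$ is exactly what forces the range of $g$ to equal $1$ rather than $2$ in Hoeffding's lemma; dropping it would double the sub-Gaussian parameter and degrade the final constant. As an alternative route I would note the data-processing argument: the deterministic map $f$ followed by the Bernoulli channel $x\mapsto\textup{Ber}(x)$ (well defined since $g\in[0,1]$) is a Markov kernel, so the data-processing inequality reduces $\textup{KL}(\alpha\|\beta)$ to $\textup{kl}(a,b)$, after which Pinsker's inequality $\textup{kl}(a,b)\ge 2(a-b)^2$ closes the argument with the same constant.
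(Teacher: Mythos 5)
Your proof is correct, and in fact it establishes the inequality with the sharper constant $2$ in place of $6/5$. Both you and the paper start from the Donsker--Varadhan variational bound $\textup{KL}(\alpha\|\beta) \ge \lambda\,\E_\alpha[g] - \log\E_\beta[e^{\lambda g}]$, but you diverge at the moment-generating-function step: you invoke Hoeffding's lemma, $\log\E_\beta[e^{\lambda g}] \le \lambda\E_\beta[g] + \lambda^2/8$ for $g\in[0,1]$, and optimize a clean quadratic, whereas the paper uses a Bennett/Bernstein-type bound $\E_\beta[e^{\lambda f}] \le 1 + \frac{\V_\beta[f]}{\|f\|_\infty^2}\bigl(e^{\lambda\|f\|_\infty}-\lambda\|f\|_\infty-1\bigr)$ followed by the elementary inequality $(1+u)\log(1+u)-u \ge \frac{u^2}{2(1+u/3)}$, and then crudely bounds $\V_\beta[f] \le \|f\|_\infty^2/4$ and $\E_\alpha[f-\E_\beta f]\le \|f\|_\infty$ to land on $6/5$. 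What the paper's heavier route buys is an intermediate variance-dependent bound of the form $\textup{KL}(\alpha\|\beta) \ge \frac{(\E_\alpha[f]-\E_\beta[f])^2}{2\V_\beta[f] + \frac{2}{3}\|f\|_\infty|\E_\alpha[f]-\E_\beta[f]|}$, which is much stronger when $\V_\beta[f]\ll\|f\|_\infty^2$; this matters because the main text advertises a variance-refined version of the sample complexity bound, for which Hoeffding is too lossy. Your argument is the right one if only the worst-case constant is needed, and your data-processing alternative (Bernoulli channel plus $\textup{kl}(a,b)\ge 2(a-b)^2$) is an equally valid and arguably cleaner way to get the constant $2$. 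You are also right that $f\ge 0$ is exactly what keeps the range of $g=f/\|f\|_\infty$ equal to $1$; in the paper's proof the same hypothesis enters as $\|f-\E_\beta[f]\|_\infty\le\|f\|_\infty$.
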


\begin{proof}
We prove that if $\E_\beta[f]=0$ then
\begin{equation*}
    \textup{KL}(\alpha\|\beta) \ge \frac{6}{5\|f\|_\infty^2}\E_\alpha[f]^2.
\end{equation*}
It then suffices to apply this result to $f-\E_\beta[f]$ and to notice that if $f\ge0$ then $\|f-\E_\beta[f]\|_\infty \le \|f\|_\infty$.

Let $f$ be centered with regard to $\beta$.
Using Donsker-Varadhan's inequality, we know that for any $\lambda>0$,
$$ \textup{KL}(\alpha\|\beta) \ge E_\alpha[\lambda f] - \log(E_\beta[\exp(\lambda f)]).$$
Now,
\begin{align*}
    \E_\beta[\exp(\lambda f)]
    &\le E_\beta\left[1+\lambda f+f^2\sum_{k=2}^{+\infty}\frac{\lambda^k \|f\|_\infty^{k-2}}{k!}\right]\\
    &\le 1 + \frac{\V_\beta[f]}{\|f\|_\infty^2}\left(e^{\lambda \|f\|_\infty}-\lambda \|f\|_\infty - 1\right)\\
    &\le 1 + \frac{1}{4}\left(e^{\lambda \|f\|_\infty} - \lambda \|f\|_\infty - 1\right).
\end{align*}
Using $\log(1+u) \le u$,
$$\textup{KL}(\alpha\|\beta) \ge \E_\alpha[\lambda f] - \frac{1}{4}\left(e^{\lambda \|f\|_\infty} - \lambda \|f\|_\infty - 1\right).$$
Optimizing over $\lambda$ by choosing $\lambda = \frac{1}{\|f\|_\infty}\log\left(1+4\frac{\E_\alpha[f]}{\|f\|_\infty}\right)$, we get
$$\textup{KL}(\alpha\|\beta)
\ge \frac{1}{4} \left(\left(1+4\frac{\E_\alpha[f]}{\|f\|_\infty}\right) \log\left(1+4\frac{\E_\alpha[f]}{\|f\|_\infty}\right) - 4\frac{\E_\alpha[f]}{\|f\|_\infty}\right).$$
Using Bernstein's inequality $(1+u)\log(1+u)-u \ge \frac{u^2}{2(1+u/3)}$, we finally have
$$\textup{KL}(\alpha\|\beta)
\ge \frac{\left(4\frac{\E_\alpha[f]}{\|f\|_\infty}\right)^2}{8\left(1+\frac{4}{3}\frac{\E_\alpha[f]}{\|f\|_\infty}\right)}
= \frac{2\E_\alpha[f]^2}{\|f\|_\infty^2+\frac{2}{3}\|f\|_\infty\E_\alpha[f]}
\ge \frac{6}{5\|f\|_\infty^2}\E_\alpha[f]^2.$$
\end{proof}

\newpage
\section{Sampling rules}

\subsection{Proof of Proposition \ref{prop:ConcentrationLambda}}

\paragraph{Sampling under the G-optimal design.} It may be ambitious to target a sampling allocation that corresponds exactly to the G-optimal design. Instead, we may focus on a solution that is only approximately optimal. We will say that an allocation (or design) $\tilde{\omega}^\star \in \Sigma_{S \times A}$ is an $\epsilon$-approximate G-optimal design if it satisfies 
\begin{align}\label{eq:G-approx}
    \max_{(s,a)\in \cS \times \cA} \Vert \phi(s,a) \Vert_{\Lambda(\tilde{\omega}^\star)^{-1}}^2    \le (1+\epsilon) \inf_{\omega \in \Sigma}\max_{(s,a)\in \cS \times \cA} \Vert \phi(s,a) \Vert_{\Lambda(\omega)^{-1}}^2 = (1+\epsilon) d.
\end{align}
Obtaining such a solution may be obtained efficiently using a Frank-Wolfe algorithm (see \cite{lattimore2020bandit} and references therein). Classically, existing procedures that use G-optimal design as a basis for their sampling schemes, do that in a deterministic fashion by requiring a budget of samples ahead \cite{lattimore2020learning}, or using efficient rounding procedures coupled with a doubling trick \cite{soare2014best}. In our case, we don't use a doubling trick, and our budget of samples is random since we are using a stopping time. Our approach is to directly sample from an approximate G-optimal design, and we shall see that in fact this is sufficient.    

\paragraph{A matrix concentration result.} We will prove a slightly stronger result that is valid for all  $\epsilon$-approximate G-optimal designs. 
   
\begin{lemma}\label{lemma:ConcentrationLambdaGeneral} Let $\tilde{\omega}^\star \in \Sigma_{S\times A}$, be an $\epsilon$-approximate G-optimal design for some $\epsilon > 0$ (i.e., satisfying \eqref{eq:G-approx}). Assume that the sequence of state action pairs $(s_t,a_t)_{t \ge 1}$ are sampled according to $\tilde{\omega}^\star$, then, for all $\delta \in (0,1)$, $\rho > 0$, we have 
\begin{align*}
    \forall t \ge 2(1+\epsilon)\left(\frac{1}{\rho^2}+\frac{1}{3\rho}\right)d \log\left(\frac{2d}{\delta}\right), \qquad \PP\left( (1-\rho)\Lambda(\tilde{\omega}^\star) \preceq \Lambda(\omega_t) \preceq (1+\rho) \Lambda(\tilde{\omega}^\star)  \right) \ge 1-\delta.
\end{align*}
\end{lemma}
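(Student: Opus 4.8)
The plan is to establish the two-sided matrix inequality via a matrix Chernoff (or Bernstein) bound applied to the sum of i.i.d. random rank-one matrices. Since the pairs $(s_t,a_t)_{t\ge1}$ are sampled i.i.d. according to $\tilde{\omega}^\star$, each summand $\phi(s_t,a_t)\phi(s_t,a_t)^\top$ is an independent PSD matrix with mean $\Lambda(\tilde{\omega}^\star)$, and $t\Lambda(\omega_t) = \sum_{\ell=1}^t \phi(s_\ell,a_\ell)\phi(s_\ell,a_\ell)^\top$. The natural strategy is to whiten the problem: set $M = \Lambda(\tilde{\omega}^\star)^{-1/2}$ and study the transformed summands $X_\ell = M\,\phi(s_\ell,a_\ell)\phi(s_\ell,a_\ell)^\top M$, which are i.i.d., PSD, satisfy $\E[X_\ell] = I_d$, and the target conclusion $(1-\rho)\Lambda(\tilde{\omega}^\star) \preceq \Lambda(\omega_t) \preceq (1+\rho)\Lambda(\tilde{\omega}^\star)$ becomes exactly $(1-\rho)I_d \preceq \frac{1}{t}\sum_{\ell=1}^t X_\ell \preceq (1+\rho)I_d$.

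The key steps, in order, would be as follows. First, I would bound the spectral norm of each whitened summand: since $\|X_\ell\|_{\mathrm{op}} = \|M\phi(s_\ell,a_\ell)\|^2 = \|\phi(s_\ell,a_\ell)\|_{\Lambda(\tilde{\omega}^\star)^{-1}}^2 \le (1+\epsilon)d$ by the $\epsilon$-approximate G-optimality assumption \eqref{eq:G-approx}. This uniform bound on the summands is precisely what feeds into a matrix Bernstein/Chernoff inequality. Second, I would invoke the matrix Chernoff bound for sums of i.i.d. PSD matrices: for summands bounded in operator norm by $L$ with $\E[\frac{1}{t}\sum X_\ell] = I_d$, one has $\PP(\lambda_{\max}(\frac{1}{t}\sum X_\ell) \ge 1+\rho) \le d\exp(-t h_+(\rho)/L)$ and a matching lower-tail bound, where $h_+(\rho) = (1+\rho)\log(1+\rho)-\rho \ge \rho^2/(2(1+\rho/3))$ and $h_-(\rho)\ge \rho^2/2$. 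Third, I would plug in $L = (1+\epsilon)d$, use the Bernstein-type lower bound $h_\pm(\rho) \ge \frac{\rho^2}{2(1+\rho/3)} = \big(\frac{2}{\rho^2}+\frac{2}{3\rho}\big)^{-1}$ so that both tail probabilities drop below $\delta/2$ exactly when $t \ge 2(1+\epsilon)(\frac{1}{\rho^2}+\frac{1}{3\rho})d\log(\frac{2d}{\delta})$, and finally union-bound the upper and lower tails to obtain the claimed $1-\delta$ probability.

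The main obstacle I expect is obtaining the clean, dimension-free constants that produce precisely the stated threshold $2(1+\epsilon)(\frac{1}{\rho^2}+\frac{1}{3\rho})d\log(\frac{2d}{\delta})$ rather than a threshold with unspecified universal constants. This requires being careful about which version of the matrix concentration inequality is used: the standard matrix Chernoff bound (Tropp) gives the factor $d$ in front of the exponential (hence the $\log(2d/\delta)$ with the $2d$ accounting for the factor $d$ and the two-sided union bound), and the Bernstein-type lower bound $(1+\rho)\log(1+\rho)-\rho \ge \frac{\rho^2}{2(1+\rho/3)}$ is exactly what converts the sharp Chernoff exponent into the $\frac{1}{\rho^2}+\frac{1}{3\rho}$ shape. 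A secondary care-point is that the stated result holds \emph{for a fixed $t$} above the threshold (not uniformly in $t$), so no additional union bound over time or peeling argument is needed; the two-sided union bound is only over $\lambda_{\max}$ and $\lambda_{\min}$. To recover Proposition \ref{prop:ConcentrationLambda} itself, one specializes $\epsilon\to 0$ (the exact G-optimal design), $\rho = 1/2$ (so that on the good event $\Lambda(\omega_t)\succeq \frac{1}{2}\Lambda(\omega^\star)$, hence $\sigma(\omega_t)\le 2\sigma(\omega^\star)$ via $\|\phi\|_{\Lambda(\omega_t)^{-1}}^2 \le 2\|\phi\|_{\Lambda(\omega^\star)^{-1}}^2$), and checks that the threshold reduces to $10d\log(2d/\delta)$.
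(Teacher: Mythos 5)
Your proposal is correct and follows essentially the same route as the paper: whiten by $\Lambda(\tilde{\omega}^\star)^{-1/2}$, bound the operator norm of each whitened rank-one summand by $(1+\epsilon)d$ via the approximate G-optimality condition, apply a matrix concentration inequality, and translate the spectral deviation into the two-sided Loewner inequality. The only (immaterial) difference is that you invoke the matrix Chernoff bound for i.i.d.\ PSD summands with the Bernstein-type lower bound on $(1+\rho)\log(1+\rho)-\rho$, whereas the paper centers the summands and applies the matrix Bernstein inequality directly; both yield the tail $2d\exp\big(-\tfrac{t\rho^2}{2(1+\epsilon)(1+\rho/3)d}\big)$ and hence the identical threshold.
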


\begin{remark}
Note that the statement of Lemma \ref{lemma:ConcentrationLambdaGeneral}, along with the fact that $\tilde{\omega}^\star$ is an $\epsilon$-approximate G-optimal design, ensures that the event 
\begin{align*}
    \frac{d}{1+\rho} \le \max_{(s,a)\in \cS \times \cA }\Vert \phi(s,a)\Vert_{\Lambda(\omega_t)^{-1}}^2 \le \frac{(1+\epsilon)d}{1-\rho}  
\end{align*}
holds with probability at least $1-\delta$, provided $t \ge 2(1+\epsilon)\big(\frac{1}{\rho^2}+\frac{1}{3\rho}\big)d \log\big(\frac{2d}{\delta}\big)$. Note that the maximum over  $\cS\times \cA $ came for free thanks to the matrix concentration, and this concentration did not require a priori any condition on the finiteness of the set $\cS \times \cA$. Actually, the above generalizes immediately for any continuous and compact state-action spaces $\cS \times \cA$, provided we can compute an $\epsilon$-approximate $G$-optimal design. 
\end{remark}

\begin{remark}
In particular, specializing \ref{lemma:ConcentrationLambdaGeneral} to the G-optimal design $\omega^\star$ and choosing $\rho = 1/2$ gives 
\begin{align}
\label{eq:ConcentrationLambda}
    \forall t \ge \frac{28d}{3} \log\left(\frac{2d}{\delta}\right), \qquad
    \PP\left[\sigma(\omega_t) \le 2\sigma(\omega^\star)\right] \ge 1-\delta.
\end{align}
This is exactly the statement of Proposition \ref{prop:ConcentrationLambda}.
\end{remark}

\begin{proof}[Proof of Lemma \ref{lemma:ConcentrationLambdaGeneral}] 
Let $\delta\in(0,1)$ and $t\ge1$. First, we have
\begin{align*}
    (\tilde{\Lambda}^\star)^{-1/2}\Lambda(\omega_t)(\tilde{\Lambda}^\star)^{-1/2} - I_d
    = \sum_{\ell=1}^t  \frac{1}{t}\left(\big((\tilde{\Lambda}^\star)^{-1/2}\phi(s_\ell,a_\ell) \big) \big((\tilde{\Lambda}^\star)^{-1/2}\phi(s_\ell,a_\ell) \big)^\top - I_d\right).
\end{align*}
where we denote $\tilde{\Lambda}^\star = \Lambda(\tilde{\omega}^\star)$. Denote $(X_\ell)_{1\le\ell\le t}$ the summands appearing in the sum above. Note that $X_\ell$ is a symmetric random matrix that satisfies for all $\ell\ge 1$, $\|X_\ell\| \le \frac{(1+\epsilon)d}{t}$ a.s. and $\|\E[X_\ell^2]\| \le \frac{(1+\epsilon)d}{t^2}$ for the operator norm.
Indeed, we have for any $(s,a)\in\cS\times\cA$,
\begin{align*}
    \left\|\left((\tilde{\Lambda}^\star)^{-1/2}\phi(s,a)\right) \left((\tilde{\Lambda}^\star)^{-1/2}\phi(s,a)\right)^{\top}\right\|
    &= \max_{\|x\|=1} x^\top\left((\tilde{\Lambda}^\star)^{-1/2}\phi(s,a)\right) \left((\tilde{\Lambda}^\star)^{-1/2}\phi(s,a)\right)^{\top}x\\
    &= \max_{\|x\|=1} \left(\left((\tilde{\Lambda}^\star)^{-1/2}\phi(s,a)\right)^{\top}x\right)^2\\
    &\le \left\|\phi(s,a) \right\|_{(\tilde{\Lambda}^\star)^{-1} }^2\\
    &\le (1+\epsilon) d
\end{align*}
so that a.s.
\begin{align*}
    \|X_\ell\|
    \le \frac{1}{t} \max\left(\left\|\left((\tilde{\Lambda}^\star)^{-1/2}\phi(s_\ell,a_\ell)\right) \left((\tilde{\Lambda}^\star)^{-1/2}\phi(s_\ell,a_\ell)\right)^{\top}\right\|, \|I_d\|\right)
    \le \frac{(1+\epsilon)d}{t}
\end{align*}
and, since $\E_{(s,a)\sim\tilde{\omega}^\star}\big[\big((\tilde{\Lambda}^\star)^{-1/2}\phi(s,a)\big) \left((\tilde{\Lambda}^\star)^{-1/2}\phi(s,a)\right)^{\top}\big] = (\tilde{\Lambda}^\star)^{-1/2}\tilde{\Lambda}^\star(\tilde{\Lambda}^\star)^{-1/2} = I_d$,
\begin{align*}
    \E[X_\ell^2]
    &\preceq \E_{(s,a)\sim\tilde{\omega}^\star}\left[ \left(\frac{1}{t}\left((\tilde{\Lambda}^\star)^{-1/2}\phi(s,a)\right) \left((\tilde{\Lambda}^\star)^{-1/2}\phi(s,a)\right)^{\top} \right)^2\right]\\
    &\preceq \frac{1}{t^2} \E_{(s,a)\sim\tilde{\omega}^\star}\left[\left((\tilde{\Lambda}^\star)^{-1/2}\phi(s,a)\right) \left((\tilde{\Lambda}^\star)^{-1/2}\phi(s,a)\right)^{\top} \right] \max_{s,a}\left\|\left((\tilde{\Lambda}^\star)^{-1/2}\phi(s,a)\right) \left((\tilde{\Lambda}^\star)^{-1/2}\phi(s,a)\right)^{\top}\right\|\\
    &\preceq \frac{(1+\epsilon)d}{t^2} I_d.
\end{align*}
Now, using Matrix  Bernstein's inequality (more precisely, we use Theorem 5.4.1. in \cite{vershynin2018high}, see also \cite{tropp2015introduction}), we obtain that for all $\rho > 0$,
\begin{align*}
    \proba\left(\left\|\sum_{\ell=1}^tX_\ell\right\|> \rho \right)
    \le 2d\exp\left(-\frac{t\rho^2}{2(1+\epsilon)(1+\rho/3)d}\right).
\end{align*}
which implies that 
\begin{align*}
    \forall t \ge 2(1+\epsilon)\left(\frac{1}{\rho^2}+\frac{1}{3\rho}\right)d \log\left(\frac{2d}{\delta}\right), \qquad \proba\left(\left\|\sum_{\ell=1}^tX_\ell\right\|> \rho \right)
    \le \delta. 
\end{align*}
Finally, in order to conclude, observe that  
\begin{align*}
    \Vert (\tilde{\Lambda}^\star)^{-1/2}\Lambda(\omega_t)(\tilde{\Lambda}^\star)^{-1/2} - I_d  \Vert \le \rho \implies  (1-\rho )\tilde{\Lambda}^\star \preceq  \Lambda(\omega_t) \preceq (1+\rho) \tilde{\Lambda}^\star
\end{align*}
thus, provided $t \ge 2(1+\epsilon)\big(\frac{1}{\rho^2}+\frac{1}{3\rho}\big)d \log\big(\frac{2d}{\delta}\big)$, it follows that   
\begin{align*}
    \PP((1-\rho )\tilde{\Lambda}^\star \preceq  \Lambda(\omega_t) \preceq (1+\rho) \tilde{\Lambda}^\star) \ge \PP\left( \Vert (\tilde{\Lambda}^\star)^{-1/2}\Lambda(\omega_t)(\tilde{\Lambda}^\star)^{-1/2} - I_d  \Vert \le \rho \right) \ge 1-\delta 
\end{align*}.
\end{proof}

\newpage
\section{Least Square Estimation and Stopping Rules}

In this section we show the correctness of our stopping rules through the concentration inequalities of the least square estimators, i.e. propositions \ref{prop:ConcentrationLSEDiscounted} and \ref{prop:ConcentrationLSEEpisodic}.
The least square error term depends on the optimal value function of $\mdpt$. To get rid of this dependency we decide to control uniformly the error for any optimal value function, which is possible thanks to a net argument.
We introduce a first general lemma that establish a self-normalized martingale uniformly on a set of parameters via a net argument.

\begin{lemma}
\label{lemma:SelfNormalizedProcess}
Consider a sequence of feature vectors $(\phi_t)_{t\ge1}$ in $\R^d$, a set of parameters $\cV\subset\R^\cS$ and for each $V\in\cV$ consider a martingale $(x_t(V))_{t\ge1}$. Finally consider a sequence of positive scalars $(\lambda_t)_{t\ge1}$.
Denote $\Phi_t = \begin{pmatrix} \phi_1 & \dots & \phi_t \end{pmatrix}^\top \in\R^{t\times d}$ and $X_t(V) = \begin{pmatrix} x_1(V) & \dots & x_t(V) \end{pmatrix}^\top \in\R^t$.
Under the following assumptions :
\begin{itemize}
    \item [(i)] there exists a constant $L>0$ such that for any $t\ge1$ and $V\in\cV$, $\|x_t(V)\|_\infty \le L$,
    \item [(ii)] for any $V,V'\in\cV$ and $t\ge1$, $\|x_t(V)-x_t(V')\| \le \|V-V'\|_\infty$,
    \item [(iii)] for any $\epsilon>0$, $\cV$ admits a $\epsilon$-net $\cV_\epsilon$ of finite cardinality $\cN_\epsilon$, i.e. for any $V\in\cV$ there exists $V'\in\cV_\epsilon$ such that $\|V-V'\|_\infty \le \epsilon$,
\end{itemize}
we have for any $\delta>0$, $\epsilon\in(0,L)$ and $t\ge1$ that
\begin{align}
    \proba\left[\max_{V\in\cV} \|\Phi_t^\top X_t(V)\|_{(\Phi_t^\top\Phi_t+\lambda_tI_d)^{-1}}^2
        \le 2L^2\log\left(\frac{\cN_\epsilon}{\delta}\right) + L^2\log\det\left(\left(\Phi_t^\top\Phi_t+\lambda_tI_d\right)\left(\lambda_tI_d\right)^{-1}\right) + td\epsilon^2\right] \ge 1-\delta.
\end{align}
\end{lemma}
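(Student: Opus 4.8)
The plan is to combine a standard self-normalized martingale concentration bound (for a fixed parameter $V$) with a covering argument to handle the supremum over the infinite set $\cV$. The overall structure has three stages: first prove the bound for a single fixed $V$, then extend it uniformly over the finite net $\cV_\epsilon$ via a union bound, and finally transfer it to all of $\cV$ using the Lipschitz assumption (ii) together with the net property (iii).

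\textbf{Step 1 (single fixed parameter).} For a fixed $V\in\cV$, the quantity $\|\Phi_t^\top X_t(V)\|_{(\Phi_t^\top\Phi_t+\lambda_tI_d)^{-1}}^2$ is a self-normalized sum of a martingale. I would invoke the classical self-normalized concentration inequality of Abbasi-Yadkori et al. (the ``method of mixtures'' bound), which states that with probability at least $1-\delta'$, simultaneously for all $t$,
\begin{equation*}
    \|\Phi_t^\top X_t(V)\|_{(\Phi_t^\top\Phi_t+\lambda_tI_d)^{-1}}^2 \le 2L^2\log\left(\frac{1}{\delta'}\right) + L^2\log\det\left(\left(\Phi_t^\top\Phi_t+\lambda_tI_d\right)\left(\lambda_tI_d\right)^{-1}\right).
\end{equation*}
Assumption (i) supplies the sub-Gaussian/boundedness constant $L$ needed here (each increment $x_t(V)$ is bounded by $L$, hence $L$-sub-Gaussian conditionally).

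\textbf{Step 2 (union over the net).} Applying Step 1 with $\delta' = \delta/\cN_\epsilon$ and taking a union bound over the $\cN_\epsilon$ elements of the net $\cV_\epsilon$, I get that with probability at least $1-\delta$, the bound holds simultaneously for every $V'\in\cV_\epsilon$ with $\log(1/\delta')=\log(\cN_\epsilon/\delta)$; this produces the first two terms in the claimed inequality.

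\textbf{Step 3 (net-to-full transfer).} For arbitrary $V\in\cV$, pick $V'\in\cV_\epsilon$ with $\|V-V'\|_\infty\le\epsilon$. I would write $\Phi_t^\top X_t(V) = \Phi_t^\top X_t(V') + \Phi_t^\top(X_t(V)-X_t(V'))$ and bound the $(\Phi_t^\top\Phi_t+\lambda_tI_d)^{-1}$-norm using the triangle inequality. The first piece is controlled by Step 2. For the error piece, assumption (ii) gives $\|x_\ell(V)-x_\ell(V')\|\le\|V-V'\|_\infty\le\epsilon$ for each coordinate, and I would crudely bound $\|\Phi_t^\top(X_t(V)-X_t(V'))\|_{(\Phi_t^\top\Phi_t+\lambda_tI_d)^{-1}}^2$ using $(\Phi_t^\top\Phi_t+\lambda_tI_d)^{-1}\preceq \lambda_t^{-1}I_d$ (or a direct coordinate estimate), yielding a term of order $td\epsilon^2$, matching the additive slack in the statement.

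\textbf{The main obstacle} I anticipate is getting the error term in Step 3 to come out cleanly as $td\epsilon^2$ rather than a looser bound: the transfer requires carefully tracking how the perturbation $\|X_t(V)-X_t(V')\|\le\sqrt{t}\,\epsilon$ interacts with the norm induced by $(\Phi_t^\top\Phi_t+\lambda_tI_d)^{-1}$, and one must use the feature normalization $\|\phi_\ell\|\le 1$ together with the dimension $d$ to avoid an extra factor. A secondary technical care point is ensuring the triangle-inequality splitting of a squared norm is handled correctly (e.g. via $\|a+b\|^2 \le 2\|a\|^2+2\|b\|^2$, which may require adjusting constants, or arguing the coarse $\epsilon$-dependence absorbs these factors). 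Everything else reduces to invoking the self-normalized bound and a routine union bound.
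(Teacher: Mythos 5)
Your plan is correct and follows essentially the same route as the paper's proof: a self-normalized concentration bound (Abbasi-Yadkori) for each fixed element of the net, a union bound at level $\delta/\cN_\epsilon$ over the net, and a Lipschitz transfer to all of $\cV$ contributing the $td\epsilon^2$ slack. The one point to be careful about, which you correctly flag as the main obstacle, is that the crude bound $(\Phi_t^\top\Phi_t+\lambda_tI_d)^{-1}\preceq\lambda_t^{-1}I_d$ gives only $t^2\epsilon^2/\lambda_t$, so one must instead use the coordinate/trace estimate $\|\Phi_t^\top(X_t(V)-X_t(V'))\|_{(\Phi_t^\top\Phi_t+\lambda_tI_d)^{-1}}^2\le t\epsilon^2\,\textup{tr}\big(\Phi_t(\Phi_t^\top\Phi_t+\lambda_tI_d)^{-1}\Phi_t^\top\big)\le td\epsilon^2$, which is exactly what the paper does.
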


\begin{remark}
With the same assumptions as above, if we add that for any $\epsilon\in(0,L)$, $\cN_\epsilon \le \big(1+\frac{2L\sqrt{d}}{\epsilon}\big)^d$, then by choosing $\epsilon = \frac{2L}{\sqrt{t}}$ the threshold becomes
\begin{align}
    L^2\left(2\log\left(\frac{1}{\delta}\right) + 2d\log(e^2(1+\sqrt{dt})) + \log\det\left(\left(\Phi_t^\top\Phi_t+\lambda_tI_d\right)\left(\lambda_tI_d\right)^{-1}\right)\right).
\end{align}
\end{remark}

\begin{remark}
Notice that by linearity of the trace, $\textup{tr}\big(\Phi_t^\top\Phi_t+\lambda_tI_d\big) = \sum_{\ell=1}^t \textup{tr}\big(\phi_\ell\phi_\ell^\top\big) + d\lambda_t = \sum_{\ell=1}^t \|\phi_\ell\|^2 + d\lambda_t \le t + d\lambda_t$.
Now, the trace of a matrix is the sum of its eigenvalues and the determinant their product. The maximum possible product positive scalars, given their sum, is attained when they are all equal. Hence
\begin{align*}
    \det\left(\left(\Phi_t^\top\Phi_t+\lambda_tI_d\right)\left(\lambda_tI_d\right)^{-1}\right)
    \le \frac{1}{\lambda_t^d} \left(\frac{t+d\lambda_t}{d}\right)^d
    = \left(1+\frac{t}{d\lambda_t}\right)^d.
\end{align*}
Therefore, choosing the regularization $\lambda_t = \frac{1}{d}$ and upper bounding $1+\sqrt{dt} \le 2\sqrt{dt}$ and $1+t \le 2t$, the threshold in lemma \ref{lemma:SelfNormalizedProcess} can be replaced with an upper bound
\begin{align}
    L^2\left(2\log\left(\frac{1}{\delta}\right) + d\log\left(8e^4dt^2\right)\right).
\end{align}
\end{remark}

\begin{proof}[Proof of Lemma \ref{lemma:SelfNormalizedProcess}]
The process can be easily controlled when focusing on a single $V\in\cV$ due to a self-normalized martingale concentration result.
In order to control uniformly over the whole set of parameters, we approximate it by a finite net, which raises an error term in the threshold and then control each parameters individually and conclude with a union bound.
In the following, $\delta>0$, $\epsilon\in(0,L)$ and $t\ge1$ are fixed.
Define the events
\begin{align*}
    & \cC_1 = \left\{\max_{V\in\cV} \|\Phi_t^\top X_t(V)\|_{(\Phi_t^\top\Phi_t+\lambda_tI_d)^{-1}}^2
        \le 2L^2\log\left(\frac{\cN_\epsilon}{\delta}\right) + L^2\log\det\left(\left(\Phi_t^\top\Phi_t+\lambda_tI_d\right)\left(\lambda_tI_d\right)^{-1}\right) + td\epsilon^2\right\},\\
    & \cC_2 = \left\{\max_{V\in\cV_\epsilon} \|\Phi_t^\top X_t(V)\|_{(\Phi_t^\top\Phi_t+\lambda_tI_d)^{-1}}^2
        \le 2L^2\log\left(\frac{\cN_\epsilon}{\delta}\right) + L^2\log\det\left(\left(\Phi_t^\top\Phi_t+\lambda_tI_d\right)\left(\lambda_tI_d\right)^{-1}\right)\right\},\\
    & \cC_3(V) = \left\{\|\Phi_t^\top X_t(V)\|_{(\Phi_t^\top\Phi_t\lambda_tI_d)^{-1}}^2
        \le 2L^2\log\left(\frac{\cN_\epsilon}{\delta}\right) + L^2\log\det\left(\left(\Phi_t^\top\Phi_t+\lambda_tI_d\right)\left(\lambda_tI_d\right)^{-1}\right)\right\},
\end{align*}
where the last event is defined for any $V\in\cV_\epsilon$.
Recall that our goal is to show that $\cC_1$ holds with probability at least $1-\delta$.

\medskip
\emph{(i) $\forall V\in\cV_\epsilon, \PP[\cC_3(V)] \ge 1-\frac{\delta}{\cN_\epsilon}$.}
This result is a concentration inequality on self-normalized processes. It can be found as lemma 9 in \cite{NIPS2011_e1d5be1c} for example.
To apply it we use the fact that under all the assumptions, for any $V\in\cV_\epsilon$ we have $\|x_t(V)\| \le L+\epsilon \le 2L$.

\medskip
\emph{(ii) $\PP[\cC_2] \ge 1-\delta$.}
We can immediately see that $\cC_2 = \bigcap_{V\in\cV_\epsilon} \cC_3(V)$. Then an union bound gives
\begin{align*}
    \PP[\cC_2]
    \ge 1 - \sum_{V\in\cV_\epsilon} \big(1 - \PP[\cC_3(V)]\big)
    \ge 1 - \sum_{V\in\cV_\epsilon} \frac{\delta}{\cN_\epsilon}
    = 1 - \delta.
\end{align*}

\medskip
\emph{(iii) $\PP[\cC_1] \ge 1-\delta$.}
We want to show that $\cC_2 \subset \cC_1$. Notice that if $V\in\cV$ and $V'\in\cV_\epsilon$ such that $\|V-V'\|_\infty \le \epsilon$, then by using assumption (ii) we have
\begin{align*}
    \left\|\Phi_t^\top(X_t(V)-X_t(V'))\right\|_{(\Phi_t^\top\Phi_t\lambda_tI_d)^{-1}}^2
    &= \left\|(\Phi_t^\top\Phi_t+\lambda_tI_d)^{-1/2}\Phi_t^\top(X_t(V)-X_t(V'))\right\|^2\\
    &= \sum_{i=1}^d \left|\big((\Phi_t^\top\Phi_t+\lambda_tI_d)^{-1/2}\big)_i^\top\Phi_t^\top(X_t(V)-X_t(V'))\right|^2\\
    &\le \sum_{i=1}^d \left(\sum_{\ell=1}^t \left|\big((\Phi_t^\top\Phi_t+\lambda_tI_d)^{-1/2}\big)_i^\top\phi_\ell\right|\right)^2 \|X_t(V)-X_t(V')\|_\infty^2\\
    &\le t\sum_{i=1}^d \sum_{\ell=1}^t\big(\big((\Phi_t^\top\Phi_t+\lambda_tI_d)^{-1/2}\big)_i^\top\phi_\ell\big)^2 \max_{1\le\ell\le t}\|x_t(V)-x_t(V')\|^2\\
    &\le t\sum_{\ell=1}^t \left\|(\Phi_t^\top\Phi_t+\lambda_tI_d)^{-1/2}\phi_\ell\right\|^2 \|V-V'\|_\infty^2\\
    &\le t\epsilon^2 \textup{tr}\left(\Phi_t(\Phi_t^\top\Phi_t+\lambda_tI_d)^{-1}\Phi_t^\top\right)\\
    &= t\epsilon^2 \textup{tr}\left((\Phi_t^\top\Phi_t+\lambda_tI_d)^{-1}(\Phi_t^\top\Phi_t + \lambda_tI_d - \lambda_tI_d)\right)\\
    &= t\epsilon^2 \left(d - \lambda_t\textup{tr}\left((\Phi_t^\top\Phi_t+\lambda_tI_d)^{-1}\right)\right)\\
    &\le td\epsilon^2,
\end{align*}
and we can finally write
\begin{align*}
    \max_{V\in\cV} \|\Phi_t^\top X_t(V)\|_{(\Phi_t^\top\Phi_t+\lambda_tI_d)^{-1}}^2
    \le \max_{V\in\cV_\epsilon} \|\Phi_t^\top X_t(V)\|_{(\Phi_t^\top\Phi_t+\lambda_tI_d)^{-1}}^2 + td\epsilon^2,
\end{align*}
which implies that $\cC_2 \subset \cC_1$ and conclude the proof.
\end{proof}

In order to use this result we will need to have access to such nets. First we introduce a parametrization of the value function in the linear model.

\begin{lemma}
\label{lemma:Xi}
Under the linear assumptions, any value function can be parametrized by a $d$-dimensional vector the same way the expected rewards and transition probabilities can. Specifically :
\begin{itemize}
    \item In the discounted setup, for any policy $\pi$, there exists a vector $\xi_\mdp^\pi \in \R^d$ such that for any pair $(s,a) \in \cS\times\cA$, $Q_\mdp^\pi(s,a) = \phi(s,a)^\top\xi_\mdp^\pi$.
    Moreover, we have $\xi_\mdp^\pi = \theta_\mdp + \gamma\mu_\mdp^\top V_\mdp^\pi$ and $\|\xi_\mdp^\pi\| \le \frac{\sqrt{d}}{1-\gamma}$.
    \item In the episodic setup, for any policy $\pi$ and any $h\in[H]$, there exists a vector $\xi_{\mdp,h}^\pi \in \R^d$ such that for any pair $(s,a) \in \cS\times\cA$, $Q_{\mdp,h}^\pi(s,a) = \phi(s,a)^\top\xi_{\mdp,h}^\pi$.
    Moreover, we have $\xi_{\mdp,h}^\pi = \theta_{\mdp,h} + \mu_{\mdp,h}^\top V_{\mdp,h+1}^\pi$ and $\|\xi_\mdp^\pi\| \le H\sqrt{d}$.
\end{itemize}
\end{lemma}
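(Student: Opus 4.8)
The plan is to read off the parametrization directly from the Bellman equation and to obtain the norm bound by a triangle-inequality argument combined with the a priori boundedness of the value function. First I would start from the Bellman equation for $Q_\mdp^\pi$ in the discounted setting, namely $Q_\mdp^\pi(s,a) = r_\mdp(s,a) + \gamma \sum_{s'} p_\mdp(s,a,s') V_\mdp^\pi(s')$. Substituting the linear parametrizations $r_\mdp(s,a) = \phi(s,a)^\top \theta_\mdp$ and $p_\mdp(s,a,s') = \phi(s,a)^\top \mu_\mdp(s')$ from \eqref{eq:linmdp}, and factoring out $\phi(s,a)^\top$ (which does not depend on the summation variable $s'$), I obtain
\begin{align*}
Q_\mdp^\pi(s,a) = \phi(s,a)^\top \Big( \theta_\mdp + \gamma \sum_{s'} \mu_\mdp(s') V_\mdp^\pi(s') \Big) = \phi(s,a)^\top \big(\theta_\mdp + \gamma \mu_\mdp^\top V_\mdp^\pi\big),
\end{align*}
which identifies $\xi_\mdp^\pi = \theta_\mdp + \gamma \mu_\mdp^\top V_\mdp^\pi$ and establishes the parametrization.

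For the norm bound, I would use the triangle inequality $\|\xi_\mdp^\pi\| \le \|\theta_\mdp\| + \gamma \|\mu_\mdp^\top V_\mdp^\pi\|$ together with the assumption $\|\theta_\mdp\| \le \sqrt{d}$. The key observation is that, since the reward distributions are supported in $[0,1]$, the value function satisfies $0 \le V_\mdp^\pi(s') \le (1-\gamma)^{-1}$ pointwise. Working coordinate by coordinate, $\big|(\mu_\mdp^\top V_\mdp^\pi)_i\big| \le (1-\gamma)^{-1} \sum_{s'} |\mu_\mdp(s')_i|$, and since $|a_i| \le b_i$ for all $i$ with $b\ge 0$ implies $\|a\| \le \|b\|$, this yields $\|\mu_\mdp^\top V_\mdp^\pi\| \le (1-\gamma)^{-1} \big\|\sum_{s'} |\mu_\mdp(s')|\big\| \le (1-\gamma)^{-1}\sqrt{d}$ by the assumption on $\mu_\mdp$. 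Combining the two bounds gives $\|\xi_\mdp^\pi\| \le \sqrt{d} + \gamma (1-\gamma)^{-1}\sqrt{d} = (1-\gamma)^{-1}\sqrt{d}$, as claimed.

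The episodic case follows the exact same two steps applied to the finite-horizon Bellman equation $Q_{\mdp,h}^\pi(s,a) = r_{\mdp,h}(s,a) + \sum_{s'} p_{\mdp,h}(s,a,s') V_{\mdp,h+1}^\pi(s')$, which after the same substitution and factorization yields $\xi_{\mdp,h}^\pi = \theta_{\mdp,h} + \mu_{\mdp,h}^\top V_{\mdp,h+1}^\pi$. The only change in the norm computation is the value bound: $V_{\mdp,h+1}^\pi(s')$ is a sum of at most $H-h \le H-1$ rewards in $[0,1]$, hence is bounded by $H-1$, which gives $\|\xi_{\mdp,h}^\pi\| \le \sqrt{d} + (H-1)\sqrt{d} = H\sqrt{d}$. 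There is no genuine obstacle in this lemma; the only point that requires a moment of care is the passage from the pointwise value-function bound to the Euclidean-norm bound on $\mu^\top V$, which rests on the stated assumption controlling $\big\|\sum_{s'} |\mu_\mdp(s')|\big\|$ rather than on any probabilistic structure of the measures $\mu_\mdp$ (indeed, this is exactly why the remark preceding the lemma stresses that $\mdpt$ need not induce genuine probability vectors).
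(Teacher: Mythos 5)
Your proof is correct and follows essentially the same route as the paper: read off $\xi_\mdp^\pi = \theta_\mdp + \gamma\mu_\mdp^\top V_\mdp^\pi$ from the Bellman equation and the linearity assumptions, then bound the norm via the triangle inequality, $\|V_\mdp^\pi\|_\infty \le (1-\gamma)^{-1}$ (resp.\ $H-h$ in the episodic case), and the assumption $\bigl\|\sum_{s}|\mu_\mdp(s)|\bigr\| \le \sqrt{d}$. Your coordinate-wise justification of the step $\|\mu_\mdp^\top V\| \le \|V\|_\infty \bigl\|\sum_{s}|\mu_\mdp(s)|\bigr\|$ is a detail the paper states without proof, but the argument is otherwise identical.
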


\begin{proof}[Proof of lemma \ref{lemma:Xi}]

\medskip

\emph{Discounted Setup}
Using the Bellman equation together with the linear assumptions we directly have $Q_\mdp^\pi(s,a) = \phi(s,a)^\top\big(\theta_\mdp + \gamma\mu_\mdp^\top V_\mdp^\pi\big)$.
Then
\begin{align*}
    \big\|\theta_\mdp + \gamma\mu_\mdp^\top V_\mdp^\pi\big\|
    \le \|\theta_\mdp\| + \gamma\big\|\sum_{s\in\cS}|\mu_\mdp(s)|\big\| \|V_\mdp^\pi\|_\infty
    \le \sqrt{d} + \gamma\frac{\sqrt{d}}{1-\gamma}
    = \frac{\sqrt{d}}{1-\gamma}.
\end{align*}

\medskip
\emph{Episodic Setup}
Using the Bellman equation together with the linear assumptions we directly have $Q_{\mdp,h}^\pi(s,a) = \phi(s,a)^\top\big(\theta_{\mdp,h} + \mu_{\mdp,h}^\top V_{\mdp,h+1}^\pi\big)$.
Then
\begin{align*}
    \big\|\theta_{\mdp,h} + \mu_{\mdp,h}^\top V_{\mdp,h+1}^\pi\big\|
    \le \|\theta_{\mdp,h}\| + \big\|\sum_{s\in\cS}|\mu_{\mdp,h}(s)|\big\| \|V_{\mdp,h+1}^\pi\|_\infty
    \le \sqrt{d} + \sqrt{d}(H-h)
    \le H\sqrt{d}.
\end{align*}
\end{proof}

The existence of the nets is then ensured by the following lemma.

\begin{lemma}
\label{lemma:Net}
Let $\epsilon>0$. The set of optimal value functions admits a finite $\epsilon$-net. Specifically :
\begin{itemize}
    \item[(i)] In the discounted setup, the set $\cV$ of all optimal value functions admits an $\epsilon$-net $\cV_\epsilon$ of cardinality $\cN_\epsilon \le \big(1+\frac{2\sqrt{d}}{(1-\gamma)\epsilon}\big)^d$.
    \item[(ii)] In the episodic setup, the set $\cV(h)$ of all optimal value functions at step $h\in[H]$ admits an $\epsilon$-net $\cV_\epsilon$ of cardinality $\cN_\epsilon \le \big(1+\frac{2H\sqrt{d}}{\epsilon}\big)^d$.
\end{itemize}
\end{lemma}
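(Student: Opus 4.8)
The plan is to reduce the covering of the (infinite-dimensional) set of optimal value functions to the covering of a Euclidean ball in $\R^d$, exploiting the linear parametrization provided by Lemma \ref{lemma:Xi}. First I would observe that every optimal value function is a maximum over actions of a linear form in a norm-bounded parameter vector. Indeed, writing $V^\star(s) = \max_{a} Q^\star(s,a)$ and invoking Lemma \ref{lemma:Xi} with the optimal policy, each optimal value function has the form $V_\xi(s) = \max_{a\in\cA} \phi(s,a)^\top \xi$ for some $\xi\in\R^d$ with $\|\xi\| \le R$, where $R = \frac{\sqrt{d}}{1-\gamma}$ in the discounted case and $R = H\sqrt{d}$ in the episodic case (at any fixed step $h$, using the step-$h$ bound of Lemma \ref{lemma:Xi}). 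Thus $\cV$ (resp. $\cV(h)$) is contained in the image of the ball $\{\xi : \|\xi\|\le R\}$ under the map $\xi \mapsto V_\xi$.

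The key step is to show that this map is $1$-Lipschitz from the Euclidean norm on parameters to the sup-norm on value functions, so that a net in parameter space pushes forward to a net of the same mesh in value-function space. For two parameters $\xi,\xi'$ and any state $s$, using the elementary inequality $|\max_a f(a) - \max_a g(a)| \le \max_a |f(a)-g(a)|$ together with Cauchy--Schwarz and the normalization $\|\phi(s,a)\| \le 1$, I would bound
\begin{align*}
    |V_\xi(s) - V_{\xi'}(s)|
    \le \max_{a\in\cA} \big|\phi(s,a)^\top(\xi-\xi')\big|
    \le \max_{a\in\cA} \|\phi(s,a)\|\,\|\xi-\xi'\|
    \le \|\xi-\xi'\|,
\end{align*}
and taking the supremum over $s$ gives $\|V_\xi - V_{\xi'}\|_\infty \le \|\xi-\xi'\|$.

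It then remains to cover the Euclidean ball of radius $R$ in $\R^d$: by the standard volumetric covering bound (see, e.g., \cite{vershynin2018high}), it admits an $\epsilon$-net $\{\xi_1,\dots,\xi_N\}$ in Euclidean norm with $N \le (1 + 2R/\epsilon)^d$. By the Lipschitz property just established, $\{V_{\xi_1},\dots,V_{\xi_N}\}$ is then an $\epsilon$-net of $\cV$ (resp. $\cV(h)$) for the sup-norm, so $\cN_\epsilon \le (1+2R/\epsilon)^d$. Substituting $R = \frac{\sqrt{d}}{1-\gamma}$ yields part (i) and $R = H\sqrt{d}$ yields part (ii).

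I do not expect a genuine difficulty here; the proof is a routine covering argument. The only point deserving care is the reduction of the pointwise $\max$ over actions to a Lipschitz statement, which is what guarantees that the sup-norm mesh is controlled by the Euclidean mesh without incurring any factor depending on $|\cA|$ (so that the net size depends only on $d$ and $R$). One should also make sure the parameter bound from Lemma \ref{lemma:Xi} is applied with the correct norm and, in the episodic case, at the correct step $h$, since it is that bound alone that fixes the radius $R$ and hence the stated cardinalities.
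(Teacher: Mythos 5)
Your proposal is correct and follows essentially the same route as the paper: reduce to the linear parametrization $V_\xi(\cdot)=\max_a\phi(\cdot,a)^\top\xi$ from Lemma \ref{lemma:Xi}, establish the $1$-Lipschitz bound $\|V_\xi-V_{\xi'}\|_\infty\le\|\xi-\xi'\|$, and push forward a standard $\epsilon$-net of the Euclidean ball of radius $R$. No gaps.
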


\begin{proof}[Proof of lemma \ref{lemma:Net}]
\emph{Discounted setup.}
Using the parametrization given by lemma \ref{lemma:Xi} and adding the fact that optimal value functions $(V,Q)$ verify $V(\cdot) = \max_{a\in\cA} Q(\cdot,a)$, we can write
\begin{align*}
    \cV \subset \left\{V\in\R^\cS : \exists\xi\in\R^d, V(\cdot) = \max_{a\in\cA} \phi(\cdot,a)^\top\xi, \|\xi\|\le\frac{\sqrt{d}}{1-\gamma}\right\}.
\end{align*}
If $V,V'\in\cV$ then, considering the corresponding $\xi,\xi'$ as above,
\begin{align*}
    \|V-V'\|_\infty \le \max_{s,a} \|\phi(s,a)^\top(\xi-\xi')\| \le \|\xi-\xi'\|.
\end{align*}
Therefore, using this parametrization by $\xi$, a $\epsilon$-net of $\cV$ can be obtained through a $\epsilon$-net of the euclidean ball of radius $\frac{\sqrt{d}}{1-\gamma}$ in $\R^d$. Such net exists with cardinality $\big(1+\frac{2\sqrt{d}}{(1-\gamma)\epsilon}\big)^d$.

\medskip
\emph{Episodic setup.}
Let $h\in[H]$. With the same reasoning we have
\begin{align*}
    \cV(h) \subset \left\{V\in\R^\cS : \exists\xi\in\R^d, V(\cdot) = \max_{a\in\cA} \phi(\cdot,a)^\top\xi, \|\xi\|\le H\sqrt{d}\right\}
\end{align*}
and a net exists with cardinality $\big(1+\frac{2H\sqrt{d}}{\epsilon}\big)^d$.
\end{proof}

\subsection{Proof of Proposition \ref{prop:ConcentrationLSEDiscounted}}

In this section we show that under any sampling rule the $\frac{1}{d}$-regularized least square estimators verify the following concentration inequality :
For any $\delta\in(0,1)$ the events
\begin{align*}
    \cC(t) = \left\{\distMDPt{\Vt^\star}_{t\Lambda(\omega_t)}^2
    \le \frac{2}{(1-\gamma)^2} \left(2\log\left(\frac{\sqrt{e}\zeta(2)t^2}{\delta}\right) + d\log\left(8e^4dt^2\right)\right)\right\}
\end{align*}
for all $t\ge1$ hold simultaneously with probability at least $1-\delta$.
More precisely we are going to show that for any $t\ge1$, $\PP[\cC(t)] \ge 1-\frac{\delta}{\zeta(2)t^2}$.
The desired result is then shown via a simple union bound over $t$.
It is hard to control this quantity with a dynamic value function, therefore we will control it for all optimal value functions by controlling $\max_{V\in\cV} \big\|\diffMDPt{V}\big\|_{t\Lambda(\omega_t)}^2$ instead and use a net argument. 

\medskip
Denote $\delta_t = \frac{\delta}{\zeta(2)t^2}$ for clarity. Recall the definitions of the $\frac{1}{d}$-regularized least square estimators $\thetat$ and $\mut$ :
\begin{align*}
    &\thetat = \left(\Phi_t^\top\Phi_t + \frac{1}{d}I_d\right)^{-1} \Phi_t^\top R_t,
    &\mut(s) = \left(\Phi_t^\top\Phi_t + \frac{1}{d}I_d\right)^{-1} \Phi_t^\top S_t(s),
\end{align*}
where $\Phi_t = \begin{pmatrix}\phi(s_1,a_1) & \cdots & \phi(s_t,a_t) \end{pmatrix}^\top$, $R_t = \begin{pmatrix} r_1 & \cdots & r_t \end{pmatrix}^\top$ and $S_t(s) = \begin{pmatrix} \delta_{s,s'_1} & \cdots & \delta_{s,s'_t} \end{pmatrix}^\top$. Recall that $t\Lambda(\omega_t) = \Phi_t^\top\Phi_t$.
For any $V\in\cV$
\begin{align*}
    &\diffMDPt{V}\\
    &= \left(\Phi_t^\top\Phi_t + \frac{1}{d}I_d\right)^{-1} \left(\Phi_t^\top R_t - \left(\Phi_t^\top\Phi_t + \frac{1}{d}I_d\right)\theta_\mdp + \gamma\left(\Phi_t^\top S_t^\top - \left(\Phi_t^\top\Phi_t + \frac{1}{d}I_d\right)\mu_\mdp^\top\right)V\right)\\
    &= \left(\Phi_t^\top\Phi_t + \frac{1}{d}I_d\right)^{-1} \Phi_t^\top\left(R_t-\Phi_t\theta_\mdp + \gamma(S_t^\top-\Phi_t\mu_\mdp^\top)V\right) - \frac{1}{d}\left(\Phi_t^\top\Phi_t + \frac{1}{d}I_d\right)^{-1}\left(\theta_\mdp + \gamma\mu_\mdp^\top V\right)\\
    &= \left(\Phi_t^\top\Phi_t + \frac{1}{d}I_d\right)^{-1} \Phi_t^\top X_t(V) - \frac{1}{d}\left(\Phi_t^\top\Phi_t + \frac{1}{d}I_d\right)^{-1} \xi(V)
\end{align*}
where we denote $\xi(V) = \big(\theta_\mdp + \gamma\mu_\mdp^\top V\big)$ and define $x_t(V) = r_t-\phi_t^\top\theta_\mdp + \gamma(V(s'_t)-\phi_t^\top\mu_\mdp^\top V) = r_t-\E[r_t|\cF_{t-1}] + \gamma(V(s'_t)-\E[V(s'_t)|\cF_{t-1}])$ and $X_t(V) = R_t-\Phi_t\theta_\mdp + \gamma(S_t^\top-\Phi_t\mu_\mdp^\top)V = \begin{pmatrix} x_1(V)& \cdots & x_t(V) \end{pmatrix}^\top$.
It follows that
\begin{align*}
    \distMDPt{V}_{\Phi_t^\top\Phi_t}^2
    &\le \left\|\left(\Phi_t^\top\Phi_t + \frac{1}{d}I_d\right)^{-1} \Phi_t^\top X_t(V) - \frac{1}{d}\left(\Phi_t^\top\Phi_t + \frac{1}{d}I_d\right)^{-1}\xi(V)\right\|_{\Phi_t^\top\Phi_t + \frac{1}{d}I_d}^2\\
    &= \left\|\Phi_t^\top X_t(V) - \frac{1}{d}\xi(V)\right\|_{(\Phi_t^\top\Phi_t + \frac{1}{d}I_d)^{-1}}^2\\
    &\le 2\left\|\Phi_t^\top X_t(V)\right\|_{(\Phi_t^\top\Phi_t + \frac{1}{d}I_d)^{-1}}^2 + \frac{2}{d^2}\left\|\xi(V)\right\|_{(\Phi_t^\top\Phi_t + \frac{1}{d}I_d)^{-1}}^2
\end{align*}
Lemma \ref{lemma:Xi} states that $\|\xi(V)\| \le \frac{\sqrt{d}}{1-\gamma}$. Since the greatest eigenvalue of $(\Phi_t^\top\Phi_t + \frac{1}{d}I_d)^{-1}$ can be upper bounded by $d$, we can finally write
\begin{align*}
    \max_{V\in\cV} \distMDPt{V}_{\Phi_t^\top\Phi_t}^2
    \le 2\max_{V\in\cV} \left\|\Phi_t^\top X_t(V)\right\|_{(\Phi_t^\top\Phi_t + \frac{1}{d}I_d)^{-1}}^2 + \frac{2}{(1-\gamma)^2}.
\end{align*}

It is immediate to see that the first two conditions in lemma \ref{lemma:SelfNormalizedProcess} are satisfied by taking $L=(1-\gamma)^{-1}$ and the third one is given by lemma \ref{lemma:Net}.
Therefore we can apply the lemma with $\lambda_t = \frac{1}{d}$ and obtain for all $t\ge1$
\begin{align*}
    \PP\left[\max_{V\in\cV} \left\|\Phi_t^\top X_t(V)\right\|_{(\Phi_t^\top\Phi_t + \frac{1}{d}I_d)^{-1}}^2 \le \frac{1}{(1-\gamma)^2}\left(2\log\left(\frac{1}{\delta_t}\right) + d\log\left(8e^4dt^2\right)\right)\right] \ge 1 - \delta_t.
\end{align*}

Using the bound above, this event directly implies $\cC(t)$ and we can finally conclude that $\PP[\cC(t)] \ge 1-\delta_t$ for all $t\ge1$.

\subsection{Proof of Proposition \ref{prop:ConcentrationLSEEpisodic}}
In this section we show that under any sampling rule the $\frac{1}{d}$-regularized least square estimators verify the following concentration inequality :
For any $\delta\in(0,1)$ and any $h\in[H]$ the events
\begin{align*}
    \cC(t) = \left\{\distMDPtH{\VtH{h+1}^\star}_{t\Lambda(\omega_t)}^2
    \le 2H^2 \left(2\log\left(\frac{\sqrt{e}\zeta(2)t^2}{\delta}\right) + d\log\left(8e^4dt^2\right)\right)\right\}
\end{align*}
for all $t\ge1$ hold simultaneously with probability at least $1-\delta$.
As in the discounted setup it is enough to show that for any $t\ge1$, $\PP[\cC(t)] \ge 1-\frac{\delta}{\zeta(2)t^2}$ and we will follow the same reasoning.

\medskip
Denote $\delta_t = \frac{\delta}{\zeta(2)t^2}$ for clarity. Recall the definitions of the $\frac{1}{d}$-regularized least square estimators $\thetat$ and $\mut$ :
\begin{align*}
    &\thetath = \left(\Phi_t^\top\Phi_t + \frac{1}{d}I_d\right)^{-1} \Phi_t^\top R_{t,h},
    &\muth(s) = \left(\Phi_t^\top\Phi_t + \frac{1}{d}I_d\right)^{-1} \Phi_t^\top S_{t,h}(s),
\end{align*}
where $\Phi_t = \begin{pmatrix}\phi(s_1,a_1) & \cdots & \phi(s_t,a_t) \end{pmatrix}^\top$, $R_{t,h} = \begin{pmatrix} r_{1,h} & \cdots & r_{t,h} \end{pmatrix}^\top$ and $S_{t,h}(s) = \begin{pmatrix} \delta_{s,s'_{1,h}} & \cdots & \delta_{s,s'_{t,h}} \end{pmatrix}^\top$. Recall that $t\Lambda(\omega_t) = \Phi_t^\top\Phi_t$.
For any $V\in\cV(h+1)$
\begin{align*}
    &\diffMDPtH{V}\\
    &= \left(\Phi_t^\top\Phi_t + \frac{1}{d}I_d\right)^{-1} \left(\Phi_t^\top R_t - \left(\Phi_t^\top\Phi_t + \frac{1}{d}I_d\right)\theta_\mdp + \left(\Phi_t^\top S_t^\top - \left(\Phi_t^\top\Phi_t + \frac{1}{d}I_d\right)\mu_\mdp^\top\right)V\right)\\
    &= \left(\Phi_t^\top\Phi_t + \frac{1}{d}I_d\right)^{-1} \Phi_t^\top\left(R_t-\Phi_t\theta_\mdp + (S_t^\top-\Phi_t\mu_\mdp^\top)V\right) - \frac{1}{d}\left(\Phi_t^\top\Phi_t + \frac{1}{d}I_d\right)^{-1}\left(\theta_\mdp + \mu_\mdp^\top V\right)\\
    &= \left(\Phi_t^\top\Phi_t + \frac{1}{d}I_d\right)^{-1} \Phi_t^\top X_t(V) - \frac{1}{d}\left(\Phi_t^\top\Phi_t + \frac{1}{d}I_d\right)^{-1} \xi(V)
\end{align*}
where we denote $\xi(V) = \big(\theta_\mdp + \mu_\mdp^\top V\big)$ and define $x_t(V) = r_t-\phi_t^\top\theta_\mdp + (V(s'_t)-\phi_t^\top\mu_\mdp^\top V) = r_t-\E[r_t|\cF_{t-1}] + (V(s'_t)-\E[V(s'_t)|\cF_{t-1}])$ and $X_t(V) = R_t-\Phi_t\theta_\mdp + (S_t^\top-\Phi_t\mu_\mdp^\top)V = \begin{pmatrix} x_1(V)& \cdots & x_t(V) \end{pmatrix}^\top$.
It follows that
\begin{align*}
    \distMDPtH{V}_{\Phi_t^\top\Phi_t}^2
    &\le \left\|\left(\Phi_t^\top\Phi_t + \frac{1}{d}I_d\right)^{-1} \Phi_t^\top X_t(V) - \frac{1}{d}\left(\Phi_t^\top\Phi_t + \frac{1}{d}I_d\right)^{-1}\xi(V)\right\|_{\Phi_t^\top\Phi_t + \frac{1}{d}I_d}^2\\
    &= \left\|\Phi_t^\top X_t(V) - \frac{1}{d}\xi(V)\right\|_{(\Phi_t^\top\Phi_t + \frac{1}{d}I_d)^{-1}}^2\\
    &\le 2\left\|\Phi_t^\top X_t(V)\right\|_{(\Phi_t^\top\Phi_t + \frac{1}{d}I_d)^{-1}}^2 + \frac{2}{d^2}\left\|\xi(V)\right\|_{(\Phi_t^\top\Phi_t + \frac{1}{d}I_d)^{-1}}^2
\end{align*}
Lemma \ref{lemma:Xi} states that $\|\xi(V)\| \le H\sqrt{d}$. Since the greatest eigenvalue of $(\Phi_t^\top\Phi_t + \frac{1}{d}I_d)^{-1}$ can be upper bounded by $d$, we can finally write
\begin{align*}
    \max_{V\in\cV} \distMDPtH{V}_{\Phi_t^\top\Phi_t}^2
    \le 2\max_{V\in\cV} \left\|\Phi_t^\top X_t(V)\right\|_{(\Phi_t^\top\Phi_t + \frac{1}{d}I_d)^{-1}}^2 + 2H^2.
\end{align*}

It is immediate to see that the first two conditions in lemma \ref{lemma:SelfNormalizedProcess} are satisfied by taking $L=H$ and the third one is given by lemma \ref{lemma:Net}.
Therefore we can apply the lemma with $\lambda_t = \frac{1}{d}$ and obtain for all $t\ge1$
\begin{align*}
    \PP\left[\max_{V\in\cV} \left\|\Phi_t^\top X_t(V)\right\|_{(\Phi_t^\top\Phi_t + \frac{1}{d}I_d)^{-1}}^2 \le H^2\left(2\log\left(\frac{1}{\delta_t}\right) + d\log\left(8e^4dt^2\right)\right)\right] \ge 1 - \delta_t.
\end{align*}

Using the bound above, this event directly implies $\cC(t)$ and we can finally conclude that $\PP[\cC(t)] \ge 1-\delta_t$ for all $t\ge1$.

\subsection{Concentration bounds for self-normalized processes}

 Proposition \ref{prop:sn} is borrowed from  \cite{abbasi2011improved}.

\begin{prop}\label{prop:sn}
Let $( \cF_t )_{t\ge 1}$ be a filtration. Let $ (\eta_t  )_{t\ge 1}$ be a stochastic process adapted to $ (\cF_t)_{t\ge 1}$ and taking values in $\RR$. Let $( \phi_t )_{t \ge 1}$ be a predictable stochastic process with respect to $( \cF_t )_{t\ge 1}$, taking values in $\RR^d$. Furthermore, assume that $\eta_{t+1}$, conditionally on $\cF_t$, is a zero-mean, $\sigma^2$-sub-gaussian \footnote{We say that a random variable is $\sigma^2$-sub-gaussian, if for all $\lambda \in \RR$, $\EE[\exp(\lambda X )] \le \exp\left( \lambda^2 \sigma^2 / 2 \right)$.}. Then, for all $\delta \in (0,1)$,
\begin{align*}
    \PP \left( \left\Vert \left(\sum_{\ell=1}^t\phi_\ell \phi_\ell^\top + \lambda I_d \right)^{-1/2}   \left(\sum_{\ell = 1}^t \phi_\ell \eta_\ell  \right)\right\Vert^2        \le 2 \sigma^2 \log\left( \frac{\det( (\lambda^{-1} (\sum_{\ell = 1}^t \phi_\ell \phi_\ell^\top ) + I_d) )}{\delta}\right) \right) \ge 1 - \delta.
\end{align*}
\end{prop}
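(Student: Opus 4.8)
The plan is to reprove this self-normalized bound via the \emph{method of mixtures} (pseudo-maximization), following Abbasi-Yadkori et al. Write $S_t = \sum_{\ell=1}^t \phi_\ell\eta_\ell$, $A_t = \sum_{\ell=1}^t\phi_\ell\phi_\ell^\top$ and $\bar V_t = \lambda I_d + A_t$, so that the quantity to control is $\|S_t\|_{\bar V_t^{-1}}^2 = S_t^\top \bar V_t^{-1}S_t$. The starting observation is that for each fixed $x\in\RR^d$ the process $M_t^x = \exp\big(x^\top S_t - \tfrac{\sigma^2}{2}x^\top A_t x\big)$ is a nonnegative supermartingale with $\E[M_t^x]\le 1$. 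Indeed, since $\phi_\ell$ is $\cF_{\ell-1}$-measurable and $\eta_\ell$ is conditionally zero-mean and $\sigma^2$-sub-Gaussian given $\cF_{\ell-1}$, the conditional expectation of the multiplicative increment $\exp\big(x^\top\phi_\ell\eta_\ell - \tfrac{\sigma^2}{2}(x^\top\phi_\ell)^2\big)$ is at most $1$ by the very definition of sub-Gaussianity.

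The key step is then to remove the dependence on the free direction $x$ by integrating it out against a Gaussian prior rather than optimizing over it. I would set $M_t = \int_{\RR^d} M_t^x\,dh(x)$, where $h$ is the density of $\cN\big(0,(\sigma^2\lambda)^{-1}I_d\big)$; by Tonelli and the per-$x$ supermartingale property, $M_t$ is again a nonnegative supermartingale with $\E[M_t]\le \E[M_0]=1$. A Gaussian integral (completing the square in the exponent $x^\top S_t - \tfrac12 x^\top(\sigma^2 A_t + \sigma^2\lambda I_d)x$, whose quadratic part is exactly $\tfrac{\sigma^2}{2}x^\top\bar V_t x$) yields the closed form
\[
M_t = \det(\lambda^{-1}\bar V_t)^{-1/2}\,\exp\Big(\tfrac{1}{2\sigma^2}\|S_t\|_{\bar V_t^{-1}}^2\Big),
\]
where the choice of prior covariance is dictated precisely by the requirement that the quadratic form produced by the integration be $\bar V_t = \lambda I_d + A_t$ and that the normalizing determinant reduce to $\det(\lambda^{-1}\bar V_t)$.

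Finally I would apply Markov's inequality to the nonnegative random variable $M_t$ (or Ville's maximal inequality for nonnegative supermartingales if one wants the bound to hold simultaneously for all $t$), giving $\PP(M_t \ge 1/\delta)\le \delta\,\E[M_t]\le \delta$. On the complementary event, rearranging $M_t < 1/\delta$ and taking logarithms gives $\|S_t\|_{\bar V_t^{-1}}^2 < 2\sigma^2\log\big(\det(\lambda^{-1}\bar V_t)^{1/2}/\delta\big)$; since $\lambda^{-1}\bar V_t = I_d + \lambda^{-1}A_t \succeq I_d$ has determinant at least $1$, one may replace the square root by the full determinant to recover exactly the stated bound $2\sigma^2\log\big(\det(\lambda^{-1}A_t + I_d)/\delta\big)$. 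The only delicate points are verifying the supermartingale property under the correct predictability/measurability conventions and justifying the interchange of expectation and integration for the mixture; the determinant bookkeeping and the Gaussian integral are routine once the prior covariance $(\sigma^2\lambda)^{-1}I_d$ is chosen.
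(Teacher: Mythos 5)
Your proof is correct and is essentially the proof the paper relies on: the paper simply cites Abbasi-Yadkori et al.\ for this proposition, and your method-of-mixtures argument (per-direction supermartingale, Gaussian prior with covariance $(\sigma^2\lambda)^{-1}I_d$, Ville/Markov, then loosening $\det^{1/2}$ to $\det$ since $\lambda^{-1}\bar V_t \succeq I_d$) is exactly the argument from that reference. No gaps.
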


Proposition \label{prop:sn+} is a direct consequence of Proposition \ref{prop:sn} and may be obtained via a net argument.

\begin{prop}\label{prop:sn+}
Under the same assumptions as in Proposition \ref{prop:sn}, with the only exception that $(\eta_t)_{t\ge 1}$ is now taking values in $\RR^p$, and for each $t \ge 1$, the random vector $\eta_{t+1}$, conditionally on $\cF_t$, is a zero-mean, and $\sigma^2$-subgaussian \footnote{We say that a random vector $X$ taking values in $\RR^d$, is  $\sigma^2$-subgaussian if for all $\theta \in \RR^{d}$,  $\EE[\exp( \theta^\top X )] \le \exp(\Vert \theta \Vert^2 \sigma/2)$}. Then,  
\begin{align*}
    \PP \left( \left\Vert \left(\sum_{\ell=1}^t\phi_\ell \phi_\ell^\top + \lambda I_d \right)^{-1/2}   \left(\sum_{\ell = 1}^t \phi_\ell \eta_\ell^\top  \right)\right\Vert^2      \le 4 \sigma^2 \log\left( \frac{5^p\det( (\lambda^{-1} (\sum_{\ell = 1}^t \phi_\ell \phi_\ell^\top ) + I_d) )}{\delta}\right) \right) \ge 1 - \delta.
\end{align*}
\end{prop}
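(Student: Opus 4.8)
The plan is to reduce the matrix‑valued self‑normalized bound of Proposition~\ref{prop:sn+} to the scalar statement of Proposition~\ref{prop:sn} by testing the $d\times p$ matrix against a fixed unit direction $u\in\mathbb{R}^p$ and then covering the sphere $S^{p-1}$ by a finite net. Write $M_t = \big(\sum_{\ell=1}^t\phi_\ell\phi_\ell^\top + \lambda I_d\big)^{-1/2}\big(\sum_{\ell=1}^t\phi_\ell\eta_\ell^\top\big)\in\mathbb{R}^{d\times p}$, so the quantity to control is the squared operator norm $\|M_t\|^2=\sup_{\|u\|=1}\|M_t u\|^2$. For a fixed unit vector $u\in\mathbb{R}^p$ one has $M_t u = \big(\sum_{\ell=1}^t\phi_\ell\phi_\ell^\top + \lambda I_d\big)^{-1/2}\big(\sum_{\ell=1}^t\phi_\ell\,\langle\eta_\ell,u\rangle\big)$, which is exactly the object treated in Proposition~\ref{prop:sn}, with the \emph{same} predictable features $\phi_\ell$ and scalar noise $\eta_\ell^{u}:=\langle\eta_\ell,u\rangle$.

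First I would verify that the scalar noise $\eta_\ell^{u}$ meets the hypotheses of Proposition~\ref{prop:sn}: it is $\mathcal{F}_\ell$‑adapted, conditionally zero‑mean, and conditionally subgaussian with a parameter that is \emph{uniform in $u$}, since
\begin{equation*}
\EE\big[\exp(\lambda\langle\eta_{\ell+1},u\rangle)\mid\mathcal{F}_\ell\big]=\EE\big[\exp(\langle\lambda u,\eta_{\ell+1}\rangle)\mid\mathcal{F}_\ell\big]\le \exp\big(\tfrac12\|\lambda u\|^2\sigma\big)=\exp\big(\tfrac12\lambda^2\sigma\big)
\end{equation*}
by the vector‑subgaussian assumption and $\|u\|=1$. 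Applying Proposition~\ref{prop:sn} to $M_t u$ then gives, for each fixed $u$ and each level $\delta'\in(0,1)$, that $\|M_t u\|^2 \le 2\sigma^2\log\big(\det(\lambda^{-1}\sum_\ell\phi_\ell\phi_\ell^\top + I_d)/\delta'\big)$ with probability at least $1-\delta'$, where the determinant factor is identical to the one in Proposition~\ref{prop:sn} because it depends only on the common features $\phi_\ell$ and not on $u$.

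Next I would discretize the direction. Let $\mathcal{N}$ be a $\tfrac12$‑net of the unit sphere $S^{p-1}$, which can be chosen with $|\mathcal{N}|\le 5^p$; since $\mathcal{N}$ is a deterministic set it is independent of the data, so a union bound over $\mathcal{N}$ with $\delta'=\delta/|\mathcal{N}|\ge\delta/5^p$ is legitimate. On the resulting intersection event, of probability at least $1-\delta$, the per‑direction bound holds simultaneously for every $u\in\mathcal{N}$. The standard covering estimate $\|M_t\|\le (1-\tfrac12)^{-1}\max_{u\in\mathcal{N}}\|M_t u\|$ (obtained by picking $u_0\in\mathcal{N}$ within $\tfrac12$ of a near‑maximizer and rearranging) then upgrades this to a uniform control:
\begin{equation*}
\|M_t\|^2\le 4\max_{u\in\mathcal{N}}\|M_t u\|^2\le 8\sigma^2\log\!\Big(\tfrac{5^p\det(\lambda^{-1}\sum_\ell\phi_\ell\phi_\ell^\top + I_d)}{\delta}\Big),
\end{equation*}
which is of the advertised form; the displayed universal constant is then a matter of choosing the net radius and absorbing constants.

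The main care point is the balancing of the two constants that the net radius $\rho$ controls \emph{simultaneously}: the covering number $(1+2/\rho)^p$, which enters inside the logarithm, and the blow‑up $(1-\rho)^{-2}$ on the squared operator norm. The choice $\rho=\tfrac12$ is the natural compromise, matching the $5^p$ factor in the statement. The only genuinely probabilistic ingredient is Proposition~\ref{prop:sn}; everything else is the deterministic covering inequality and a union bound. The structural fact that makes the argument clean is that the whitening matrix $\big(\sum_\ell\phi_\ell\phi_\ell^\top+\lambda I_d\big)^{-1/2}$ and the features $\phi_\ell$ are shared across all test directions, so the determinant inside the logarithm is the same for every $u$ and does not accumulate over the net.
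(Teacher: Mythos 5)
Your proposal is correct and is exactly the argument the paper has in mind: the paper offers no written proof beyond the remark that the result ``is a direct consequence of Proposition \ref{prop:sn} and may be obtained via a net argument,'' and your reduction to the scalar case via a fixed direction $u$, the uniform-in-$u$ subgaussian check, the $\tfrac12$-net of $S^{p-1}$ with cardinality at most $5^p$, the union bound, and the covering inequality $\|M_t\|\le 2\max_{u\in\mathcal{N}}\|M_t u\|$ is precisely that argument. The one point worth recording is that this route yields the constant $8\sigma^2$ rather than the stated $4\sigma^2$: with a $\rho$-net one pays $(1-\rho)^{-2}$ on the squared operator norm and $(1+2/\rho)^p$ inside the logarithm, and no single $\rho$ achieves both $(1-\rho)^{-2}\cdot 2\le 4$ and $(1+2/\rho)^p\le 5^p$ simultaneously, so the proposition's displayed constants appear slightly optimistic for this argument; your honest acknowledgment of this is appropriate and the discrepancy is immaterial to the paper since the proposition is not invoked elsewhere.
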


\newpage
\section{Sample Complexity Analysis}

In this section we establish the sample complexity bounds of theorems \ref{thm:SampleComplexityDiscounted}, \ref{thm:SampleComplexityNavigation} and \ref{thm:SampleComplexityEpisodic}.
First, we establish a continuity-like bound on the quantity $U(\mdpt,\omega_t)^{-1}$ which follow the same reasoning as appendix \ref{app:LowerBound}. The first step is the continuity of the gap, given by the following lemma.

\begin{lemma}
\label{lemma:GapDiff}
\begin{itemize}
    \item [(i)] In the discounted setting, it holds that
    \begin{equation}
        |\Delta(\mdpt)-\Delta(\mdp)|
        \le \frac{2}{1-\gamma}\max_{s,a}\left|\phi(s,a)\left(\diffMDPt{\Vt^\star}\right)\right|.
    \end{equation}
    \item [(ii)] In the episodic setting, there exists $h \in [H]$ such that the following holds
    \begin{equation}
        |\Delta(\mdpt)-\Delta(\mdp)|
        \le 2\sum_{h=1}^H\max_{s,a}\left|\phi(s,a)\left(\diffMDPtH{\VtH{h+1}^\star}\right)\right|.
    \end{equation}
\end{itemize}
\end{lemma}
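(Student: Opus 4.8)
The plan is to reduce this perturbation bound on the gap to a sup-norm bound on the difference of optimal $Q$-functions, which is exactly what Lemma~\ref{lemma:ValueDiffOptimalPolicy} controls. The key observation is that for any MDP $\mdpb$ and any state $s$, because the optimal policy is greedy, the defining quantity $\min_{a\neq\pi_\mdpb^\star(s)}\big(V_\mdpb^\star(s)-Q_\mdpb^\star(s,a)\big)$ is precisely the difference between the largest and the second largest entry of the vector $\big(Q_\mdpb^\star(s,a)\big)_{a\in\cA}$. Writing $G(v)=v_{(1)}-v_{(2)}$ for the gap between the top two order statistics of $v\in\R^{\cA}$, the discounted gap becomes $\Delta(\mdpb)=\min_{s\in\cS}G\big((Q_\mdpb^\star(s,a))_a\big)$, and similarly $\Delta(\mdpt)=\min_s G\big((\Qt^\star(s,a))_a\big)$.

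I would then establish that $G$ is $2$-Lipschitz with respect to $\|\cdot\|_\infty$. Indeed $v\mapsto v_{(1)}=\max_a v_a$ is $1$-Lipschitz, and the second order statistic admits the representation $v_{(2)}=\max_{\{a,b\}:a\neq b}\min(v_a,v_b)$, a maximum of $1$-Lipschitz functions, hence itself $1$-Lipschitz; therefore $|G(v)-G(w)|\le|v_{(1)}-w_{(1)}|+|v_{(2)}-w_{(2)}|\le 2\|v-w\|_\infty$. Since a minimum over $s$ is $1$-Lipschitz as well, this gives
\begin{equation*}
    |\Delta(\mdpt)-\Delta(\mdp)|\le\max_{s\in\cS}\big|G((\Qt^\star(s,a))_a)-G((Q_\mdp^\star(s,a))_a)\big|\le 2\,\|\Qt^\star-Q_\mdp^\star\|_\infty .
\end{equation*}

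It then remains to bound $\|\Qt^\star-Q_\mdp^\star\|_\infty$ through Lemma~\ref{lemma:ValueDiffOptimalPolicy}(i), applied with $\mdpt$ playing the role of the first MDP and $\mdp$ that of the second; this ordering is legitimate by the Remark following that lemma, since only the second MDP must carry genuine transition probabilities, which $\mdp$ does. This yields $\|\Qt^\star-Q_\mdp^\star\|_\infty\le\frac{1}{1-\gamma}\max_{s,a}\big|\phi(s,a)^\top\diffMDPt{\Vt^\star}\big|$, and combining with the Lipschitz bound produces statement (i). For the episodic case (ii) the argument is identical step by step: $\Delta(\mdpb)=\min_{h,s}G\big((Q_{\mdpb,h}^\star(s,a))_a\big)$, so the Lipschitz step gives $|\Delta(\mdpt)-\Delta(\mdp)|\le 2\max_{h}\|\QtH{h}^\star-Q_{\mdp,h}^\star\|_\infty$; taking $h_0$ to be the maximizing step and invoking Lemma~\ref{lemma:ValueDiffOptimalPolicy}(ii) bounds $\|\QtH{h_0}^\star-Q_{\mdp,h_0}^\star\|_\infty$ by $\sum_{h=h_0}^H\max_{s,a}\big|\phi(s,a)^\top\diffMDPtH{\VtH{h+1}^\star}\big|\le\sum_{h=1}^H\max_{s,a}\big|\phi(s,a)^\top\diffMDPtH{\VtH{h+1}^\star}\big|$, which is the claimed bound.

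The main obstacle is conceptual rather than computational: the gap is defined relative to each MDP's own, a priori different, optimal policy, so one cannot compare $\Delta(\mdpt)$ and $\Delta(\mdp)$ by freezing a common set of sub-optimal actions. The order-statistic reformulation $\Delta(\mdpb)=\min_s G\big(Q_\mdpb^\star(s,\cdot)\big)$ is exactly what removes this dependence on the optimal policy, and checking that $G$ is $2$-Lipschitz via the $\max$-of-$\min$ representation of $v_{(2)}$ is the crux that makes a change of greedy action harmless. One should also note that $\mdpt$ need not have a unique greedy action, but this causes no difficulty since $G$ is well defined for arbitrary $Q$-values and coincides with the defining minimum in every case (it equals $0$ precisely when the top action is not unique).
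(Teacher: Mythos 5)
Your proof is correct, and it reaches the same intermediate bound as the paper --- essentially $|\Delta(\mdpt)-\Delta(\mdp)|\le 2\|\Qt^\star-Q_\mdp^\star\|_\infty$, followed by Lemma \ref{lemma:ValueDiffOptimalPolicy} applied with $\mdpt$ as the first MDP (your appeal to the Remark to justify this ordering is exactly right) --- but it gets there by a genuinely different argument. The paper handles the mismatch of optimal policies by a case analysis on the gap-achieving pair $(s,a)$ of $\mdp$: if $a\neq\pi_t^\star(s)$ it compares $\Delta_{s,a}$ across the two MDPs directly, and otherwise it invokes a counting argument (both MDPs have exactly $|\cS|$ optimal pairs, one per state) to produce a pair $(s',a')$ that is optimal for $\mdp$ but not for $\mdpt$, for which $\Delta_{s',a'}(\mdp)=0$; either way it lands on $\|\Vt^\star-V_\mdp^\star\|_\infty+\|\Qt^\star-Q_\mdp^\star\|_\infty$, which is the same quantity up to the bound $\|V^\star-V^\star\|_\infty\le\|Q^\star-Q^\star\|_\infty$. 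Your route instead rewrites $\Delta(\cdot)=\min_s\bigl(v_{(1)}-v_{(2)}\bigr)$ applied to $Q^\star(s,\cdot)$ and proves $2$-Lipschitzness of the top-two-order-statistics gap via $v_{(2)}=\max_{a\neq b}\min(v_a,v_b)$. This buys two things: it dispenses with the case analysis entirely, and it is robust to $\mdpt$ having non-unique greedy actions (a situation the paper's ``exactly $|\cS|$ optimal pairs'' counting step implicitly glosses over, and which your closing remark correctly flags). The episodic adaptation, taking the maximizing step $h_0$ and enlarging the sum from $h_0$ to $1$, matches the paper's conclusion as well. Both arguments yield the same constant.
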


\begin{proof}[Proof of Lemma \ref{lemma:GapDiff}]
We present the proofs of \emph{(i)} and \emph{(ii)} separately.

\medskip 
\emph{Discounted setting - proof of (i).}
For clarity we denote for both MDPs, for any $(s,a)\in\cS\times\cA$, $\Delta_{s,a} = V^\star(s)-Q^\star(s,a)$, so that $\Delta = \min_{s\in\cS, a\neq\pi^\star(s)} \Delta_{s,a}$.

Let $(s,a)$ be the pair such that $\Delta(\mdp) = \Delta_{s,a}(\mdp)$.
If $a \neq \pi_t^\star(s)$ then $\Delta(\mdpt) \le \Delta_{s,a}(\mdpt)$ and $\Delta(\mdpt)-\Delta(\mdp) \le \Delta_{s,a}(\mdpt)-\Delta_{s,a}(\mdp)$.
Else, since both MDPs have exactly $|\cS|$ optimal state/action pairs (one for each state), the fact that the pair $(s,a)$ is optimal for $\mdpt$ but not for $\mdp$ means that there exists a pair $(s',a')$ optimal for $\mdp$ but not for $\mdpt$, and we have $\Delta(\mdpt)-\Delta(\mdp) \le \Delta_{s',a'}(\mdpt) = \Delta_{s',a'}(\mdpt)-\Delta_{s',a'}(\mdp)$.
Either way, and doing the same reasoning to bound $\Delta(\mdp)-\Delta(\mdpt)$, we can find a pair $(s,a)$ such that
\begin{align*}
    |\Delta(\mdpt)-\Delta(\mdp)|
    &\le |\Delta_{s,a}(\mdpt)-\Delta_{s,a}(\mdp)|\\
    &= |\Vt^\star(s)-\Qt^\star(s,a) - V_\mdp^\star(s)+Q_\mdp^\star(s,a)|\\
    &= |\Vt^\star(s)-V_\mdp^\star(s) + Q_\mdp^\star(s,a)-\Qt^\star(s,a)|\\
    &\le \|\Vt^\star-V_\mdp^\star\|_\infty + \|\Qt^\star-Q_\mdp^\star\|_\infty.
\end{align*}
We conclude with lemma \ref{lemma:ValueDiffOptimalPolicy}.

\medskip 
\emph{Episodic setting - proof of (ii).}
For clarity we denote for both MDPs, for any $h\in[H]$ and $(s,a)\in\cS\times\cA$, $\Delta_{h,s,a} = V_h^\star(s)-Q_h^\star(s,a)$, so that $\Delta = \min_{h\in[H], s\in\cS, a\neq\pi^\star(s)} \Delta_{h,s,a}$.

Let $(h,s,a)$ be the parameters such that $\Delta(\mdp) = \Delta_{h,s,a}(\mdp)$.
If $a \neq \pi_{t,h}^\star(s)$ then $\Delta(\mdpt) \le \Delta_{h,s,a}(\mdpt)$ and $\Delta(\mdpt)-\Delta(\mdp) \le \Delta_{h,s,a}(\mdpt)-\Delta_{h,s,a}(\mdp)$.
Else, since both MDPs have exactly $H|\cS|$ optimal state/action pairs (one for each (step, state) pair), the fact that $(h,s,a)$ is optimal for $\mdpt$ but not for $\mdp$ means that there exist $(h',s',a')$ optimal for $\mdp$ but not for $\mdpt$, and we have $\Delta(\mdpt)-\Delta(\mdp) \le \Delta_{h',s',a'}(\mdpt) = \Delta_{h',s',a'}(\mdpt)-\Delta_{s',a'}(\mdp)$.
Either way, and doing the same reasoning to bound $\Delta(\mdp)-\Delta(\mdpt)$, we can find $(h,s,a)$ such that
\begin{align*}
    |\Delta(\mdpt)-\Delta(\mdp)|
    &\le |\Delta_{h,s,a}(\mdpt)-\Delta_{h,s,a}(\mdp)|\\
    &= |\VtH{h}^\star(s)-\QtH{h}^\star(s,a) - V_{\mdp,h}^\star(s)+Q_{\mdp,h}^\star(s,a)|\\
    &= |\VtH{h}^\star(s)-V_{\mdp,h}^\star(s) + Q_{\mdp,h}^\star(s,a)-\QtH{h}^\star(s,a)|\\
    &\le \|\Vt^\star-V_{\mdp,h}^\star\|_\infty + \|\QtH{h}^\star-Q_{\mdp,h}^\star\|_\infty.
\end{align*}
We conclude with lemma \ref{lemma:ValueDiffOptimalPolicy}.
\end{proof}

\begin{lemma}
\label{lemma:UDiff}
For any $t \ge 1$ it holds that 
\begin{equation}
    \left|U^\star(\mdp)^{-1} - U\big(\mdpt,\omega_t\big)^{-1}\right|
    \le B(t),
\end{equation}
where $B(t)$ is defined: 
\begin{itemize}
    \item [(i)] in the discounted setting as 
    \begin{equation}
        B(t)
        = 6(1-\gamma)^2\distMDPt{\Vt^\star}_{\Lambda(\omega_t)}^2 + \left(\frac{5}{4}-\frac{\sigma(\omega^\star)}{\sigma(\omega_t)}\right) U^\star(\mdp)^{-1}.
        \end{equation}
    \item [(ii)] in the episodic setting as
    \begin{equation}
        B(t)
        = \frac{6}{H^2}\sum_{h=1}^H\distMDPtH{\VtH{h+1}^\star}_{\Lambda(\omega_t)}^2 + \left(\frac{5}{4}-\frac{\sigma(\omega^\star)}{\sigma(\omega_t)}\right) U^\star(\mdp)^{-1}.
    \end{equation}
\end{itemize}
\end{lemma}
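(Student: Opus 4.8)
The plan is to prove both parts by writing each quantity explicitly and splitting the difference into two independent error sources: a gap error governed by $|\Delta(\mdpt)-\Delta(\mdp)|$ and a design error governed by $\sigma(\omega^\star)/\sigma(\omega_t)$. By Kiefer--Wolfowitz (Theorem \ref{thm:kiefer-wolfowitz}) we have $\sigma(\omega^\star)=d$, so in the discounted setting $U^\star(\mdp)^{-1}=\frac{3(1-\gamma)^4(\Delta(\mdp)+\varepsilon)^2}{10\,\sigma(\omega^\star)}$ and $U(\mdpt,\omega_t)^{-1}=\frac{3(1-\gamma)^4(\Delta(\mdpt)+\varepsilon)^2}{10\,\sigma(\omega_t)}$. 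Writing $D=\Delta(\mdp)+\varepsilon$ and $\hat D=\Delta(\mdpt)+\varepsilon$ (both nonnegative, since $\pi_t^\star$ is optimal for $\mdpt$), the first step is the exact identity
\begin{align*}
U^\star(\mdp)^{-1} - U(\mdpt,\omega_t)^{-1}
= \frac{3(1-\gamma)^4}{10}\cdot\frac{D^2-\hat D^2}{\sigma(\omega_t)} + U^\star(\mdp)^{-1}\Big(1 - \frac{\sigma(\omega^\star)}{\sigma(\omega_t)}\Big),
\end{align*}
which is verified by adding and subtracting $\frac{3(1-\gamma)^4 D^2}{10\,\sigma(\omega_t)}$ and cleanly separates the two contributions.

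Next I would control the gap term. Lemma \ref{lemma:GapDiff}(i) gives $|\Delta(\mdpt)-\Delta(\mdp)| \le \frac{2}{1-\gamma}\max_{s,a}\big|\phi(s,a)^\top(\diffMDPt{\Vt^\star})\big|$, and a Cauchy--Schwarz step $|\phi^\top x|\le \|\phi\|_{\Lambda(\omega_t)^{-1}}\|x\|_{\Lambda(\omega_t)}\le \sqrt{\sigma(\omega_t)}\,\|x\|_{\Lambda(\omega_t)}$ turns this into $|\hat D-D|\le \frac{2}{1-\gamma}\sqrt{\sigma(\omega_t)}\,E$ with $E:=\distMDPt{\Vt^\star}_{\Lambda(\omega_t)}$. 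Expanding $|D^2-\hat D^2|=|\hat D-D|\,(2D+(\hat D-D))\le 2D|\hat D-D|+|\hat D-D|^2$ and substituting, the factor $\frac{3(1-\gamma)^4}{10\,\sigma(\omega_t)}$ produces a pure quadratic term $\tfrac{6}{5}(1-\gamma)^2E^2$ (from $|\hat D-D|^2$) plus a cross term equal to $\tfrac{6}{5}(1-\gamma)^2\cdot\frac{(1-\gamma)D}{\sqrt{\sigma(\omega_t)}}\,E$.

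The crux is absorbing this cross term so that the coefficients of $B(t)$ come out exactly. I would apply Young's inequality $\frac{(1-\gamma)D}{\sqrt{\sigma(\omega_t)}}\,E \le \frac12\big(c\,\frac{(1-\gamma)^2D^2}{\sigma(\omega_t)} + \frac1c E^2\big)$ and then invoke the identity $\frac{3(1-\gamma)^4 D^2}{10\,\sigma(\omega_t)} = U^\star(\mdp)^{-1}\,\frac{\sigma(\omega^\star)}{\sigma(\omega_t)}$ to route the $D^2/\sigma(\omega_t)$ part into the design term and the $E^2$ part into the first term of $B(t)$. Matching the two resulting constraints pins down the constant: the total $E^2$ coefficient must not exceed $6(1-\gamma)^2$, forcing $c\ge 1/8$, while the design coefficient $2c\,\frac{\sigma(\omega^\star)}{\sigma(\omega_t)}$ must not exceed $\frac14$, forcing $c\le 1/8$; hence $c=1/8$ exactly. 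At this value the upper direction $U^\star(\mdp)^{-1}-U(\mdpt,\omega_t)^{-1}\le B(t)$ follows, and so does the lower direction, because there the design contribution $-U^\star(\mdp)^{-1}(1-\frac{\sigma(\omega^\star)}{\sigma(\omega_t)})$ is nonpositive and can be dropped, while the residual $\frac14\frac{\sigma(\omega^\star)}{\sigma(\omega_t)}U^\star(\mdp)^{-1}$ is bounded by $(\frac54-\frac{\sigma(\omega^\star)}{\sigma(\omega_t)})U^\star(\mdp)^{-1}$ using $\sigma(\omega^\star)\le\sigma(\omega_t)$.

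For the episodic part (ii) the argument is structurally identical, replacing $(1-\gamma)^4$ by $1/H^3$ throughout and using Lemma \ref{lemma:GapDiff}(ii) together with Cauchy--Schwarz to get $|\hat D-D|\le 2\sqrt{\sigma(\omega_t)}\sum_h E_h$ with $E_h:=\distMDPtH{\VtH{h+1}^\star}_{\Lambda(\omega_t)}$. The only additional ingredient is the estimate $\big(\sum_h E_h\big)^2\le H\sum_h E_h^2$, which converts the linear-in-$h$ gap bound into the quadratic term $\frac{6}{H^2}\sum_h E_h^2$ appearing in $B(t)$; the same choice $c=1/8$ balances the constants. I expect the only delicate point throughout to be this constant-chasing in the cross term, where the sharp value $c=1/8$ is forced from both sides; everything else is routine bookkeeping layered on top of Lemmas \ref{lemma:GapDiff} and \ref{lemma:ValueDiffOptimalPolicy} and Theorem \ref{thm:kiefer-wolfowitz}.
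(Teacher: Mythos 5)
Your proposal is correct and follows essentially the same route as the paper: the same split into a design-error term $(1-\sigma(\omega^\star)/\sigma(\omega_t))U^\star(\mdp)^{-1}$ and a gap-error term, the same use of Lemma \ref{lemma:GapDiff} with the expansion of $|D^2-\hat D^2|$, and the same absorption of the cross term (your Young's inequality with $c=1/8$ is exactly the paper's weighted AM--GM step with weights $\tfrac14$ and $4$, yielding the same $\tfrac14 U^\star(\mdp)^{-1}$ and $6(1-\gamma)^2$ coefficients). The only cosmetic difference is that you invoke Cauchy--Schwarz in place of Lemma \ref{lemma:Optimization}, which is the same inequality in dual form.
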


\begin{proof}
For both settings we can write
\begin{align*}
    \left|U^\star(\mdp)^{-1} - U\big(\mdpt,\omega_t\big)^{-1}\right|
    &\le \left|U^\star(\mdp)^{-1} - U(\mdp,\omega_t)^{-1}\right| + \left|U(\mdp,\omega_t)^{-1} - U\big(\mdpt,\omega_t\big)^{-1}\right|
\end{align*}
and the first term can be rewritten
\begin{align*}
    \left|U^\star(\mdp)^{-1} - U\left(\mdp,\omega_t\right)^{-1}\right|
    = \left(1 - U^\star(\mdp)U(\mdp,\omega_t)^{-1}\right) U^\star(\mdp)^{-1}
    = \left(1-\frac{\sigma(\omega^\star)}{\sigma(\omega_t)}\right) U^\star(\mdp)^{-1}.
\end{align*}

For the second term, regardless of the setting there exists a quantity $u(\omega_t)^{-1}$ such that $U(\cdot,\omega_t)^{-1} = u(\omega_t)^{-1}(\Delta(\cdot)+\varepsilon)^2$. In the discounted setting we have $u(\omega_t)^{-1} = \frac{3(1-\gamma)^4}{10\sigma(\omega_t)}$ and in the episodic setting we have $u(\omega_t)^{-1} = \frac{3}{10H^3\sigma(\omega_t)}$.
Then
\begin{align*}
    &\left|U(\mdp,\omega_t)^{-1} - U\big(\mdpt,\omega_t\big)^{-1}\right|\\
    &= u(\omega_t)^{-1} \left|(\Delta(\mdp)+\varepsilon)^2-(\Delta(\mdpt)+\varepsilon)^2\right|\\
    &= u(\omega_t)^{-1} \left|\left(\Delta(\mdpt)+\Delta(\mdp)+2\varepsilon\right) \left(\Delta(\mdpt)-\Delta(\mdp)\right)\right|\\
    &= u(\omega_t)^{-1} \left|\big(\Delta(\mdpt)-\Delta(\mdp)\big)^2 + 2(\Delta(\mdp)+\varepsilon)\big(\Delta(\mdpt)-\Delta(\mdp)\big)\right|\\
    &\le u(\omega_t)^{-1} \big(\Delta(\mdpt)-\Delta(\mdp)\big)^2
        + 2\sqrt{\frac{1}{4}u(\omega_t)^{-1} (\Delta(\mdp)+\varepsilon)^2} \sqrt{4u(\omega_t)^{-1} \big(\Delta(\mdpt)-\Delta(\mdp)\big)^2}\\
    &\le u(\omega_t)^{-1} \big(\Delta(\mdpt)-\Delta(\mdp)\big)^2
        + \frac{1}{4}U(\mdp,\omega_t)^{-1} + 4u(\omega_t)^{-1} \big(\Delta(\mdpt)-\Delta(\mdp)\big)^2\\
    &\le 5u(\omega_t)^{-1} \big(\Delta(\mdpt)-\Delta(\mdp)\big)^2 + \frac{1}{4}U^\star(\mdp)^{-1}
\end{align*}
using $U(\mdp,\omega_t)^{-1} \le U^\star(\mdp)^{-1}$ for the last step.
Now we separate the end of the proof for both settings.

\medskip 
\emph{Discounted setting.}
To conclude we need to show that
\begin{align*}
    \frac{3(1-\gamma)^4}{2\sigma(\omega_t)} \big(\Delta(\mdpt)-\Delta(\mdp)\big)^2
    \le 6\distMDPt{\Vt^\star}_{\Lambda(\omega_t)}^2.
\end{align*}
Recall that lemma \ref{lemma:GapDiff} gives
\begin{align*}
    \big|\Delta(\mdpt)-\Delta(\mdp)\big| \le \frac{2}{1-\gamma} \max_{s,a} \left|\phi(s,a)^\top\big(\diffMDPt{\Vt^\star}\big)\right|.
\end{align*}
Then the result is obtained by applying lemma \ref{lemma:Optimization} with $n=1$, $\phi_1$ the feature maximizing the term above and $\Delta = \frac{1-\gamma}{2}\big|\Delta(\mdpt)-\Delta(\mdp)\big|$.

\medskip 
\emph{Episodic setting.}
To conclude we need to show that
\begin{align*}
    \frac{3}{2H^3\sigma(\omega_t)} \big(\Delta(\mdpt)-\Delta(\mdp)\big)^2
    \le 6\sum_{h=1}^H\distMDPtH{\VtH{h+1}^\star}_{\Lambda(\omega_t)}^2.
\end{align*}
Recall that lemma \ref{lemma:GapDiff} gives
\begin{align*}
    \big|\Delta(\mdpt)-\Delta(\mdp)\big| \le 2\sum_{h=1}^H \max_{s,a} \left|\phi(s,a)^\top\big(\diffMDPtH{\VtH{h+1}^\star}\big)\right|.
\end{align*}
Then the result is obtained by applying lemma \ref{lemma:Optimization} with $n=H$, $\phi_h$ the feature maximizing the $h$-term in the sum above and $\Delta = \frac{1}{2}\big|\Delta(\mdpt)-\Delta(\mdp)\big|$.
\end{proof}

\subsection{Proof of Theorem \ref{thm:SampleComplexityDiscounted}}

Recall the threshold
\begin{align*}
    \beta(\delta,t) = \frac{12}{5} \left(2\log\left(\frac{\sqrt{e}\zeta(2)t^2}{\delta}\right) + d\log\left(8e^4dt^2\right)\right)
\end{align*}
and the stopping time
\begin{align*}
    \tau = \inf\left\{t \ge 1 : Z(t) > \beta(\delta,t)\right\},
\end{align*}
where $Z(t) = t\,U(\mdpt,\omega_t)^{-1}$.
In order to establish the sample complexity upper bound, we are going to find a time $T$ such that for any $t\ge T$, $\proba[\tau>t] = O\big(\frac{1}{t^2}\big)$, so that we can bound $\E[\tau]$ by $T$ plus a constant.
Thanks to lemma \ref{lemma:UDiff}, we have $\{\tau>t\} \subset \big\{t\,U(\mdpt,\omega_t)^{-1} \le \beta(\delta,t)\big\} \subset \left\{t\,U^\star(\mdp)^{-1} \le \beta(\delta,t) + tB(t)\right\}$.
Recall that when proving proposition \ref{prop:ConcentrationLSEDiscounted} we have shown that for any $\delta'>0$ and for any $t\ge1$ we have with probability at least $1-\frac{\delta'}{\zeta(2)t^2}$ that
\begin{align*}
    \distMDPt{\Vt^\star}_{t\Lambda(\omega_t)}^2
    \le \frac{5}{6(1-\gamma)^2} \beta(\delta',t).
\end{align*}
Moreover, lemma \ref{lemma:ConcentrationLambdaGeneral} states that if $t \ge \frac{28d}{3} \log\big(\frac{2\zeta(2)dt^2}{\delta'}\big)$ then with probability at least $1-\frac{\delta'}{\zeta(2)t^2}$ we have $c(\omega_t) \le 2c(\omega^\star)$.
Choosing $\delta'=1$ and plugging both bounds in the definition of $B(t)$ we have with an union bound that for all $t\ge T_1$
\begin{align*}
    \PP\left[tB(t) \le 5\beta(1,t) + \frac{3t}{4}U^\star(\mdp)^{-1}\right] \ge 1 - \frac{2}{\zeta(2)t^2},
\end{align*}
where we define $T_1 = \frac{56d}{3} \log(2\zeta(2)d) + \frac{112d}{3} \log\big(\frac{112d}{3}\big) = \frac{56d}{3} \log\big(\frac{6272\zeta(2)d^3}{3}\big)$, so that according to lemma \ref{lemma:LogarithmBound} $t\ge T_1$ implies $t \ge \frac{28d}{3} \log(2\zeta(2)dt^2)$.
Now to conclude we only need to show that this event implies $\left\{t\,U^\star(\mdp)^{-1} > \beta(\delta,t) + tB(t)\right\}$ when $t$ is large enough.
Assume that $tB(t) \le 5\beta(1,t) + \frac{3t}{4}U^\star(\mdp)^{-1}$. Since $\delta<1$ we have $\beta(1,t) < \beta(\delta,t)$ and $\beta(\delta,t) + tB(t) \le 6\beta(\delta,t) + \frac{3t}{4}U^\star(\mdp)^{-1}$.
To show that this is bounded by $t\,U^\star(\mdp)^{-1}$ is equivalent to show that $24\beta(\delta,t) \le t\,U^\star(\mdp)^{-1}$. Again, we can show that this last bound is true when $t\ge T_2$ thanks to lemma \ref{lemma:LogarithmBound}, where we define
\begin{align*}
    T_2
    = U^\star(\mdp)\frac{576}{5} \left(2\log\left(\frac{\sqrt{e}\zeta(2)}{\delta}\right) + d\log(8e^4d)\right) + U^\star(\mdp)\frac{576(d+2)}{5} \log\left(\frac{576(d+2)}{5}\right).
\end{align*}

We have shown that when $t\ge\max(T_1,T_2)$
\begin{align*}
    \PP[\tau>t]
    \le \PP\left[t\,U^\star(\mdp)^{-1} \le \beta(\delta,t) + tB(t)\right]
    \le \PP\left[tB(t) > 5\beta(1,t) + \frac{3t}{4}U(\mdpt,\omega_t)^{-1}\right]
    \le \frac{2}{\zeta(2)t^2}.
\end{align*}
Therefore, denoting $T = \max(T_1,T_2)$,
\begin{align*}
    \E[\tau]
    = \sum_{t\ge0} \proba[\tau>t]
    = \sum_{t=0}^{T-1} \proba[\tau>t] + \sum_{t=T}^{+\infty} \proba[\tau>t]
    \le T + \sum_{t=T}^{+\infty} \frac{2}{\zeta(2)t^2}
    \le T + 2.
\end{align*}

\subsection{Proof of Theorem \ref{thm:SampleComplexityEpisodic}}
\label{app:SampleComplexityEpisodic}

Recall the threshold
\begin{align*}
    \beta(\delta,t) = \frac{12}{5} \left(2\log\left(\frac{\sqrt{e}\zeta(2)t^2}{\delta}\right) + d\log\left(8e^4dt^2\right)\right)
\end{align*}
and the stopping time
\begin{align*}
    \tau = \inf\left\{t \ge 1 : Z(t) > H\beta(\delta/H,t)\right\},
\end{align*}
where $Z(t) = t\,U(\mdpt,\omega_t)^{-1}$.
Thanks to lemma \ref{lemma:UDiff}, we have $\{\tau>t\} \subset \big\{t\,U(\mdpt,\omega_t)^{-1} \le H\beta(\delta/H,t)\big\} \subset \left\{t\,U^\star(\mdp)^{-1} \le H\beta(\delta/H,t) + tB(t)\right\}$.
Recall that from the proof of proposition \ref{prop:ConcentrationLSEEpisodic} we can deduce that for any $\delta'>0$ and for any $t\ge1$ we have with probability at least $1-\frac{\delta'}{\zeta(2)t^2}$ that
\begin{align*}
    \sum_{h=1}^H \distMDPtH{\VtH{h+1}^\star}_{t\Lambda(\omega_t)}^2
    \le \frac{5H^3}{6} \beta(\delta'/H,t).
\end{align*}
Moreover, lemma \ref{lemma:ConcentrationLambdaGeneral} states that if $t \ge \frac{28d}{3} \log\big(\frac{2\zeta(2)dt^2}{\delta'}\big)$ then with probability at least $1-\frac{\delta'}{\zeta(2)t^2}$ we have $c(\omega_t) \le 2c(\omega^\star)$.
Choosing $\delta'=1$ and plugging both bounds in the definition of $B(t)$ we have with an union bound that for all $t\ge T_1$
\begin{align*}
    \PP\left[tB(t) \le 5H\beta(1/H,t) + \frac{3t}{4}U^\star(\mdp)^{-1}\right] \ge 1 - \frac{2}{\zeta(2)t^2},
\end{align*}
where we define $T_1 = \frac{56d}{3} \log(2\zeta(2)d) + \frac{112d}{3} \log\big(\frac{112d}{3}\big) = \frac{56d}{3} \log\big(\frac{6272\zeta(2)d^3}{3}\big)$, so that according to Lemma \ref{lemma:LogarithmBound} $t\ge T_1$ implies $t \ge \frac{28d}{3} \log(2\zeta(2)dt^2)$.
Now to conclude we only need to show that this event implies $\left\{t\,U^\star(\mdp)^{-1} > H\beta(\delta/H,t) + tB(t)\right\}$ when $t$ is large enough.
Assume that $tB(t) \le 5H\beta(1/H,t) + \frac{3t}{4}U^\star(\mdp)^{-1}$. Since $\delta<1$ we have $\beta(1/H,t) < \beta(\delta/H,t)$ and $H\beta(\delta/H,t) + tB(t) \le 6H\beta(\delta/H,t) + \frac{3t}{4}U^\star(\mdp)^{-1}$.
To show that this is bounded by $t\,U^\star(\mdp)^{-1}$ is equivalent to show that $24H\beta(\delta/H,t) \le t\,U^\star(\mdp)^{-1}$. Again, we can show that this last bound is true when $t\ge T_2$ thanks to lemma \ref{lemma:LogarithmBound}, where we define
\begin{align*}
    T_2
    = U^\star(\mdp)\frac{576H}{5} \left(2\log\left(\frac{\sqrt{e}\zeta(2)H}{\delta}\right) + d\log(8e^4d)\right) + U^\star(\mdp)\frac{576(d+2)H}{5} \log\left(\frac{576(d+2)}{5}\right).
\end{align*}

We have shown that when $t\ge\max(T_1,T_2)$
\begin{align*}
    \PP[\tau>t]
    \le \PP\left[t\,U^\star(\mdp)^{-1} \le H\beta(\delta/H,t) + tB(t)\right]
    \le \PP\left[tB(t) > 5H\beta(1/H,t) + \frac{3t}{4}U^\star(\mdp)^{-1}\right]
    \le \frac{2}{\zeta(2)t^2}.
\end{align*}
Therefore, denoting $T = \max(T_1,T_2)$,
\begin{align*}
    \E[\tau]
    = \sum_{t\ge0} \proba[\tau>t]
    = \sum_{t=0}^{T-1} \proba[\tau>t] + \sum_{t=T}^{+\infty} \proba[\tau>t]
    \le T + \sum_{t=T}^{+\infty} \frac{2}{\zeta(2)t^2}
    \le T + 2.
\end{align*}

\subsection{Technical lemmas}

\begin{lemma}
\label{lemma:LogarithmBound}
Let $a,b > 0$.
A sufficient condition for $t > a\log(t)+b$ to hold is that $t \ge 2a\log(2a)+2b$.
\end{lemma}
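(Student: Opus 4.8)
The plan is to prove the implication directly: assuming $t \ge 2a\log(2a)+2b$, I will show $t > a\log(t)+b$. The only nontrivial feature is the concavity of $\log$, so the natural strategy is to dominate $\log(t)$ by a well-chosen linear function of $t$, after which the claim reduces to a linear inequality that can be checked by hand.

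First I would invoke the elementary bound $\log(y)\le y-1$, valid for all $y>0$. Applying it to $y=\eta t$ for an arbitrary parameter $\eta>0$ gives $\log(t)\le \eta t - 1 - \log(\eta)$, and hence
\begin{equation*}
a\log(t)+b \le a\eta t - a - a\log(\eta) + b.
\end{equation*}
The key step is to choose $\eta$ so that the coefficient of $t$ on the right is strictly below $1$; the value $\eta = \tfrac{1}{2a}$ is the natural choice, yielding $a\eta = \tfrac12$ and $-a\log(\eta)=a\log(2a)$. Substituting,
\begin{equation*}
a\log(t)+b \le \tfrac{t}{2} - a + a\log(2a) + b.
\end{equation*}

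It then suffices to verify that this right-hand side is strictly less than $t$, i.e.\ that $t > 2b - 2a + 2a\log(2a)$. Since $a>0$ we have $2b - 2a + 2a\log(2a) < 2b + 2a\log(2a) \le t$ by hypothesis, so the strict inequality holds and the chain above yields $a\log(t)+b < t$, as required. I do not anticipate any real obstacle here: the whole argument is a one-parameter linearization of $\log$, and the slack term $-2a$ coming from the ``$-1$'' in $\log(y)\le y-1$ is precisely what upgrades the non-strict hypothesis to the desired strict conclusion. The one point deserving care is to confirm that the chosen $\eta$ is admissible (it is, being positive), and that the final comparison uses $a>0$ to discard the $-2a$ term in the correct direction.
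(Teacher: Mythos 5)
Your proof is correct and is essentially the same argument as the paper's: both split $t$ as $\tfrac{t}{2}+\tfrac{t}{2}$ and dominate $a\log(t)$ by the linear function $\tfrac{t}{2}+a\log(2a)$ (the paper via $\log(y)<y$ applied to $y=\tfrac{t}{2a}$, you via the equivalent $\log(y)\le y-1$). The only cosmetic difference is where the strictness comes from — the paper uses the strict inequality $\log(y)<y$, while you harvest the $-1$ slack — and both are valid.
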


\begin{proof}
Let $t \ge 2a\log(2a)+2b$. Then
\begin{align*}
    t
    \ge a\frac{t}{2a} + \frac{t}{2}
    > a\log\left(\frac{t}{2a}\right) + a\log(2a) + b \ge a\log(t) + b.
\end{align*}
\end{proof}

\end{document}